\tikzstyle{res}=[circle,thick,minimum size=4mm,draw=black,fill=red,inner sep=1pt]
\tikzstyle{non-res}=[circle,thick,minimum size=4mm,draw=black,inner sep=1pt]
\tikzstyle{light-res}=[circle,thick,minimum size=4mm,draw=black,fill=red!40,inner sep=1pt]
\tikzstyle{blue}=[circle,thick,minimum size=4mm,draw=black,fill=blue!20,inner sep=1pt]
\newtheorem{theorem}{Theorem}[section]
\newcommand{\optagst}{\textnormal{opt}_{\textnormal{ag,strong}}}
\newcommand{\optagwe}{\textnormal{opt}_{\textnormal{ag,weak}}}
\newcommand{\optbs}{\textnormal{opt}_{\textnormal{bs}}}
\newcommand{\optambr}{\textnormal{opt}_{\textnormal{amb,r}}}
\newcommand{\optstd}{\textnormal{opt}_{\textnormal{std}}}
\theoremstyle{remark}
\theoremstyle{theorem}
\newtheorem{problem}[theorem]{Problem}
\newtheorem{cor}[theorem]{Corollary}
\newtheorem{definition}[theorem]{Definition}
\newtheorem{lem}[theorem]{Lemma}
\newtheorem{thm}[theorem]{Theorem}
\newcommand{\optastd}{\textnormal{opt}_{\textnormal{cap,std}}}
\newcommand{\optabs}{\textnormal{opt}_{\textnormal{cap,bs}}}
\newcommand{\optaambr}{\textnormal{opt}_{\textnormal{cap,amb,r}}}
\newcommand{\cartr}{\textnormal{CART}_{\textnormal{r}}}
\newcommand{\relu}{\textnormal{ReLU}}
\newcommand{\A}{\textnormal{U}}
\newcommand{\Le}{\textnormal{L}}
\newcommand{\We}{\textnormal{W}}
\def\finf{\mathop{{\rm I}\kern -.27 em {\rm F}}\nolimits}
\newcommand{\Comments}{1}
\newcommand{\mynote}[2]{\ifnum\Comments=1\textcolor{#1}{#2}\fi}
\newcommand{\mytodo}[2]{\ifnum\Comments=1%
  \todo[linecolor=#1!80!black,backgroundcolor=#1,bordercolor=#1!80!black]{#2}\fi}
\begin{document}

\title{Mistake-bounded online learning with operation caps}
\author{Jesse Geneson, Meien Li, and Linus Tang}

\maketitle

\begin{abstract}
    We investigate the mistake-bound model of online learning with caps on the number of arithmetic operations per round. We prove general bounds on the minimum number of arithmetic operations per round that are necessary to learn an arbitrary family of functions with finitely many mistakes. We solve a problem on agnostic mistake-bounded online learning with bandit feedback from (Filmus et al, 2024) and (Geneson \& Tang, 2024). We also extend this result to the setting of operation caps.
\end{abstract}

\section{Introduction}

With the recent surge in development and use of artificial intelligence (AI), governments around the world have passed regulations to address AI-related risks. For example, a 2023 executive order in the United States \cite{exec_order}, which was later rescinded \cite{trump}, required that cloud service providers must report customers who use more than $10^{26}$ floating point or integer operations in a single run. One of the results of this particular mandate would be for the federal government to have increased oversight for training runs of large language models.

The focus of this paper is to understand how many arithmetic operations per round are necessary to learn an arbitrary family of functions in an online setting. For a given family of functions $F$, the number of arithmetic operations per round that are necessary to learn $F$ provides a measure of complexity for $F$. 

\subsection{Online learning}

In online reinforcement learning, the learner makes a prediction for each input as it is received. After the learner makes their prediction, they receive feedback. For example, the feedback may be the correct answer for the prediction, or the feedback may be only whether the answer was correct. As the learner receives more feedback, they use the feedback to make their future answers more accurate. A standard example is price prediction.

For another example, consider a situation where a learner wants to make predictions about the weather in a certain location with no historical data. Given the last day of atmospheric conditions at the location as input, the learner would like to predict whether there will be any precipitation during the current day. Their answer will be yes or no. After they give their answer, the learner observes whether there is any precipitation during the current day, which tells them whether or not they got the right answer. 

When the learner starts the learning process, their answers are not very accurate, since there is no historical data on the atmospheric conditions and precipitation at the location. As the learner makes more guesses and receives more feedback, their predictions become more accurate. When the learner makes a mistake, it means that the input was difficult for the learner to predict based on the past inputs. However, feedback after mistakes is especially useful, since it means that similar inputs will be easier for the learner to predict in the future.

\subsection{The mistake-bound model}

In order to describe online learning more formally, we start by defining the mistake-bound model of online learning, which was introduced in \cite{angluin, littlestone}. This model can be considered a game between a learner and an adversary. There is a family $F$ of functions with domain $X$ and codomain $Y$. At the beginning of the game, the adversary picks a hidden function $f \in F$ that the learner tries to learn. In particular, in each round of the game, the adversary provides the learner with an input $x \in X$. The learner makes a guess for the output $f(x)$, and they receive some reinforcement from the adversary. 

In the \textit{standard} version of the model, also called \textit{strong reinforcement}, the adversary tells the learner the correct value of $f(x)$ after the learner's guess. In the \textit{bandit} version of the model, also called \textit{weak reinforcement}, the adversary only tells the learner whether or not they got the correct answer. The learner's goal is to make as few mistakes as possible, while the adversary's goal is to force as many mistakes as possible. Define $\optstd(F)$ to be the maximum (worst-case) number of mistakes that the learner makes during the standard learning process with family $F$, assuming that both the learner and adversary play optimally. Similarly, define $\optbs(F)$ to be the maximum (worst-case) number of mistakes that the learner makes during the bandit learning process with family $F$, assuming that both the learner and adversary play optimally. Clearly $\optstd(F) = \optbs(F)$ for any family $F$ of functions with codomain of size $2$. 

Mistake-bounded online learning is applicable to a number of sequential prediction problems including weather forecasting \cite{nms}, ad click prediction \cite{flz}, load forecasting \cite{wwlj}, and financial prediction \cite{scsnkg}. These problems are examples of online learning with standard feedback, i.e., strong reinforcement. Passcode prediction \cite{cfc, yyy} is an example of an online learning problem with bandit feedback, i.e., weak reinforcement. Given observations of a user, such as footage of their movements as they use a device, the problem is to predict their passcode. The reinforcement is weak, since you do not learn the correct passcode if you guess the wrong one. If you guess the correct passcode, then you get confirmation since you are able to log in.

Long \cite{long} proved that \[\optbs(F) \le (1+o(1)) (k \ln{k}) \optstd(F)\] for any family $F$ of functions with codomain of size $k$, where the $o(1)$ is with respect to $k$. This bound sharpened an earlier result of Auer and Long \cite{auer}. Long also investigated what is the maximum possible value of $\optbs(F)$ when $\optstd(F)$ is fixed and $F$ has codomain of size $k$. When $\optstd(F) = 1$, Long \cite{long} observed that the maximum possible value of $\optbs(F)$ is $k-1$. On the other hand, Long also showed that the maximum possible value of $\optbs(F)$ is $\Theta(k \ln{k})$ when $\optstd(F) \ge 3$, where the constants in the bound depend on $\optstd(F)$. The case when $\optstd(F) = 2$ was left as an open problem, which Geneson and Tang \cite{gt24} resolved by showing that the maximum possible value of $\optbs(F)$ is $\Theta(k \ln{k})$ when $\optstd(F) = 2$.

In addition to online learning of classifiers, there has also been some work \cite{gz23, kl_smooth, long_smooth} on online learning of functions with certain smoothness properties. 

\subsection{Agnostic learning}

Besides the standard and bandit online learning scenarios, there are a number of other scenarios that have been investigated for online learning of classifiers. For example, in agnostic learning, the adversary is allowed to choose a hidden function that is not an element of $F$, but there must exist some element of $F$ which differs from the hidden function on at most some bounded number of inputs \cite{auer, filmus, gt24}. In this paper, we consider two versions of agnostic learning, one with strong reinforcement and the other with weak reinforcement. In particular, define $\optagst(F, \eta)$ to be the worst-case number of mistakes that the learner makes during the strong agnostic learning process with family $F$, assuming that both learner and adversary play optimally, and there exists some $f \in F$ which differs from the hidden function on at most $\eta$ inputs. Moreover, define $\optagwe(F, \eta)$ to be the worst-case number of mistakes that the learner makes during the weak agnostic learning process with family $F$, assuming that both learner and adversary play optimally, and there exists some $f \in F$ which differs from the hidden function on at most $\eta$ inputs. Note that $\optagwe(F, \eta) = \optagst(F, \eta)$ for any family $F$ of functions with codomain of size $2$.

Agnostic online learning is often applicable to real world sequential prediction problems. For example, suppose that you want to train a neural network for ad click prediction. The architecture is chosen beforehand, so the number of layers, the number of neurons per layer, and the type of activation functions are all fixed. The only things that can be adjusted are the weights and the biases. If you consider the family of all neural networks with the given architecture, none of the neural networks in the family are a perfect fit for the ad click data, but some are a better fit than others, so the problem is to find a neural network in the family with the best fit. This is an agnostic online learning problem, since the goal is not to find an element of the family that perfectly fits all the data, but rather to find an element of the family that fits the data as well as possible. 

Auer and Long \cite{auer} proved that \[\optagst(F, \eta) \le 4.82(\optstd(F)+\eta)\] for families of functions $F$ with codomain of size $2$, and Cesa-Bianchi et al \cite{cesa} improved this bound to \[\optagst(F, \eta) \le 4.4035(\optstd(F)+\eta).\] More recently, Filmus et al \cite{filmus0} proved that \[\optagst(F, \eta) \le 2\eta+O(\optstd(F)+\sqrt{\optstd(F)\eta})\] for families of functions $F$ with codomain of size $2$. More generally, Geneson and Tang \cite{gt24} and Filmus et al \cite{filmus} both independently showed that \[\optagwe(F, \eta) \le k \ln{k} (1+o(1)) (\optstd(F)+\eta)\] for families of functions $F$ with codomain of size $k$, where the $o(1)$ is with respect to $k$. Both groups also showed that there exist families of functions $F$ with codomain of size $k$ such that \[\optagwe(F, \eta) = \Omega((k \ln{k})\optstd(F)+k \eta).\] Here, we improve the upper bound to show that the lower bound from \cite{filmus,gt24} is sharp up to a constant factor, i.e., we show that \[\optagwe(F,\eta) = O((k \ln{k})\optstd(F) + k \eta)\] for all families of functions $F$ with codomain of size $k$. We also show two different upper bounds for $\optagst(F, \eta)$, each of which is sharper in different cases. Specifically, we prove that \[\optagst(F,\eta) = O(\optstd(F) + \eta\ln{k})\] and \[\optagst(F,\eta)\leq (\optstd(F)+1)(\eta+1)-1.\] Note that the first upper bound is sharper when $\optstd(F)$ is much greater than $\ln{k}$, while the second upper bound is sharper when $\optstd(F)$ is much less than $\ln{k}$. 

\subsection{Online learning with delays}

Online learning has also been investigated when there is a delay between the learner's guess and reinforcement, so that the learner has to guess the output for multiple inputs before they receive reinforcement from the adversary \cite{auer, feng, gt24}. In the $r$-delayed ambiguous reinforcement model, the adversary gives the learner $r$ inputs before giving any reinforcement, and the learner must guess the corresponding output for each input before receiving the next input. The reinforcement is weak in this model, i.e., the adversary only says YES or NO at the end of the round for each output. Define $\optambr(F)$ to be the worst-case number of errors that the learner makes in the $r$-delayed ambiguous reinforcement model with the family $F$ if both the learner and the adversary play optimally. 

Passcode prediction with multifactor authentication is an example of online learning with delayed ambiguous reinforcement. Indeed, suppose that in order to log in to a secure account, a user must enter multiple passcodes from multiple devices. Given footage of the user's movements as they use the devices, the problem is to predict all the passcodes correctly. If you get all the passcodes correct, then you get confirmation, since you are able to log in. However, if any passcode is wrong, then you do not learn the correct passcodes. 

If $F$ is a family of functions with domain $X$ and codomain $Y$, let $\cartr(F)$ denote the family of functions $g: X^r \rightarrow Y^r$ such that there exists some $f \in F$ for which $g(x_1, \dots, x_r) = (f(x_1), \dots, f(x_r))$ for all $(x_1, \dots, x_r) \in X^r$. One motivation for investigating $\cartr(F)$ is that AI models usually train on batches of data points instead of one data point at a time, to improve training throughput.

Note that the $r$-delayed ambiguous reinforcement model with the family $F$ differs slightly from the bandit model with the family $\cartr(F)$. In the $r$-delayed ambiguous reinforcement model with the family $F$, the learner only receives the next input after giving their answer for the current input. In the bandit model with the family $\cartr(F)$, the learner receives all $r$ inputs at the same time. Since the learner receives more information in the bandit model with the family $\cartr(F)$, it is obvious that $\optbs(\cartr(F)) \le \optambr(F)$ for every family $F$. 

In the context of passcode prediction, the $r$-delayed ambiguous reinforcement model represents a scenario where the learner has to enter the passcodes in a specific order, and the learner only receives the footage of the user for each device after entering the guess for the previous passcode. The bandit model with the family $\cartr(F)$ represents a scenario where the learner gets all of the footage of the user for all the devices before they have to enter any guesses for the passcodes.

Despite the small difference between the two models, the worst-case error can be much greater for the $r$-delayed ambiguous reinforcement model. In particular, Feng et al \cite{feng} proved that the maximum possible value of $\frac{\optambr(F)}{\optbs(\cartr(F))}$ over all families of functions $F$ of size at least $2$ is $2^{r(1 \pm o(1))}$. For every $M > 2r$, Feng et al \cite{feng} also found a family of functions $F$ with codomain of size $k$ and $\optstd(F) = M$ such that \[\optbs(\cartr(F)) \ge (1-o(1))(k^r \ln{k})(\optstd(F)-2r).\] On the other hand, Geneson and Tang \cite{gt24} proved a general upper bound of \[\optbs(\cartr(F)) \le \optambr(F) \le (1+o(1))(k^r \ln{k})\optstd(F),\] matching the lower bound up to the leading term. 

\subsection{Learning with operation caps}

In the standard learning scenario, the learner is allowed to perform a finite but unlimited amount of computation in each round of the learning process. In particular, there is no bound on the number of arithmetic operations that the learner may use in each round. The 2023 executive order on the development and use of artificial intelligence \cite{exec_order} required cloud service providers to report customers who use more than $10^{26}$ floating point or integer operations in a single run. Moon et al \cite{mvgb} investigated practical methods for training large language models on cloud service providers while staying under the reporting threshold, as well as methods for cloud service providers to detect training runs on their platforms.

Given the threshold from the executive order, we introduce a new variant of the mistake-bound model which restricts the number of operations that the learner can use in each round. This represents a customer trying to learn a hidden function on a cloud service provider, while staying under the reporting threshold from the executive order. 

\begin{definition}
    For every natural number $a$, let $\optastd(F, a)$ be the worst-case number of mistakes that the learner makes in the standard learning scenario if the learner is only allowed to use at most $a$ binary arithmetic operations per round and both the learner and the adversary play optimally. We require for each round that the learner must use all arithmetic operations before answering, and they cannot use any arithmetic operations after answering. 
\end{definition}

Similarly, define $\optabs(F, a)$ to be the worst-case number of mistakes that the learner makes in the bandit learning scenario if the learner is only allowed to use at most $a$ binary arithmetic operations per round and both the learner and the adversary play optimally. It is immediately clear that $\optstd(F) \le \optastd(F, a)$ and $\optbs(F) \le \optabs(F, a)$ for all natural numbers $a$. 

We also define $\text{opt}_{\text{cap,ag,strong}}(F,\eta,a)$, $\text{opt}_{\text{cap,ag,weak}}(F,\eta,a)$, and $\optaambr(F,a)$ analogously for agnostic learning with strong reinforcement, agnostic learning with weak reinforcement, and delayed ambiguous reinforcement respectively.

\begin{definition}
    Let $\A_{F}$ be the minimum value such that for all $a \ge \A_{F}$, we have $\optstd(F) = \optastd(F, a)$. If there is no such $a$, then we say that $\A_{F} = \infty$.
\end{definition}

An analogous parameter can be defined for each of the other learning scenarios, but we focus mostly on the standard learning scenario in this paper. In addition to determining the minimum number of arithmetic operations per round that is necessary to obtain optimal worst-case performance for the learner in terms of number of mistakes, another natural problem is to determine the minimum number of arithmetic operations per round that is necessary to learn the family of functions with a finite number of mistakes. 

\begin{definition}
    Let $\Le_{F}$ be the minimum value of $a$ such that $\optastd(F, a) < \infty$. If there is no such $a$, then we say that $\Le_{F} = \infty$.
\end{definition} 

Clearly, for all families of functions $F$, we have $\Le_{F} \le \A_{F}$. As with $\A_{F}$, a parameter analogous to $\Le_{F}$ can be defined for each of the other learning scenarios. 

\subsection{Model of computation for online learning}

In defining models of online learning with caps on arithmetic operations, it is important to be precise about the model of computation for the learner, since the specifics of the model will determine the number of arithmetic operations used by the learning algorithm in each round of the learning process. In our learning scenario, we assume that the learner can choose a learning algorithm for each round, based on the previous inputs and outputs, such that the learning algorithm for each round can be described by a directed acyclic graph (DAG). The learner may change their DAG after the end of each round, before the next round starts. This model of computation is based on Model 2 in the technical report \cite{demmel}.

The DAG for a given round $t$ has input nodes, which contain each of the entries of the input vector for round $t$, as well as output nodes, which contain each of the entries of the learner's answer for round $t$. Between the input nodes and output nodes, there are computational nodes with arithmetic operations for addition, subtraction, multiplication, and division, as well as unary operations for the identity function, constant functions, $\lfloor x \rfloor$, $\lceil x \rceil$, $|x|$, $\sqrt{x}$, $e^x$, $\ln{x}$, and piecewise unary functions. For the piecewise unary functions, if the input is $x$, then we allow each piece to be obtained from one of the unary operations on $x$ (constant, identity, $\lfloor x \rfloor$, $\lceil x \rceil$, $|x|$, $\sqrt{x}$, $e^x$, $\ln{x}$) or a binary arithmetic operation applied to $x$ and some constant. Each node in the directed graph receives inputs, which may be constants, inputs to the algorithm, or the output of another node. Each node returns an output, which is the same as its input when the node is an input node. 

In the model of online learning with computational restrictions that we use in this paper, we cap the total number of \emph{binary} arithmetic operations in each round of the learning process. Specifically, unary arithmetic operations such as floor functions and ceiling functions do not count toward the cap. 

\subsection{Our results}

In Section~\ref{s:basic}, we prove some basic results about mistake-bounded online learning with caps on the number of arithmetic operations. Before discussing these results, we state a definition.

\begin{definition}
    For any function $f$ with domain $X$, let $\We(f)$ denote the infimum over all directed graphs $A$ for computing $f$ of the supremum over all $x \in X$ of the number of arithmetic operations that $A$ uses to compute $f(x)$, i.e., \[W(f) = \inf_A \sup_{x \in X} \#[f(x),A],\] where $\#[f(x),A]$ denotes the number of arithmetic operations that $A$ uses to compute $f(x)$. Let $\We(F)$ denote the supremum of $\We(f)$ over all $f \in F$.
\end{definition}

We show for every family of functions $F$ that $\Le_{F} \ge \We(F)$. In other words, for any algorithm $A$ that eventually learns $F$ with $\We(F) < \infty$ in the standard learning scenario, there exists a choice of hidden function and inputs for which $A$ must use at least $\We(F)$ arithmetic operations in some rounds. When $\We(F) = \infty$, the result says that for any algorithm $A$ that eventually learns $F$ in the standard learning scenario and for any positive integer $N$, there exist choices of hidden function and inputs for which $A$ must use at least $N$ arithmetic operations in some rounds. Given the inequality, it is natural to characterize the families $F$ for which $\Le_{F} = \We(F)$. While this problem seems difficult in general, we make some partial progress by exhibiting some families of functions $F$ for which $\Le_{F} > \We(F)$, as well as identifying many families $F$ for which $\Le_{F} = \We(F)$. 

In Section~\ref{s:bounds}, we determine a sufficient criterion for $\Le_{F} = \We(F)$, and we apply this criterion to several families of functions. For example, we prove that this criterion holds for the family of linear transformations from $\mathbb{R}^n$ to $\mathbb{R}$, as well as any finite family of functions. We also show that this criterion holds for several families of neural networks in Section~\ref{sec:1layer}. Online learning of neural networks was previously investigated in \cite{cmlh23, daniely25, rst15, whl24}.

In Section~\ref{s:variants} we compare the worst-case numbers of mistakes for different learning scenarios with caps on the number of arithmetic operations. In the case of agnostic learning without caps on operations, we solve a problem from \cite{filmus,gt24}, and then we use the same idea for the case of agnostic learning with caps. In Section~\ref{s:ordering}, we introduce a new technique for obtaining mistake bounds in various learning scenarios based on partial orderings of families of functions. We apply this technique to construct families of functions $F$ with $k$ outputs for which $\optstd(F) = O(n)$, $\optbs(F)=\Omega(nk\log k)$, and $\optbs(\cartr(F))=\Omega(k^n)$ for $r \ge n$. Finally, we discuss some future research directions in Section~\ref{s:conclusion}.

\section{Basic results about online learning with operation caps}\label{s:basic}

We start by finding a family of functions for which $\Le_{F} < \A_{F}$. Then, we modify the argument to find arbitrarily large gaps between $\Le_{F}$ and $\A_{F}$.

\begin{theorem}\label{thm:lvsa}
There exists a family $F$ of functions with $\Le_{F} < \A_{F}$. 
\end{theorem}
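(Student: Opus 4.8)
The plan is to exhibit a concrete family $F$ where two things happen simultaneously: (i) $F$ can be learned with finitely many mistakes using a bounded number of arithmetic operations per round, so $\Le_F$ is finite and small; but (ii) achieving the *optimal* mistake bound $\optstd(F)$ requires strictly more operations per round, so $\A_F > \Le_F$. Since we already know $\Le_F \le \A_F$ always holds, it suffices to engineer a gap.

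**Choosing the family**

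I would look for a family $F$ with a small value of $\optstd(F)$ — ideally $\optstd(F) = 1$ — so that the "optimal" regime is very demanding: the learner must identify the hidden function essentially in one shot, which forces a computation expensive enough to require, say, $2$ or more binary operations in some round. Meanwhile, a lazy/suboptimal learner who is willing to make a few extra mistakes can get away with a cheaper computation (e.g., $1$ operation, or even $0$) per round, still finishing with finitely many mistakes. A natural candidate is a small finite family of functions $f : X \to Y$ indexed by a parameter, where computing the "correct" prediction after full information needs a multiplication or a comparison-plus-arithmetic, but where a consistency-checking strategy that simply tries the candidates one at a time (discarding a candidate each time it mistakes) uses only a trivial amount of arithmetic per round. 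With $|F|$ small, the try-each-candidate strategy makes at most $|F| - 1$ mistakes, giving $\Le_F$ small; but if $\optstd(F)$ is strictly less than $|F|-1$, the optimal halving-type strategy must do genuine computation — combining inputs arithmetically to decide among several still-consistent candidates at once — which I would arrange to require more operations than the cap allowed for the lazy strategy.

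**Verifying the two bounds**

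The two things to check are: an upper bound $\optastd(F, a_0) < \infty$ for some small $a_0$ (establishing $\Le_F \le a_0$), which comes from describing an explicit cheap learner and bounding its mistakes; and a lower bound showing $\optastd(F, a_0) > \optstd(F)$, i.e., that with only $a_0$ operations per round the learner *cannot* match the unrestricted optimum. This second part is the adversary argument: fix any learner restricted to $a_0$ operations per round, and show the adversary can force more than $\optstd(F)$ mistakes. This is where I expect the main obstacle to lie — one must argue that no DAG with $a_0$ binary operations (but unlimited unary operations, including floors, $\exp$, $\ln$, and piecewise unary functions, which is a genuinely rich class) can implement the optimal strategy's decision function. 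So the family must be chosen with enough arithmetic structure that the optimal predictor provably cannot be expressed with so few binary operations even after allowing all those unary gadgets; a counting or dimension argument on what functions of the input vector are realizable with $\le a_0$ binary ops, or a direct case analysis on the shape of such small DAGs, seems like the right tool. Likely the simplest clean instance uses functions whose outputs genuinely depend multiplicatively or additively on several input coordinates at once, so that even one fewer binary operation than needed leaves the learner unable to distinguish two consistent hidden functions in the critical round. I would finish by recording the resulting values of $\Le_F$ and $\A_F$ and noting $\Le_F < \A_F$, then remark that the same construction, suitably scaled, will be reused for the "arbitrarily large gap" strengthening promised in the following paragraph.
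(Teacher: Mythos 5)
Your proposal correctly identifies the right overall strategy --- exhibit a family where a lazy learner achieves a finite (but suboptimal) mistake bound with few or no arithmetic operations, while matching $\optstd(F)$ forces at least one more operation in some round --- and this is exactly the shape of the paper's argument. However, the proposal never actually produces a family. Every load-bearing step is deferred: ``I would look for a family,'' ``a natural candidate is a small finite family,'' ``I would arrange to require more operations.'' Since the theorem is purely existential, the construction \emph{is} the proof, and without a concrete $F$ together with verified values of $\Le_F$ and $\A_F$ there is nothing to check. The paper's instance is the family of functions $\mathbb{R}\to\{0,1\}$ that output $1$ on exactly two distinct reals $x,y$ with $xy=1$ and $0$ elsewhere: guessing $0$ until told wrong twice uses no arithmetic and makes at most $2$ mistakes (so $\Le_F=0$), while exploiting the constraint $xy=1$ after the first mistake achieves the optimal bound of $1$ at the cost of one multiplication (so $\A_F=1$).

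You also correctly flag that the delicate point is the lower bound --- proving that no zero-operation (or $a_0$-operation) DAG, even with the rich set of allowed unary gadgets, can realize the optimal predictor --- and you propose a counting or case-analysis argument on small DAGs. That instinct is sound, and arguably more careful than the paper, which asserts rather than proves that achieving mistake bound $1$ ``requires an arithmetic operation.'' But flagging the obstacle is not the same as overcoming it; as written, neither the existence claim nor either of the two bounds is established, so the proposal has a genuine gap rather than an alternative route.
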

\begin{proof}
Let $F$ be the family of functions with domain $\mathbb R$ and codomain $\{0,1\}$ which return $1$ on two distinct inputs $x$ and $y$ which satisfy $xy=1$ and return $0$ on all other inputs. The learner can learn $F$ without arithmetic operations with a mistake bound of $2$ by ignoring the $xy=1$ condition, and guessing $0$ on every new input until the adversary has said the answer was wrong twice, at which point the hidden function has been determined. The learner can get the mistake bound to $1$ by using the $xy=1$ condition, but this requires an arithmetic operation. Thus, we have $\Le_{F}=0$ and $\A_{F}=1$, as desired.
\end{proof}

\begin{cor}
    For the family of functions $F$ in Theorem~\ref{thm:lvsa}, we have $\optastd(F, 0) = 2$ and $\optastd(F, a) = 1$ for all $a \ge 1$.
\end{cor}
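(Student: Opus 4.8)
The plan is to observe that this corollary is essentially a restatement of Theorem~\ref{thm:lvsa}, whose proof already pins down $\Le_F = 0$ and $\A_F = 1$ for this family $F$. The one extra ingredient needed is the exact value $\optstd(F) = 1$. The upper bound $\optstd(F) \le 1$ is exactly the strategy indicated in the proof of Theorem~\ref{thm:lvsa}: guess $0$ on every new input until the first mistake on some input $x$, then compute $1/x$ (legal in the uncapped setting, costing one division) to conclude that the hidden function is special precisely on $\{x, 1/x\}$, after which no further mistakes occur. The matching lower bound $\optstd(F) \ge 1$ is the one-line adversary argument that on the very first input the adversary can always declare the learner's guess wrong while remaining consistent with $F$: against a guess of $0$ on input $2$, take $f$ special on $\{2, 1/2\}$; against a guess of $1$ on input $2$, take $f$ special on $\{3, 1/3\}$.

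For $a \ge 1$, I would simply invoke the definition of $\A_F$ together with the fact, proved in Theorem~\ref{thm:lvsa}, that $\A_F = 1$: this means $\optastd(F, a) = \optstd(F) = 1$ for every $a \ge 1$. For $a = 0$, the learner strategy exhibited in the proof of Theorem~\ref{thm:lvsa} — guess $0$ on each new input until told wrong twice, at which point the hidden function is determined — uses no binary arithmetic operations and has worst-case mistake count $2$, so $\optastd(F, 0) \le 2$. On the other hand $\optastd(F, 0) \ge \optstd(F) = 1$ since a cap cannot help the learner, and $\optastd(F, 0) \ne \optstd(F) = 1$ because $\A_F = 1 \ne 0$ says that $a = 0$ is not a value at which the capped and uncapped optima coincide. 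An integer lying in $\{1, 2\}$ and different from $1$ must equal $2$, so $\optastd(F, 0) = 2$.

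The only step that requires any care is the lower bound $\optastd(F, 0) \ge 2$, i.e.\ showing that mistake bound $1$ is unattainable with no binary operations, and the cleanest route is the indirect one above, quoting the already-established value $\A_F = 1$. A direct adversarial argument is more delicate, since after the adversary reveals $f(x) = 1$ on some input $x$ the hidden function is mathematically determined, and one must argue that without a division the learner still cannot recognize the partner input $1/x$ in a later round; routing through $\A_F$ avoids having to make this precise.
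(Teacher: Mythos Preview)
Your proposal is correct and matches the paper's approach: the paper states this corollary without proof, treating it as immediate from the proof of Theorem~\ref{thm:lvsa}, and you correctly unpack that derivation, adding the one missing ingredient $\optstd(F)=1$ via a clean adversary argument. One small remark: your closing comment that routing through $\A_F=1$ ``avoids having to make this precise'' is slightly misleading, since the paper's own assertion that $\A_F=1$ already rests on the informal claim that achieving mistake bound $1$ ``requires an arithmetic operation''---so the delicate step is not avoided but simply inherited from Theorem~\ref{thm:lvsa}.
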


Next, we generalize Theorem~\ref{thm:lvsa} by finding families of functions with arbitrary gaps between $\Le_{F}$ and $\A_{F}$.

\begin{theorem} \label{thm_la_gap} For every integer $r > 0$, there exists a family $F_r$ of functions with $\Le_{\text{F}_r} = 0$ and $\A_{\text{F}_r} = r$. \end{theorem}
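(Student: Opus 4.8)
The plan is to generalize the construction in Theorem~\ref{thm:lvsa} from a quadratic constraint to one requiring $r$ nested arithmetic operations to check, while keeping the constraint irrelevant to finite learnability. Concretely, I would take $F_r$ to be the family of functions $\mathbb{R} \to \{0,1\}$ that return $1$ on exactly two distinct inputs $x, y$ satisfying some relation $g_r(x,y) = 0$, and $0$ elsewhere, where $g_r$ is chosen so that (a) any DAG that decides, given a candidate pair $(x,y)$, whether $g_r(x,y)=0$ must use at least $r$ binary arithmetic operations, yet (b) ignoring the relation entirely still lets the learner pin down the hidden function after two mistakes. For (b) the same argument as in Theorem~\ref{thm:lvsa} works verbatim: guess $0$ always; after two wrong answers the two special inputs are known, so $\optastd(F_r, 0) = 2 < \infty$ and hence $\Le_{F_r} = 0$.

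The substance is in choosing $g_r$ and proving the matching bounds $\A_{F_r} = r$. For the upper bound $\A_{F_r} \le r$, I would exhibit a learner using at most $r$ binary operations per round that achieves the optimal mistake bound of $1$: on the first mistake, the learner learns one special input $x_0$; it then evaluates $g_r(x_0, \cdot)$ — or solves $g_r(x_0, y) = 0$ for $y$ — using exactly $r$ operations, determines the second special input, and never errs again. So the natural candidate for $g_r$ is something like $g_r(x,y) = P_r(x) - y$ where $P_r$ is a polynomial (or nested expression) in one variable computable in exactly $r$ binary operations but not fewer; e.g. iterated squaring-and-adding, $P_1(x) = x^2$, $P_{i+1}(x) = P_i(x)^2 + 1$ or similar, arranged so that $r$ multiplications/additions are provably necessary. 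One must also check that the relation admits valid function instances (two distinct inputs with $y = P_r(x)$, $x \neq y$), which is easy for all but finitely many choices of $x$.

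The main obstacle is the lower bound $\A_{F_r} \ge r$: I must show that \emph{no} learning algorithm restricted to $r-1$ binary operations per round can guarantee mistake bound $1$ — equivalently, $\optastd(F_r, r-1) \ge 2$. The adversary strategy is the usual one: force a first mistake (present a new input, say the learner guessed $0$, declare it correct — so it is one of the two special inputs $x_0$), and then argue that with only $r-1$ operations the learner cannot reliably compute the unique $y_0$ with $g_r(x_0, y_0) = 0$, so the adversary can choose $x_0$ (from an infinite family of candidates consistent with all feedback so far) on which the learner's $(r-1)$-operation DAG for that round outputs the wrong prediction for the companion input. This requires an arithmetic-complexity lower bound: computing $P_r$ (or inverting $g_r(x_0,\cdot)$) genuinely needs $r$ binary operations on infinitely many infor arguments — here I would invoke a standard lower bound on the number of multiplications/additions to evaluate the chosen nested expression (in the spirit of Motzkin–Belaga or a degree/transcendence-degree argument), together with the observation that the learner's DAG in that round can be assumed to be a fixed finite object, so it fails on a cofinite or at least infinite set of candidate $x_0$, leaving the adversary room to act. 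Pinning down exactly which complexity measure ("binary arithmetic operations" including division, excluding unary floor/$\sqrt{\cdot}$/$e^x$/$\ln$) makes the $r-1$ lower bound go through — and choosing $g_r$ robust to that full operation set — is the delicate point, and I would likely choose $g_r$ based on a function whose hardness survives even allowing the unary operations (e.g. using that the unary menu cannot help compute a generic degree-$2^r$ polynomial in fewer than $r$ multiplications).
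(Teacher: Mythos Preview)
Your overall scaffolding matches the paper's: keep the two-mistake baseline of Theorem~\ref{thm:lvsa} to get $\Le_{F_r}=0$, and force the learner to perform $r$ operations in order to drop to mistake bound $1$. But the paper makes the lower bound essentially automatic by a device you miss: instead of keeping the domain one-dimensional and pushing all the difficulty into a univariate map $P_r$, it moves to an $r$-dimensional domain. Concretely, $F_r$ consists of $\{0,1\}$-valued functions on $(\mathbb R^+)^r$ that equal $1$ at exactly two points $(a_1,\dots,a_r)$ and $(b_1,\dots,b_r)$ with $a_ib_i=1$ for every $i$. After the first mistake reveals $(z_1,\dots,z_r)$, any DAG that answers correctly in later rounds must have its output depend on \emph{every} coordinate $q_i$ of the fresh input, and the $r$-operation lower bound then drops out in the spirit of Lemma~\ref{lem:depends}: unary nodes cannot merge coordinates, so $r$ separate input coordinates force roughly $r$ binary nodes regardless of which unary operations are available.

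Your route, by contrast, hinges on a genuine univariate arithmetic-complexity lower bound against the paper's rather permissive operation set, and this is where the proposal has a real gap. The unary menu includes $e^x$, $\ln x$, $\sqrt{x}$, floors, and arbitrary piecewise-unary maps, and these collapse the cost of many natural candidates for $P_r$. For instance, iterated squaring $P_r(x)=x^{2^r}$ is computable with a \emph{single} binary multiplication via $e^{2^r\ln x}$ (the $2^r$ being a literal constant), so that choice would yield $\A_{F_r}\le 1$, not $r$. To carry out your plan you would have to exhibit a specific $P_r$ and prove that no DAG over this full operation set computes the indicator $[q=P_r(x_0)]$ in fewer than $r$ binary operations for adversarially chosen $x_0$; Motzkin--Belaga-type degree arguments do not directly apply, since they do not account for transcendental unary nodes and free piecewise branching. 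The paper's multivariate trick sidesteps this difficulty entirely: the hardness comes from having to \emph{combine} $r$ fresh input coordinates, something no amount of unary machinery can do.
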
 

\begin{proof} Consider the family of functions $F_r$ which take $r$-tuples of positive real numbers as input and output either $0$ or $1$. Each function $f \in F_r$ is defined by $2r$ parameters $a_1, \dots, a_r, b_1, \dots, b_r$ with $a_1b_1 = \dots = a_r b_r = 1$, where $f(x_1, \dots, x_r) = 1$ if $(x_1,\dots,x_r) = (a_1,\dots,a_r)$ or $(x_1,\dots,x_r) = (b_1,\dots,b_r)$ and $f(x_1, \dots, x_r) = 0$ otherwise. 

The learner can learn $F_r$ without using arithmetic operations for a mistake bound of $2$ by guessing $0$ for all new $r$-tuples until the adversary has said the answer was wrong twice, at which point the function has been determined. On the other hand, the learner can get a mistake bound of $1$ by guessing $0$ until the adversary says they are wrong on the input $(z_1, \dots, z_r)$, and then checking for the subsequent input $(q_1, \dots, q_r)$ whether $z_i q_i =1$ for each $i = 1, \dots, r$. However, this requires $r$ arithmetic operations, one for each coordinate. Thus, we have $\Le_{\text{F}_r} = 0$ and $\A_{\text{F}_r} = r$. \end{proof}

\begin{cor}
    For the family of functions $F_r$ in Theorem~\ref{thm_la_gap} with $r > 0$, we have $\optastd(F_r, a) = 2$ for all $a$ with $0 \le a \le r-1$ and $\optastd(F, a) = 1$ for all $a \ge r$.
\end{cor}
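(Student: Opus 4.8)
The plan is to read the corollary off of Theorem~\ref{thm_la_gap} together with three elementary facts about the function $a \mapsto \optastd(F_r,a)$. First, this function is non-increasing in $a$, since a learner with a budget of $a$ operations may always elect to use fewer; in particular $\optastd(F_r,a) \ge \optstd(F_r)$ for all $a$. Second, the strategy ``always guess $0$'' uses no arithmetic operations and makes at most two mistakes against any $f \in F_r$, because each such $f$ takes the value $1$ on exactly two inputs, so $\optastd(F_r,a) \le \optastd(F_r,0) \le 2$ for every $a$. Third, $\optstd(F_r)=1$: for the upper bound, use the unbounded-computation strategy from the proof of Theorem~\ref{thm_la_gap}, namely guess $0$ until the first mistake, say on input $z$, after which $f$ is forced to be the function that is $1$ on exactly $\{z,(1/z_1,\dots,1/z_r)\}$, so no further mistake occurs; for the lower bound, against any learner and any first input $x \ne (1,\dots,1)$ there is a function in $F_r$ equal to $1$ at $x$ and one equal to $0$ at $x$, both consistent with the empty history, so the adversary can declare the first guess wrong.

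Then I would assemble these facts with Theorem~\ref{thm_la_gap}, which states $\A_{F_r}=r$. By the definition of $\A_{F_r}$, for every $a \ge r$ we get $\optastd(F_r,a)=\optstd(F_r)=1$, which is the second clause of the corollary. Since $r$ is the \emph{least} integer with this property, we must have $\optastd(F_r,r-1)\neq\optstd(F_r)=1$; combined with $1 \le \optastd(F_r,r-1) \le 2$ this forces $\optastd(F_r,r-1)=2$, and then monotonicity gives $\optastd(F_r,a)=2$ for all $0 \le a \le r-1$, which is the first clause.

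The only genuinely substantive ingredient is imported rather than proved here: it is the lower-bound half of Theorem~\ref{thm_la_gap}, that $r-1$ binary operations per round do not suffice to beat a mistake bound of $2$. Hence I expect no real obstacle in the corollary itself. If one wanted a fully self-contained argument, the point to spell out is why, after the first mistake on $z$, recognizing the single input $(1/z_1,\dots,1/z_r)$ among all possible inputs costs one binary operation per coordinate — essentially because the reciprocal is not among the permitted unary operations, so each constraint $z_i q_i = 1$ must be verified by a binary operation.
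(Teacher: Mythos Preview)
Your proposal is correct and is essentially the intended argument: the paper gives no separate proof for this corollary, treating it as immediate from Theorem~\ref{thm_la_gap}, and your write-up simply makes explicit the monotonicity of $a\mapsto\optastd(F_r,a)$, the fact that $\optstd(F_r)=1$, and the definition of $\A_{F_r}$ needed to read off both clauses. The only cosmetic point is that in your lower bound for $\optstd(F_r)\ge 1$ the adversary chooses the first input, so the restriction $x\neq(1,\dots,1)$ is harmless but unnecessary.
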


In the next result, we determine a general lower bound on $\Le_{F}$ for all families $F$. 

\begin{thm}\label{triv_lower}
    For any family of functions $F$, we have $\Le_{F} \ge \We(F)$.
\end{thm}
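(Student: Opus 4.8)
The plan is to prove that $\optastd(F,a) = \infty$ for every natural number $a$ with $a < \We(F)$. Since $\Le_F$ is by definition the least $a$ for which $\optastd(F,a)$ is finite, this yields $\Le_F \ge \We(F)$ at once; and if $\We(F) = \infty$ the same statement shows $\Le_F = \infty$.

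First I would fix such an $a$ and use $\We(F) = \sup_{f \in F}\We(f) > a$ to pick a witness $f \in F$ with $\We(f) > a$, which is possible whether $\We(F)$ is finite or infinite. The adversary commits to $f$ as the hidden function, and I would then argue round by round. In any round $t$, the learner's algorithm is a DAG $A_t$, chosen on the basis of the previous rounds, that uses at most $a$ binary arithmetic operations per round. The \emph{key observation} is that $A_t$ cannot agree with $f$ on all of $X$: if it did, then $A_t$ would be a directed graph computing $f$ with $\sup_{x \in X}\#[f(x),A_t] \le a$, contradicting the fact that $\We(f)$, the infimum of this quantity over all directed graphs computing $f$, is strictly greater than $a$. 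Hence there is an input $x_t \in X$ for which $A_t$ fails to output $f(x_t)$, and the adversary presents $x_t$; this forces a mistake in round $t$. Since the same reasoning applies in every round no matter how the learner revises its DAG, the learner makes infinitely many mistakes, so $\optastd(F,a) = \infty$.

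The step I expect to require the most care is reconciling ``the DAG $A_t$ is committed before the round's input is revealed'' with ``the cap bounds the operations used on the input actually received'': because the learner must respect the cap whatever input the adversary chooses, $A_t$ must in fact use at most $a$ binary arithmetic operations on \emph{every} input in $X$, which is exactly the form needed to compare $A_t$ against the $\sup_{x \in X}$ appearing in the definition of $\We(f)$. (The alternative reading, where the learner tailors $A_t$ to the revealed input, would make the cap vacuous and contradict Theorem~\ref{thm:lvsa}, so it is not the intended model.) Apart from this and the routine handling of the $\infty$ conventions, the argument is a direct unwinding of the definitions of $\We$, $\optastd$, and $\Le_F$, with no real computation involved.
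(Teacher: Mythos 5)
Your proof is correct and rests on the same key observation as the paper's: a round's DAG that respects a cap $a<\We(f)$ cannot agree with $f$ on all of $X$, so the adversary always has an input on which the learner errs. The only difference is in deployment --- you invoke this in every round to force infinitely many mistakes directly (showing $\optastd(F,a)=\infty$ for all $a<\We(F)$, which is equivalent to $\Le_{F}\ge\We(F)$), whereas the paper argues by contradiction, locating the round after the algorithm's last mistake and noting that the DAG there would have to compute $f$ exactly; your version is slightly more streamlined and handles the finite and infinite cases of $\We(F)$ uniformly.
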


\begin{proof}
We split into two cases, depending on whether or not $\We(F)$ is finite. 

For the first case, suppose that $\We(F) < \infty$. Let $f \in F$ be a function for which $\We(f) = \We(F)$. For contradiction, suppose that $\Le_{F} < \We(F)$. Let $m = \optastd(F,\Le(F))$. Then, there exists some algorithm $A$ for learning $F$ with at most $m$ mistakes which uses fewer than $\We(F)$ arithmetic operations per round. Let $r$ be the maximum possible number of mistakes that $A$ makes on any sequence of inputs if the hidden function is $f$. By definition, we have $r \le m$. 

There exists a sequence $S$ of inputs $x_1,x_2,\dots$ for which $A$ makes exactly $r$ mistakes on $S$ when the hidden function is $f$. Let $x_t$ be the input on which $A$ makes the $r^{\text{th}}$ mistake. Since $A$ is guaranteed to make at most $r$ mistakes on any sequence of inputs, it must be correct for the next input, regardless of the value of that input. Thus, in round $t+1$, $A$ can evaluate $f$ on any input using fewer than $\We(f)$ arithmetic operations. This is a contradiction of the definition of $\We(f)$, so we have $\Le_{F} \ge \We(F)$.

For the second case, suppose that $\We(F) = \infty$. For contradiction, suppose that $\Le_{F} < \We(F)$. As in the first case, we let $m = \optastd(F,\Le(F))$. Then, there exists some algorithm $A$ for learning $F$ with at most $m$ mistakes which uses at most $N < \infty$ arithmetic operations per round. Let $f \in F$ be a function for which $\We(f) > N$.

As in the first case, let $r$ be the maximum possible number of mistakes that $A$ makes on any sequence of inputs if the hidden function is $f$. There exists a sequence $S$ of inputs $x_1,x_2,\dots$ for which $A$ makes exactly $r$ mistakes on $S$ when the hidden function is $f$. Let $x_t$ be the input on which $A$ makes the $r^{\text{th}}$ mistake. Since $A$ is guaranteed to make at most $r$ mistakes on any sequence of inputs, it must be correct for the next input, regardless of the value of that input. Thus, in round $t+1$, $A$ can evaluate $f$ on any input using at most $N < \We(f)$ arithmetic operations. This is a contradiction, so we have $\Le_{F} \ge \We(F)$.
\end{proof}

Given the general lower bound in Theorem~\ref{triv_lower}, it is natural to investigate which families of functions $F$ have $\Le_{F} = \We(F)$. That is the focus of the next section. In the next result, we construct families of functions for which $\We(F) < \Le_{F}$.

\begin{thm}\label{thm_wf_linf}
There exists a class $F$ of functions with range $\{0,1\}$ for which $\We(F)$ is finite and $\Le_{F}=\infty$.
\end{thm}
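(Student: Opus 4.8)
The plan is to produce a family $F$ that is trivial to evaluate pointwise — so that $\We(F)<\infty$ — but that cannot be learned with a bounded number of mistakes even when computation is unrestricted. Since $\optastd(F,a)\ge\optstd(F)$ for every $a$, the second property immediately forces $\optastd(F,a)=\infty$ for all $a$, hence $\Le_F=\infty$. Concretely I would take $F=\{f_\theta:\theta\in\mathbb R\}$ on domain $\mathbb R$, where $f_\theta(x)=1$ if $x\le\theta$ and $f_\theta(x)=0$ otherwise; this is the family of threshold indicators, each of which has range $\{0,1\}$.

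For $\We(F)<\infty$, I claim in fact $\We(F)=0$: each $f_\theta$ is computed by a directed graph with a single piecewise-unary node, whose two pieces are the constant function $1$ on $(-\infty,\theta]$ and the constant function $0$ on $(\theta,\infty)$. Constant functions and piecewise-unary functions are explicitly permitted by the model, so this computation uses no binary arithmetic operations on any input; hence $\We(f_\theta)=0$ for every $\theta$ and $\We(F)=0$.

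For $\optstd(F)=\infty$, I would use the standard nested-interval adversary. It maintains an open interval $(l_t,r_t)$ — initially $(0,1)$ — with the invariant that every $\theta\in(l_t,r_t)$ is consistent with all of the adversary's declarations so far. In round $t+1$ it presents $x_{t+1}=(l_t+r_t)/2$; if the learner guesses $1$ it declares $f(x_{t+1})=0$ and sets $(l_{t+1},r_{t+1})=(l_t,x_{t+1})$, and if the learner guesses $0$ it declares $f(x_{t+1})=1$ and sets $(l_{t+1},r_{t+1})=(x_{t+1},r_t)$. In either case the learner errs and the invariant is preserved with a nonempty interval, so for every $N$ the adversary forces $N$ mistakes (any $\theta$ in the final interval realizes a consistent $f_\theta\in F$). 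Thus $\optstd(F)=\infty$, and since $\optastd(F,a)\ge\optstd(F)$ for every natural number $a$, no $a$ makes $\optastd(F,a)$ finite; that is, $\Le_F=\infty$. Together with $\We(F)=0$ this gives the theorem.

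No step here is genuinely difficult; the only points needing care are (i) verifying that threshold indicators cost nothing in the specified model of computation, which rests entirely on constants and piecewise-unary functions being free, and (ii) checking that the adversary's shrinking interval stays nonempty and consistent. If a discrete example is preferred, the family of all finite-support functions $f\colon\mathbb N\to\{0,1\}$ works the same way: each such $f$ is a piecewise-unary function (constant $0$ off its support and constant $1$ on it), so $\We(F)=0$, while presenting the inputs $1,2,3,\dots$ in order lets the adversary flip every guess, so $\optstd(F)=\infty$.
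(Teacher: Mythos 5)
Your proof is correct and follows essentially the same strategy as the paper's: exhibit a family whose members are each computable within a fixed operation budget but whose standard mistake bound is already infinite, so that $\optastd(F,a)\ge\optstd(F)=\infty$ for every $a$ and hence $\Le_{F}=\infty$. The paper's concrete family is $f_x(d)=\lfloor dx\rfloor-2\lfloor dx/2\rfloor$ with the adversary querying $2,4,8,\dots$ to read off the binary digits of $x$, whereas you use thresholds with a bisection adversary (and get $\We(F)=0$ rather than merely finite), but both arguments rest on the same two facts: cheap pointwise evaluation, and an adversary that forces a mistake every round while staying consistent with some member of $F$.
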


\begin{proof}
For positive real numbers $x$ and $d$, define
\[f_x(d)=\left\lfloor dx\right\rfloor-2\left\lfloor\frac{dx}2\right\rfloor=\begin{cases}0&\lfloor d x \rfloor\text{ is even}\\1&\lfloor d x \rfloor\text{ is odd}\end{cases}\]
Consider the function class $F=\{f_x:x\in\mathbb R^+\}$. Since $f_x(d)$ can be computed using a few arithmetic operations for any $x$ and $d$, we have that $W(F)$ is finite. In either learning scenario, the adversary can provide inputs $2,4,8,16,\dots$. The correct outputs are the binary digits after the decimal point of $x$, which can vary freely. Thus, the adversary can claim that the learner gets infinitely many outputs wrong.
\end{proof}

We also exhibit families of functions $F$ which are easy to learn in the standard mistake-bound model without computational restrictions, but impossible to learn with caps on arithmetic operations.

\begin{theorem}\label{thm_ainf}
There exists a family $F$ of functions with $\We(F) = \infty$, $\optstd(F) = 0$, and $\Le_{F} = \infty$.
\end{theorem}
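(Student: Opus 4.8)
The plan is to take $F$ to be a \emph{single} function whose graph is simply too complicated to be produced by any finite DAG over the allowed operation set. First I would observe that for any family $G$ containing two distinct functions $g_1,g_2$ there is an input $x$ with $g_1(x)\neq g_2(x)$; the adversary can present $x$, declare the learner's guess wrong, and stay consistent with whichever of $g_1,g_2$ disagrees with that guess, so $\optstd(G)\geq 1$. Hence $\optstd(F)=0$ forces $F=\{f\}$ for a single function $f$; conversely any singleton $\{f\}$ has $\optstd(\{f\})=0$, since the (unrestricted) learner may simply output $f(x)$. So it suffices to exhibit one function $f$ for which, with $F=\{f\}$, we have $\We(f)=\infty$ and $\Le_F=\infty$.

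For this I would take $X=\mathbb{R}$, $Y=\{0,1\}$ and argue by counting. A DAG in our model is a finite object, each node carrying one of finitely many operation types together with (for constant nodes) a real parameter; thus the DAGs fall into countably many combinatorial shapes, and for a fixed shape with $k$ constant nodes the computed function is determined by a point of $\mathbb{R}^k$. Therefore the set of functions $\mathbb{R}\to\{0,1\}$ computable by \emph{some} DAG has cardinality at most $\aleph_0\cdot\mathfrak{c}=\mathfrak{c}$, whereas $|\{0,1\}^{\mathbb{R}}|=2^{\mathfrak{c}}>\mathfrak{c}$. Fix any $f\colon\mathbb{R}\to\{0,1\}$ not computed by any DAG. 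Then the infimum defining $\We(f)$ is over an empty set of DAGs, so $\We(f)=\infty$, and hence $\We(F)=\infty$.

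It then remains to show $\Le_F=\infty$, i.e.\ $\optastd(F,a)=\infty$ for every $a\in\mathbb{N}$. Fix $a$ and any learner strategy obeying the cap. At the start of round $t$ the learner commits (based only on rounds $1,\dots,t-1$) to a DAG $A_t$ using at most $a$ binary operations; since $A_t$ is in particular a DAG and no DAG computes $f$, there is an input $x_t$ on which $A_t$ does not output $f(x_t)$, so the learner's guess there is wrong. The adversary presents $x_t$; this is consistent with the unique hidden function $f$. Repeating for every $t$ forces infinitely many mistakes, so $\optastd(F,a)=\infty$ and $\Le_F=\infty$.

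The one delicate point is the counting step, specifically that arbitrary \emph{real} constants do not enlarge the class of DAG-computable functions on $\mathbb{R}$ beyond cardinality $\mathfrak{c}$; this is exactly why the domain must be uncountable, and indeed taken of size continuum, since over a countable domain a Chaitin-style trick of hiding the entire $\{0,1\}$-valued function inside one real constant (extracting the $n$th binary digit via $\exp$ and $\lfloor\cdot\rfloor$ using $O(1)$ operations) would make every such function DAG-computable. A secondary point worth stating explicitly is that the model has the learner fix each round's DAG before seeing that round's input, which is what lets the adversary pick a point of disagreement; if instead the learner could react to the input, a singleton family would be learnable with zero operations.
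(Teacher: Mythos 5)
Your proof is correct and reaches the same structural endpoint as the paper's --- take a singleton family $F=\{f\}$, note $\optstd(F)=0$ since an unrestricted learner just evaluates $f$, and then force a mistake in every round by exploiting the fact that each round's capped DAG is committed before the input arrives and must disagree with $f$ somewhere --- but you exhibit the hard function $f$ by a genuinely different mechanism. The paper takes $f=u$ to be a \emph{computable} function $\mathbb N\to\mathbb N$ requiring unboundedly many operations (the implicit justification being growth rate: any fixed DAG over the allowed operations computes a function bounded by a tower of exponentials of fixed height, so an Ackermann-type function defeats every DAG), whereas you take an uncountable domain and argue non-constructively that the DAG-computable functions $\mathbb R\to\{0,1\}$ number at most $\mathfrak c<2^{\mathfrak c}$. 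Your route is more elementary and makes ``no DAG computes $f$'' immediate, and your side remark --- that over a countable domain a $\{0,1\}$-valued target can always be hidden in the binary digits of one real constant and recovered with $O(1)$ operations, the very mechanism behind Theorem~\ref{thm_wf_linf} --- correctly identifies why the paper needs codomain $\mathbb N$ and you need domain $\mathbb R$. What the paper's route buys is a computable, explicitly describable witness, showing the phenomenon is about operation counts rather than uncomputability. The one soft spot in your counting step is the model's ``piecewise unary functions'': if a single piecewise node were permitted arbitrarily many pieces indexed by arbitrary subsets of $\mathbb R$, the DAG-computable class would already contain all of $\{0,1\}^{\mathbb R}$ with zero binary operations; under the intended reading (finitely many pieces with real-parameter thresholds) your cardinality bound stands, but it would be worth stating that assumption explicitly.
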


\begin{proof}
Let $u:\mathbb N\to\mathbb N$ be a computable function that takes an unbounded number of operations to compute as $n$ grows large. Define the function class $F=\{u\}$.
Since $F$ consists of just one computable function, its mistake bound in either scenario is $0$. However, since this function takes unboundedly many operations to compute as $n$ grows large, the learner can never achieve this mistake bound (or even any finite mistake bound, in fact) if given boundedly many operations per round. Thus, $\Le_{F}=\infty$.
\end{proof} 

\section{Families $F$ with $\Le_{F} = \We(F)$}\label{s:bounds}

In this section, we identify several families of functions that attain the lower bound in Theorem~\ref{triv_lower}. In particular, we prove a sufficient criterion for families of functions to attain the lower bound.

In the next result, we show that the lower bound in Theorem~\ref{triv_lower} is sharp when the domain of $F$ is finite. Note that even if its domain is finite, $F$ may still be infinite if it has infinite codomain.

\begin{thm}
    If $F$ has finite domain, then $\Le_{F} = \We(F) = 0$.
\end{thm}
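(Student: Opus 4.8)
The plan is to prove the two equalities separately: $\We(F)=0$ first, then $\Le_F=0$; together with the inequality $\Le_F \ge \We(F)$ from Theorem~\ref{triv_lower}, these force $\Le_F=\We(F)=0$.

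For the first claim, write the finite domain as $X=\{x_1,\dots,x_n\}$ and fix an arbitrary $f\in F$. I would exhibit a DAG computing $f$ that uses no binary arithmetic operations at all, by hard-coding the value of $f$ on each of the finitely many possible inputs: for each coordinate of the output, use a piecewise unary function that returns, as a constant, the corresponding coordinate of $f(x_i)$ on the piece selecting the input $x_i$, together with a catch-all piece of arbitrary constant value. Since $X$ is finite, only finitely many pieces are needed, and constants and piecewise unary functions are free against the operation cap, so $\We(f)=0$; taking the supremum over $f\in F$ gives $\We(F)=0$.

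For the second claim it suffices to check $\optastd(F,0)<\infty$. I would have the learner maintain a lookup-table DAG: it outputs a fixed default value on any input not yet seen, and whenever the standard feedback reveals the true value $f(x_t)$, the learner rebuilds its DAG (which it is allowed to do between rounds) using exactly the hard-coding construction above, so that the new DAG returns the now-known correct value on every previously seen input and the default elsewhere, and, by the previous paragraph, uses zero binary arithmetic operations. Each distinct element of $X$ causes at most one mistake, since once the learner has seen $f(x_i)$ it answers correctly on $x_i$ in every later round; hence the learner makes at most $|X|=n$ mistakes no matter which $f\in F$ is hidden and which inputs the adversary presents. So $\optastd(F,0)\le n<\infty$, giving $\Le_F=0$, and with Theorem~\ref{triv_lower} we conclude $\Le_F=\We(F)=0$.

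The step I expect to require the most care is the hard-coding claim: one must confirm that, precisely because the domain is finite, a constant-valued piecewise unary function reproduces any prescribed function on $X$ without any binary operation, so that both computing each $f\in F$ and maintaining the learner's lookup table cost nothing against the cap. This is exactly where finiteness of the domain enters, and it is what fails for the infinite-domain families constructed earlier in this section.
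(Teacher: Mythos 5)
Your proposal is correct and takes essentially the same approach as the paper: show $\We(F)=0$ by hard-coding the finitely many input--output pairs into a DAG with no binary operations, and show $\Le_F=0$ via the lookup-table learner that defaults on unseen inputs and echoes revealed values, making at most $|X|$ mistakes. The appeal to Theorem~\ref{triv_lower} at the end is harmless but unnecessary, since both quantities are shown to equal $0$ directly.
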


\begin{proof}
    If $F$ has finite domain $X$, then every function $f \in F$ can be computed without using any arithmetic operations. In particular, since $X$ is finite, we can write $X = \left\{x_1, \dots, x_k \right\}$. For each $i$, let $y_i$ denote the value of $f(x_i)$. We can compute $f$ using finitely many if/then statements of the form ''If $x = x_i$, then return $y_i$.'' This corresponds to using branches for the outputs in the DAG. Since this does not use any arithmetic operations, we have $W(F) = 0$.

    To see that $\Le_{F} = 0$, let $y$ be a fixed element of the codomain of $F$. For each input $x$ from the adversary that has never been asked in an earlier round, the learner answers $y$. For each input $x$ from the adversary that has already been asked in an earlier round, the learner answers with the correct value of $f(x)$, which was revealed at the end of the first round in which the adversary gave the input $x$. Since $F$ has finite domain, the learner will only give at most $|X| < \infty$ wrong answers during the learning process. Since this learning algorithm does not use any arithmetic operations, we have $\Le_{F} = 0$.
\end{proof}

For the remainder of this section, we find more families where the lower bound in Theorem~\ref{triv_lower} is sharp. In particular, we define a property of families of functions $F$ which we call simple online-learnability, we show that being simple online-learnable implies that $\Le_{F} = W(F)$, and we find several families of functions which are simple online-learnable. 

We say that $F$ is \textit{simple online-learnable} if there exists an algorithm $A$ for learning $F$ in the standard learning scenario (strong reinforcement) such that
\begin{enumerate}
    \item whenever $A$ is given a new input $x$ in some round of the learning process, it chooses some $f$ in $F$ without using any arithmetic operations and then computes $f(x)$,
    \item $A$ uses no additional arithmetic operations in each round after making its guess if its guess is correct,
     \item $A$ uses at most $s$ additional arithmetic operations in each round after making its guess if its guess is wrong, and
    \item there exists some number $t < \infty$ for which $A$ makes at most $t$ total mistakes in the standard learning scenario without computational restrictions.
\end{enumerate}

Next, we prove a lemma that allows us to conclude that $\Le_{F} = W(F)$ whenever $F$ is simple online-learnable.

\begin{lem}\label{lem:sol}
If $F$ is simple online-learnable, then $\Le_{F} = W(F)$.
\end{lem}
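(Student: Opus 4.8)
The plan is to show two inequalities: $\Le_F \ge W(F)$ and $\Le_F \le W(F)$. The first is immediate from Theorem~\ref{triv_lower}, so the real content is the upper bound $\Le_F \le W(F)$. To establish it, I would exhibit an algorithm $A'$ for learning $F$ with operation cap $a = W(F)$ that makes only finitely many mistakes, so that $\optastd(F, W(F)) < \infty$. Recall that $W(F) < \infty$ here: since $F$ is simple online-learnable, clause (1) guarantees the learner can pick $f \in F$ and compute $f(x)$ for any new input $x$, which in particular means every $f \in F$ has $W(f) < \infty$, and clause (4) bounds the number of mistakes so the bound $s$ in clause (3) is meaningful; I should double-check whether simple online-learnability literally forces $W(F)$ finite (it forces each $W(f)$ finite, and the algorithm computing the chosen $f$ in clause (1) under a uniform procedure suggests a uniform operation bound, hence $W(F) < \infty$ — this is a point to verify carefully).

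The construction of $A'$ proceeds by simulating the simple online-learnable algorithm $A$ from the definition, but deferring the ``post-guess'' arithmetic work so as never to exceed $W(F)$ operations in a single round. Concretely: in each round, $A'$ mimics clause (1), selecting the same $f \in F$ that $A$ would select (no arithmetic), and computing $f(x)$ using at most $W(f) \le W(F)$ operations; it then outputs this value as its guess. If the guess is correct, by clause (2) $A$ does no further arithmetic, so $A'$ has used at most $W(F)$ operations and is within the cap. If the guess is wrong, $A$ would normally perform up to $s$ additional operations to update its state; instead, $A'$ records the input/output pair and defers those $s$ operations, spreading them across the next several rounds. The key accounting point is that mistakes are rare — at most $t$ of them by clause (4) — so there are at most $ts$ deferred operations total over the entire run, and these can be amortized into rounds where $A'$'s guess happens to be correct (a correct guess round has $W(F) - W(f) \ge 0$ slack, but to be safe one can simply insert ``idle'' computation rounds or argue that after the last mistake the remaining rounds are all correct and unbounded in number, giving ample room to finish any pending update). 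One clean way to organize this: whenever a mistake occurs, carry out the $s$ update operations one or a few at a time over the following rounds, during which $A'$ answers using its most recent fully-updated hypothesis; since between any two mistakes $A$ (hence $A'$) is correct, and only finitely many mistakes occur, the total mistake count of $A'$ is at most $t$ plus a bounded correction term accounting for the delay in processing updates.

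The main obstacle I anticipate is the amortization/deferral bookkeeping: I must ensure that while $A'$ is in the middle of processing a deferred update, the hypothesis it uses to answer is still a legitimate element of $F$ that $A$ would have been willing to use, so that $A$'s mistake guarantee transfers. If $A$'s update after a mistake is what changes its hypothesis, then $A'$ answering with the \emph{pre-update} hypothesis during the delay could incur extra mistakes — but crucially only finitely many extra, since each mistake triggers a bounded delay of $O(s)$ rounds and there are at most $t$ mistakes, and during the delay $A'$ uses a fixed valid hypothesis. A careful argument shows the total number of mistakes of $A'$ is finite (something like $O(ts)$ in the worst case, or $O(t)$ with a more clever interleaving), which is all that is needed for $\optastd(F, W(F)) < \infty$, hence $\Le_F \le W(F)$. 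Combining with Theorem~\ref{triv_lower} gives $\Le_F = W(F)$.
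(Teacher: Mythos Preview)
Your proposal is correct and follows essentially the same approach as the paper: invoke Theorem~\ref{triv_lower} for the lower bound, then for the upper bound simulate the simple online-learnable algorithm $A$ while spreading the at-most-$s$ post-mistake operations across subsequent rounds so as never to exceed $W(F)$ operations per round. The paper's version is slightly cleaner than your amortization sketch---it organizes $A'$ into an explicit two-phase alternation where, during the $\lceil s/W(F)\rceil$ ``update'' rounds after a mistake, $A'$ simply outputs an arbitrary fixed $y\in Y$ and forgets those rounds entirely, yielding the concrete bound $t\bigl(1+\lceil s/W(F)\rceil\bigr)$ on total mistakes---but the idea is the same.
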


\begin{proof}
The lower bound $\Le_{F} \ge W(F)$ follows from Theorem~\ref{triv_lower}. For the upper bound, observe that we can use the algorithm $A$ for learning $F$ from the definition of simple online-learnable to define a new algorithm $A’$ which alternates between two phases. Specifically, the algorithm $A’$ starts in phase 1, and when $A’$ is in phase 1, it simply uses $A$ to choose some $f \in F$ without using any arithmetic operations and then it computes $f(x)$, using at most $W(F)$ arithmetic operations. If the adversary ever says the answer was wrong when $A’$ was in phase 1, then $A’$ enters phase 2, in which it performs any additional calculations that $A$ would perform based on the wrong answer, answers an arbitrary $y \in Y$ without using any arithmetic operations for any inputs that are presented in subsequent rounds during phase 2, and forgets all results from the subsequent rounds occurring during phase 2. Once the additional calculations are complete, $A’$ goes back to phase 1. Note that if the additional calculations require at most $s$ arithmetic operations, then they take at most $\lceil s/W(F)\rceil$ rounds. Since $A$ makes at most $t$ errors in the standard learning scenario without computational restrictions for some $t < \infty$, $A’$ makes at most $t\left(1+ \lceil s/W(F) \rceil\right)$ errors when learning $F$ in the standard learning scenario with at most $W(F)$ arithmetic operations per round.
\end{proof}

Now, we apply Lemma~\ref{lem:sol} to some families of functions. As an illustrative example, consider the family $F_L(\mathbb{R}^n,\mathbb{R})$, which is the set of linear transformations from $\mathbb{R}^n$ to $\mathbb{R}$. In other words, for each $\mathbf v\in \mathbb{R}^n$ let the function $f_\mathbf v:\mathbb R^n\to\mathbb R$ be defined by $f_\mathbf v(\mathbf x)=\mathbf v\cdot\mathbf x$. Then $F_L(\mathbb{R}^n,\mathbb{R})=\{f_\mathbf v:\mathbf v\in\mathbb R^n\}$. For all positive integers $n$, we have $\optstd(F_L(\mathbb{R}^n,\mathbb{R}))=n$ \cite{almw, blum, shvaytser}.

Before stating the next result, we introduce some definitions and terminology. We say that function $f$ \textit{depends} on input variable $x_i$ if for some fixed values $c_1,\dots,c_{i-1},c_{i+1},\dots,c_k$ of the input variables to $f$ other than $x_i$, there exist $a,b$ with $a \neq b$ such that \[f(c_1,\dots,c_{i-1},a,c_{i+1},\dots,c_k) \neq f(c_1,\dots,c_{i-1},b,c_{i+1},\dots,c_k).\]

For each node $v$ in the directed graph $D_f$ corresponding to $f$, let $S_v$ denote the subset of input variables on which the output of $v$ depends. For example, $S_v$ consists of a single element if $v$ is an input node of $D_f$, and $S_v$ is some subset of the set of all input variables if $v$ is the output node of $D_f$.

If node $v$ only receives one input from some node $u$, then $S_v \subseteq S_u$. If node $v$ receives two inputs from some nodes $u$ and $w$, then $S_v \subseteq S_u \cup S_w$. We use the next lemma to obtain lower bounds on $\We(F)$ for several families of functions $F$. 

\begin{lem}\label{lem:depends}
For each natural number $k$, computing a function that depends on $k+1$ input variables requires at least $k$ binary arithmetic operations.   
\end{lem}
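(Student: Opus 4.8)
The plan is to prove the statement directly: if $D$ is any directed acyclic graph computing a function $f$ that depends on $k+1$ of its input variables, then $D$ contains at least $k$ binary nodes. The tools are precisely the dependency sets $S_v$ together with the inclusions recorded above, namely $S_v \subseteq S_u$ when $v$ is a unary node with predecessor $u$ and $S_v \subseteq S_u \cup S_w$ when $v$ is a binary node with predecessors $u,w$, plus the facts that an input node $v$ has $|S_v| = 1$ while a constant node has $S_v = \emptyset$. I will also use that the in-degree of a node (the number of incoming edges from other nodes) is $0$ for input nodes, at most $1$ for unary nodes, and at most $2$ for binary nodes, so that every node of in-degree $2$ is a binary node.

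The first step is to trace one relevant path per variable. Let $o$ be the output node, so $|S_o| \ge k+1$; if $|S_o| \le 1$ there is nothing to prove, so assume $|S_o| \ge 2$. For each $x \in S_o$ I claim there is a directed path $P_x$ from the input node holding $x$ to $o$ with $x \in S_v$ for every node $v$ on $P_x$. To build $P_x$, walk backward from $o$: whenever the walk sits at a node $v$ with $x \in S_v$, the inclusions above guarantee that $x$ lies in the dependency set of some predecessor of $v$ (the unique predecessor if $v$ is unary, at least one of the two if $v$ is binary), so the walk may step to such a predecessor. The walk never visits a node with empty dependency set, and since $D$ is finite and acyclic it terminates at a node with no predecessors; this node is an input node, and as its dependency set is a singleton containing $x$ it equals $\{x\}$, so it is the input node holding $x$.

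Next I would run a global edge count on $T := \bigcup_{x \in S_o} P_x$, regarded as a subgraph of $D$. Every $P_x$ passes through $o$, so $T$ is connected and therefore $|E(T)| \ge |V(T)| - 1$. The sources of $T$ (vertices of in-degree $0$ in $T$) are exactly the input nodes holding the variables of $S_o$: any vertex of $T$ that is not the start of some $P_x$ receives an edge inside $T$, and these input nodes are pairwise distinct, so $T$ has at least $|S_o| \ge k+1$ sources. Letting $p,q,r$ denote the numbers of vertices of $T$ of in-degree $0$, $1$, and $2$ respectively — there are no higher in-degrees — we have $|V(T)| = p+q+r$ and $|E(T)| = q+2r$, so $q+2r \ge p+q+r-1$, i.e. $r \ge p-1 \ge k$. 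Since each of these $r$ vertices of in-degree $2$ is a binary node, $D$ contains at least $k$ binary operations.

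I expect the main point requiring care to be honoring the DAG structure rather than a tree structure. The naive induction — delete the binary gate at $o$, recurse on the two sub-DAGs feeding it, and add the two counts — fails because those sub-DAGs can share nodes, so their binary-operation counts do not simply add. The argument above avoids recursion entirely, pushing all the combinatorics into the one elementary inequality $|E(T)| \ge |V(T)| - 1$ for a connected graph. What remains is routine: checking that the backward walk cannot get stuck (it only visits nodes with nonempty dependency set, hence never a constant node) and that distinct variables give distinct input nodes; the case $k = 0$ is vacuous.
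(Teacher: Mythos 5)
Your proof is correct, but it takes a genuinely different route from the paper's. The paper contracts the computation onto the variable set: it builds an auxiliary undirected graph $G$ on the input variables, processes the binary gates in the order they are performed, and adds at most one edge per gate joining two not-yet-connected variables of that gate's dependency set; since $S_v\subseteq S_u\cup S_w$ and the operands were processed earlier, each gate's dependency set spans at most two existing components, so the $k+1$ variables of $S_q$ ending up in one component forces at least $k$ edges, hence $k$ binary gates. You instead work inside the DAG itself: you extract one dependency-witnessing path per variable, take their union $T$, observe that $T$ is connected with at least $k+1$ sources and all in-degrees at most $2$, and let the Euler-type count $|E(T)|\ge|V(T)|-1$ force at least $k$ vertices of in-degree $2$, each of which must be a binary gate. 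The paper's argument is essentially a union-find bookkeeping that leans on the topological processing order to keep the ``one merge per gate'' invariant; yours replaces that invariant with an explicit degree count on a subcircuit, which is arguably more self-contained and makes it transparent why unary gates and constant operands contribute nothing. The only points needing (routine) care in your version are the ones you flag, plus the convention for a binary gate whose two operands coincide or include a constant: such a gate may have in-degree $1$ in $T$ and so is simply not counted, which only weakens nothing since you are proving a lower bound, and the inequality $|E(T)|\ge|V(T)|-1$ holds for connected multigraphs in any case.
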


\begin{proof}
Suppose that the function $f$ has input variables $x_1, \dots, x_n$ and $f$ depends on $k+1$ of these input variables. In order to prove the bound, we construct a simple undirected graph $G$ based on the directed graph $D_f$. In particular, we treat each binary arithmetic operation in $D_f$ as an edge in $G$, which has nodes $x_1, \dots, x_n$. Specifically, we start with $G$ as the empty graph on $x_1, \dots, x_n$, and then we go through the binary arithmetic operations in $D_f$ in the order in which they are performed. For each binary operation at node $v \in V(D_f)$ which depends on the subset $S_v$, we add an edge between a single pair of input variables in $S_v$ which currently have no path between them in $G$. If no such pair exists, then we do not add an edge. At the end of the process, any two inputs $x_i, x_j$ which both appear in the same subset $S_v$ for some $v \in V(D_f)$ must be in the same connected component of $G$. Since the output node $q$ of $D_f$ satisfies $|S_q| = k+1$, we must have some connected component of $G$ with at least $k+1$ nodes. Thus, $G$ must have at least $k$ edges, so $D_f$ must use at least $k$ binary arithmetic operations.
\end{proof}

\begin{cor}
    For positive integers $n$, we have $\We(F_L(\mathbb{R}^n,\mathbb{R})) = \Theta(n)$.
\end{cor}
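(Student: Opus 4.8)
The plan is to establish matching bounds, both linear in $n$, the upper bound by an explicit construction and the lower bound via Lemma~\ref{lem:depends}.

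For the upper bound, I would exhibit a single DAG computing $f_\mathbf v(\mathbf x)=\mathbf v\cdot\mathbf x=v_1x_1+\cdots+v_nx_n$ for an arbitrary $\mathbf v\in\mathbb R^n$: introduce each $v_i$ as a constant node, multiply it by the input $x_i$ (this is $n$ binary multiplications), and then sum the $n$ resulting products (this is $n-1$ binary additions). This DAG has no branching, so it uses exactly $2n-1$ binary arithmetic operations on every input, giving $\We(f_\mathbf v)\le 2n-1$ for every $\mathbf v$, and hence $\We(F_L(\mathbb R^n,\mathbb R))\le 2n-1$.

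For the lower bound, I would apply the construction to a single well-chosen member of the family, namely $f_{\mathbf 1}$ with $\mathbf 1=(1,\dots,1)$, so that $f_{\mathbf 1}(\mathbf x)=x_1+\cdots+x_n$. Fixing all coordinates but one and varying the remaining coordinate changes the output, so $f_{\mathbf 1}$ depends on all $n$ of its input variables. Lemma~\ref{lem:depends}, applied with $k=n-1$, then says that any DAG $A$ computing $f_{\mathbf 1}$ must use at least $n-1$ binary arithmetic operations, i.e. $\sup_{\mathbf x}\#[f_{\mathbf 1}(\mathbf x),A]\ge n-1$; taking the infimum over all such $A$ yields $\We(f_{\mathbf 1})\ge n-1$, and since $f_{\mathbf 1}\in F_L(\mathbb R^n,\mathbb R)$ we get $\We(F_L(\mathbb R^n,\mathbb R))\ge n-1$. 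Combining the two estimates gives $n-1\le\We(F_L(\mathbb R^n,\mathbb R))\le 2n-1$, which is $\Theta(n)$.

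I do not expect a real obstacle here. The only points to verify are that constant nodes carrying the $v_i$ are permitted in the model of computation (they are, as constant unary operations, which moreover do not count toward any cap) and that $f_{\mathbf 1}$ genuinely depends on every coordinate (immediate). If one wanted a tighter leading constant one could also charge for the $n$ multiplications in the general case when the $v_i$ are not all in $\{0,1\}$, but this refinement is not needed for the $\Theta(n)$ statement.
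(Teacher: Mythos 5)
Your proposal is correct and follows essentially the same route as the paper: the upper bound comes from the explicit $2n-1$-operation DAG for $f_{\mathbf v}$, and the lower bound comes from applying Lemma~\ref{lem:depends} to the all-ones function $f_{\mathbf 1}$, which depends on all $n$ coordinates. Your write-up just makes the constants and the dependence check explicit where the paper states them tersely.
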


\begin{proof}
    The adversary can choose the hidden function $f_v$ where $v$ is the all-ones vector. Then, $\We(F_L(\mathbb{R}^n,\mathbb{R})) = \Omega(n)$ follows from Lemma~\ref{lem:depends}. On the other hand, we have $\We(F_L(\mathbb{R}^n,\mathbb{R})) = O(n)$ since any function in $F_L(\mathbb{R}^n,\mathbb{R})$ can be computed using at most $2n-1$ arithmetic operations.
\end{proof}

We apply Lemma~\ref{lem:sol} to the family $F = F_L(\mathbb{R}^n,\mathbb{R})$.

\begin{thm}\label{thm:linwf}
 For the family $F = F_L(\mathbb{R}^n,\mathbb{R})$, we have $\Le_{F} = \We(F)$.
\end{thm}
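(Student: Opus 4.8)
The plan is to invoke Lemma~\ref{lem:sol}, so it suffices to show that $F = F_L(\mathbb{R}^n,\mathbb{R})$ is simple online-learnable. The witness will be the classical $n$-mistake algorithm for learning linear functionals, recast to fit the four conditions in the definition. The algorithm $A$ maintains a stored coefficient vector $\mathbf v \in \mathbb{R}^n$ (initially $\mathbf 0$) together with the list of inputs $\mathbf x_1, \mathbf x_2, \dots$ on which it has made mistakes and their revealed values $y_1, y_2, \dots$, subject to the invariant $\mathbf v \cdot \mathbf x_i = y_i$ for all recorded $i$. On a new input $\mathbf x$, the algorithm selects the hypothesis $f_{\mathbf v} \in F$ --- which costs no arithmetic operations since $\mathbf v$ is already in memory --- and guesses $f_{\mathbf v}(\mathbf x) = \mathbf v \cdot \mathbf x$; this is condition~1, and as in the proof of Lemma~\ref{lem:sol} the evaluation uses at most $\We(f_{\mathbf v}) \le \We(F)$ operations via an optimal circuit for the chosen $f_{\mathbf v}$. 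If the guess is correct, $A$ stops for the round, giving condition~2. If it is wrong with true value $y$, then $A$ records $(\mathbf x, y)$ and recomputes, by Gaussian elimination, a vector $\mathbf v'$ satisfying all recorded constraints; this uses at most $s$ arithmetic operations for some $s = s(n) < \infty$ (say $s = O(n^3)$) independent of the round, which is condition~3.

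For condition~4 I would use the standard argument that $A$ makes at most $n$ mistakes. The key point is that a mistake can occur only on an input $\mathbf x$ linearly independent from all previously recorded mistake-inputs: if $\mathbf x = \sum_i c_i \mathbf x_i$, then linearity of the hidden function $f$ gives $f(\mathbf x) = \sum_i c_i y_i$, while the invariant gives $\mathbf v \cdot \mathbf x = \sum_i c_i (\mathbf v \cdot \mathbf x_i) = \sum_i c_i y_i$, so the guess would have been correct. Hence the recorded mistake-inputs stay linearly independent in $\mathbb{R}^n$, so there are at most $n$ of them; in particular the system solved at each update has full row rank and therefore a solution, so the invariant is indeed maintainable. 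With $t = n$ and $s = O(n^3)$, this shows $F$ is simple online-learnable, and Lemma~\ref{lem:sol} yields $\Le_F = \We(F)$.

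The only delicate point I anticipate is verifying that the update step truly costs a bounded number of \emph{binary arithmetic} operations: solving an $m \times n$ linear system with $m \le n$ takes a number of operations depending only on $n$, which is fixed, so any standard linear-algebra routine suffices, but one should check that the associated bookkeeping (appending a constraint, extracting a solution, updating the stored $\mathbf v$) introduces no further uncapped growth, and that picking the hypothesis $f_{\mathbf v}$ and its evaluation circuit costs nothing beyond what condition~1 already charges. Everything else is a routine instantiation of the simple online-learnability framework together with the classical mistake bound $\optstd(F_L(\mathbb{R}^n,\mathbb{R})) = n$.
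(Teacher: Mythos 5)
Your proposal is correct and follows essentially the same route as the paper: both verify simple online-learnability via the classical $n$-mistake linear-algebra learner (default zero hypothesis, row reduction/Gaussian elimination after each mistake, and the observation that mistakes can only occur on inputs linearly independent of previously recorded ones), then invoke Lemma~\ref{lem:sol}. Your explicit statement of the invariant $\mathbf v \cdot \mathbf x_i = y_i$ and the linearity argument for why span-membership forces a correct guess is a slightly more detailed rendering of the same reasoning the paper gives.
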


\begin{proof}
By Lemma~\ref{lem:sol}, it suffices to prove that $F$ is simple online-learnable, so we provide a learning algorithm which satisfies the properties of simple online-learnability. In each round, the learner keeps track of a linearly independent set $S$ of inputs. In the first round, when the adversary provides the input $x_1 \in \mathbb{R}^n$, the learner guesses $0$ for the output, regardless of the value of $x_1$, i.e., the learner's initial default assumption is that the hidden function is the zero function. In any round $j$ where the adversary says that the learner's guess was wrong, the learner receives the true output $f(x_j)$ from the adversary and the learner adds $x_j$ to $S$. After the learner adds $x_j$, suppose that the resulting set is $S = \{x_{i_1}, \dots, x_{i_k}\}$. The learner uses row reduction on the $k \times (n+1)$ matrix obtained from putting the concatenation of $x_{i_t}$ and $f(x_{i_t})$ in each row $t = 1, \dots, k$ to find a function $g \in F_L(\mathbb{R}^n,\mathbb{R})$ that is consistent with all previous inputs and outputs. 

Once the learner has found such a function $g$, they use $g$ for all future inputs until the adversary says they are wrong. If the current input $x’$ is in the span of the inputs used to find the function, then the learner’s guess will be the correct output and the learner makes no additional computations in the round after their guess. Otherwise, if the guess is wrong, then the current input must be linearly independent of the inputs in $S$, so the learner performs a new row reduction and obtains a new function $g’ \in F_L(\mathbb{R}^n,\mathbb{R})$ which is consistent with all previous inputs and outputs. Note that we must have $|S| \le n+1$ throughout the learning process, so there exists a positive integer $s = s(n)$ such that the learner uses at most $s$ additional arithmetic operations in each round after making its guess if the guess is wrong. They continue this strategy for the entire learning process. Since the learner can only get a guess wrong when the current input is not in the span of the current elements of $S$, they make at most $n$ total mistakes in the standard learning scenario without computational restrictions. Thus, $F$ is simple online-learnable, so we have $\Le_{F} = \We(F)$.
\end{proof}

\begin{cor}
 For the family $F = F_L(\mathbb{R}^n,\mathbb{R})$, we have $\Le_{F} = \Theta(n)$.
\end{cor}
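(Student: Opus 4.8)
\section*{Proof proposal}

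The plan is to simply chain together the two preceding results. Theorem~\ref{thm:linwf} establishes that for $F = F_L(\mathbb{R}^n,\mathbb{R})$ we have the exact identity $\Le_{F} = \We(F)$, so the problem reduces to determining $\We(F_L(\mathbb{R}^n,\mathbb{R}))$ up to constants. But that was done in the corollary immediately following Lemma~\ref{lem:depends}, which gives $\We(F_L(\mathbb{R}^n,\mathbb{R})) = \Theta(n)$: the lower bound $\Omega(n)$ comes from choosing the hidden function $f_{\mathbf v}$ with $\mathbf v$ the all-ones vector (so that $f_{\mathbf v}$ depends on all $n$ inputs) and invoking Lemma~\ref{lem:depends}, while the upper bound $O(n)$ follows since any $f_{\mathbf v}$ is computable with at most $2n-1$ arithmetic operations (the $n$ products $v_i x_i$ plus the $n-1$ additions to sum them).

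Putting these together, $\Le_{F} = \We(F) = \Theta(n)$, which is the claim. Concretely, the write-up is one line: ``By Theorem~\ref{thm:linwf}, $\Le_{F} = \We(F)$, and by the corollary to Lemma~\ref{lem:depends}, $\We(F_L(\mathbb{R}^n,\mathbb{R})) = \Theta(n)$; combining these gives $\Le_{F} = \Theta(n)$.''

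There is essentially no obstacle here, since both ingredients are already proved in the excerpt; the only thing to be careful about is making sure the asymptotic notation is interpreted with respect to $n$ (the same variable governing the $o(1)$/$\Theta$ conventions used throughout), and that the constants hidden in $\Theta(n)$ for $\Le_{F}$ are allowed to differ from those for $\We(F)$ only in the sense that the identity $\Le_{F} = \We(F)$ is exact, so in fact the same constants transfer verbatim.
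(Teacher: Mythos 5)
Your proposal is correct and matches the paper's (implicit) reasoning exactly: the corollary follows immediately by combining Theorem~\ref{thm:linwf}, which gives $\Le_{F} = \We(F)$ for $F = F_L(\mathbb{R}^n,\mathbb{R})$, with the earlier corollary of Lemma~\ref{lem:depends} establishing $\We(F_L(\mathbb{R}^n,\mathbb{R})) = \Theta(n)$. Nothing further is needed.
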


Next, we obtain some upper and lower bounds on $\A_{F_{L}(\mathbb{R}^n,\mathbb{R})}$.

\begin{thm}
 For positive integers $n$ and $F = \text{F}_{\Le}(\mathbb{R}^n,\mathbb{R})$, we have $\A_{F} = O(n^3)$ and  $\A_{F} = \Omega(n)$.
\end{thm}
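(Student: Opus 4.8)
The plan is to prove the two bounds separately; the lower bound is essentially already in hand, and the real work is the upper bound. For $\A_F = \Omega(n)$ I would just combine the corollary proved above, $\Le_F = \Theta(n)$, with the elementary inequality $\Le_F \le \A_F$ noted earlier in the paper: if $a \ge \A_F$ then $\optastd(F,a) = \optstd(F) = n < \infty$, so $a \ge \Le_F$, whence $\A_F \ge \Le_F = \Omega(n)$.

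For $\A_F = O(n^3)$, since $\optastd(F,a) \ge \optstd(F) = n$ for every $a$ and $\optastd(F,a)$ is non-increasing in $a$, it suffices to exhibit a single value $a = O(n^3)$ and a learning algorithm that uses at most $a$ binary arithmetic operations per round and makes at most $n$ mistakes on $F_L(\mathbb R^n,\mathbb R)$; this yields $\optastd(F,a') = n$ for all $a' \ge a$, hence $\A_F \le a = O(n^3)$. I would use essentially the algorithm from the proof of Theorem~\ref{thm:linwf}, with the bookkeeping rescheduled to respect the ``all arithmetic before answering'' rule. The learner maintains the list $S$ of past inputs on which it has erred, together with their true outputs (revealed by the adversary). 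Any update triggered by a mistake in round $j$ is carried out at the start of round $j+1$: before answering on $x_{j+1}$, the learner appends $x_j$ with value $f(x_j)$ to $S$ and runs Gaussian elimination with back-substitution on the $|S|\times(n+1)$ augmented matrix whose rows are the pairs $(x,\,f(x))$ for $x \in S$, obtaining some $v \in \mathbb R^n$ with $v\cdot x = f(x)$ for all $x \in S$; the system is consistent because the hidden vector satisfies every constraint. The learner then predicts $v \cdot x_{j+1}$.

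For correctness and the mistake bound: if the current input $x$ lies in $\operatorname{span}(S)$, say $x = \sum_t c_t x_{i_t}$, then $v \cdot x = \sum_t c_t f(x_{i_t}) = f(x)$, since the hidden vector also satisfies the constraints defining $v$, so no mistake occurs; thus a mistake can only happen on an input outside $\operatorname{span}(S)$, which is then added to $S$, so each mistake strictly increases $\dim \operatorname{span}(S)$ and there are at most $n$ mistakes in all. For the operation count, a fresh Gaussian elimination with back-substitution on a system of size $O(n)\times O(n)$ uses $O(n^3)$ additions, subtractions, multiplications, and divisions, the dot product $v\cdot x$ uses $O(n)$ more, and the comparisons needed for pivoting are realized as piecewise unary branches in the DAG and so do not count toward the cap. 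Taking $a$ to be this $O(n^3)$ bound finishes the upper bound.

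The main obstacle is not deep but needs care: it is precisely the timing restriction that all arithmetic in a round must precede the guess, which forces the post-mistake recomputation to land one round later than in the uncapped algorithm of Theorem~\ref{thm:linwf}; this is harmless here because at most one such recomputation is ever pending, so no cascade of deferred work builds up. A secondary point is to confirm that the linear-algebra routine fits the paper's cost model (only binary arithmetic operations are counted, and divisions are permitted), and that consistency of the system being solved is automatic from the existence of the hidden function. I would not attempt to close the gap between $\Omega(n)$ and $O(n^3)$ here; one can cut the per-round cost to $O(n^2)$ via an incremental rank-one update instead of re-solving from scratch, but a tight determination of $\A_{F_L(\mathbb R^n,\mathbb R)}$ seems to require new ideas and is outside the scope of the stated result.
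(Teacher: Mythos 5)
Your proposal is correct and follows essentially the same approach as the paper: the lower bound is the trivial chain $\A_F \ge \Le_F = \Omega(n)$, and the upper bound comes from an $O(n^3)$-per-round row-reduction algorithm achieving the optimal $n$ mistakes (the paper computes span coefficients of the current input directly rather than solving for a hypothesis vector $v$, but this is an immaterial implementation difference). Your extra care about deferring the post-mistake update to the start of the next round is a reasonable way to respect the timing rule, though the paper's variant avoids the issue by doing all row reduction before answering in any case.
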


\begin{proof}
 First we prove that $\A_{F_L(\mathbb{R}^n,\mathbb{R})} = O(n^3)$. For each input $x_i$, the learner can use the strategy of checking whether $x_i$ is in the span of $x_1, \dots, x_{i-1}$. The learner maintains a subset $y_1, \dots, y_{k_{i-1}}$ of $x_1, \dots, x_{i-1}$ which spans $x_1, \dots, x_{i-1}$, so they only need to check whether $x_i$ is in the span of $y_1, \dots, y_{k_{i-1}}$. This can be done with row reduction using $O(n^3)$ arithmetic operations since $k_i \le n$ for all $i$. If $x_i$ is in the span of $y_1, \dots, y_{k_{i-1}}$, then row reduction yields a set of coefficients $c_1, \dots, c_{k_{i-1}}$ for which $x_i = \sum_{j = 1}^{k_{i-1}} c_j y_j$, so the learner obtains $f(x_i) = \sum_{j = 1}^{k_{i-1}} c_j f(y_j)$ using $O(n)$ operations. If $x_i$ is not in the span of $y_1, \dots, y_{k_{i-1}}$, then the learner guesses $f(x_i) = 0$ and sets $k_i = k_{i-1}+1$ and $y_{k_i} = x_i$. This completes the proof of the upper bound.

The lower bound is trivial, since we have $\A_{F_L(\mathbb{R}^n,\mathbb{R})} \ge \Le_{F_L(\mathbb{R}^n,\mathbb{R})} = \Omega(n)$.
\end{proof}

Next, we apply Lemma~\ref{lem:sol} to finite families of functions.

\begin{thm}
If $|F| < \infty$, then $\Le_{F} = \We(F)$.
\end{thm}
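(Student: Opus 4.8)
The plan is to apply Lemma~\ref{lem:sol}, so it suffices to show that any finite family $F$ is simple online-learnable. Write $F = \{f_1, \dots, f_m\}$, and fix a computational DAG $D_{f_i}$ for each $f_i$; since there are finitely many of them, there is a common bound $W(F)$ on the number of arithmetic operations any $D_{f_i}$ uses, and also a bound on the operations needed to test consistency, as I explain below.

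The learning algorithm maintains a "candidate set" $C \subseteq F$ of functions consistent with all inputs and outputs seen so far; initially $C = F$. In each round, given a new input $x$, the learner picks the candidate $f_j \in C$ of smallest index (a choice that uses no arithmetic operations) and guesses $f_j(x)$, which costs at most $W(F)$ arithmetic operations — this establishes property~(1). If the guess is correct, the learner does nothing further, so property~(2) holds. If the guess is wrong, then the learner has learned the true value $f(x)$, and it must update $C$ by discarding every $f_i \in C$ with $f_i(x) \ne f(x)$. To do this it evaluates each surviving candidate on $x$ and compares; this is at most $|F|$ evaluations, each costing at most $W(F)$ operations, plus at most $|F|$ comparisons (which, if one insists comparisons are arithmetic, still contribute a bounded amount), giving an absolute constant $s = s(F)$ bounding the post-guess work, which is property~(3). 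Finally, for property~(4): every time the learner makes a mistake, the candidate it guessed is removed from $C$, so $|C|$ strictly decreases on each mistake; since $|C|$ starts at $m = |F|$ and the true hidden function is never removed, the learner makes at most $|F| - 1$ mistakes in the scenario without computational restrictions. Hence $t = |F| - 1 < \infty$, and $F$ is simple online-learnable, so Lemma~\ref{lem:sol} gives $\Le_F = \We(F)$.

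I do not anticipate a serious obstacle here; the argument is essentially the standard halving/elimination strategy combined with the observation that a finite family has a uniform bound on the cost of evaluating and comparing its members. The one point that needs a little care is the bookkeeping in property~(3): one must make sure the consistency check after a mistake really is bounded by a constant independent of the round, which follows because the number of candidates to re-check never exceeds $|F|$ and each re-check reuses the fixed DAGs $D_{f_i}$ on the single new input $x$ — the learner need not recompute values on old inputs, since it can record which candidates were eliminated in earlier rounds. An alternative, if one prefers to avoid re-evaluating candidates, is to have the learner store, for each $f_i$, whether it has been eliminated, updating this bit using only the single new input in each mistake round; either way the post-mistake cost is $O(|F| \cdot W(F))$, a constant.
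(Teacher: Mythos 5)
Your proof is correct and follows the same skeleton as the paper's: both reduce the theorem to showing that a finite family is simple online-learnable and then invoke Lemma~\ref{lem:sol}. The difference is in the algorithm used to witness simple online-learnability. You run the standard consistency/elimination strategy, maintaining the set $C$ of candidates consistent with the history and pruning it after each mistake; this forces you to do the extra bookkeeping for property~(3), namely arguing that the post-mistake pruning costs at most $O(|F|\cdot \We(F))$ operations, a constant independent of the round --- which you handle correctly by only re-evaluating surviving candidates on the single new input. The paper instead uses the even simpler sequential-trial strategy: use $f_1$ until it errs, then switch to $f_2$, and so on, never revisiting a discarded function. Since switching to the next function in a fixed list requires no computation at all, property~(3) holds trivially with $s=0$, and the mistake bound $|F|-1$ follows because each non-hidden function can cause at most one mistake before being abandoned. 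Both algorithms give the same bound $t=|F|-1$ and both satisfy the definition, so the choice is a matter of taste; the paper's version minimizes the verification burden, while yours is the more familiar elimination argument and would also let the learner exploit information from correct rounds if one wanted a tighter analysis. One shared implicit assumption in both proofs is that $\We(F)<\infty$; when some $f\in F$ has $\We(f)=\infty$ the equality $\Le_F=\We(F)=\infty$ already follows from Theorem~\ref{triv_lower}, so nothing is lost.
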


\begin{proof}
By Lemma~\ref{lem:sol}, it suffices to prove that $F$ is simple online-learnable. Since $|F| < \infty$, we can list the functions in $F$ as $f_1, \dots, f_z$ for some positive integer $z$. For the input $x_1$ in round $1$, the learner guesses $f_1(x_1)$. Suppose that the learner used the function $f_j$ to guess $f_j(x_i)$ in round $i$, and the adversary said that their guess was wrong. For the input $x_{i+1}$ in round $i+1$, the learner guesses $f_{j+1}(x_{i+1})$. Whenever the adversary says that the learner's guess was correct, the learner continues to use the same function for subsequent rounds until the adversary says that they made a wrong guess. This learning algorithm is guaranteed to make at most $|F|-1$ mistakes, since each function in $F$ can cause at most one mistake if it is not the hidden function, and it causes no mistakes if it is the hidden function. Thus, $F$ is simple online-learnable.
\end{proof}

Again using Lemma~\ref{lem:sol}, we show that it is possible to obtain similar results for some other families of functions.

\begin{thm}
    If $F$ is the set of all polynomials on $k$ variables where the $i^{\text{th}}$ variable has degree at most $d_i$, then \[\optstd(F) = \prod_{i = 1}^k (d_i+1).\]
\end{thm}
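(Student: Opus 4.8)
Set $N=\prod_{i=1}^k(d_i+1)$. The plan is to identify $F$ with $\mathbb R^N$ by sending a polynomial $p=\sum_\alpha c_\alpha x^\alpha$ to its coefficient vector $(c_\alpha)$, where the index $\alpha=(\alpha_1,\dots,\alpha_k)$ runs over the $N$ multi-indices with $0\le\alpha_i\le d_i$; the monomials $x^\alpha$ form a basis of $F$, so $F$ is an $N$-dimensional vector space. Under this identification, evaluating a polynomial at a point $x\in\mathbb R^k$ is a linear functional $\mathrm{ev}_x$ on $\mathbb R^N$, and the statement reduces to the two standard linear-algebra facts behind the equality $\optstd(F_L(\mathbb R^N,\mathbb R))=N$. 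A minor variant of the row-reduction learner described below (reusing the current consistent polynomial between mistakes) also witnesses that $F$ is simple online-learnable, so $\Le_F=\We(F)$ holds as well, in the spirit of Lemma~\ref{lem:sol}.

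For the upper bound $\optstd(F)\le N$, the learner maintains the affine subspace $C\subseteq\mathbb R^N$ of coefficient vectors consistent with all values revealed so far, starting from $C=\mathbb R^N$, and in each round guesses $g(x)$ for any $g\in C$ (obtained by solving the consistent linear system that defines $C$). Whenever the learner errs on an input $x$, we have $g(x)\ne f(x)$ for the hidden polynomial $f$, so the new constraint $p(x)=f(x)$ fails for the element $g\in C$; hence $C$ shrinks to a proper affine subspace of itself and $\dim C$ decreases by $1$. Since $\dim C$ starts at $N$ and is always at least $0$ (as $f\in C$ throughout), the learner makes at most $N$ mistakes.

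For the lower bound $\optstd(F)\ge N$, the first step is to produce points $x_1,\dots,x_N\in\mathbb R^k$ whose evaluation functionals $\mathrm{ev}_{x_1},\dots,\mathrm{ev}_{x_N}$ are linearly independent. This holds because a nonzero polynomial of these bounded degrees does not induce the zero function on $\mathbb R^k$ (a routine induction on $k$), so no nonzero coefficient vector is annihilated by every $\mathrm{ev}_x$; hence the functionals $\mathrm{ev}_x$ span the $N$-dimensional dual space and some $N$ of them form a basis. Concretely one may take a product grid $\{(a_{1,j_1},\dots,a_{k,j_k}):0\le j_i\le d_i\}$ with the $a_{i,\cdot}$ distinct in each coordinate, whose interpolation matrix is a Kronecker product of Vandermonde matrices. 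The adversary then presents $x_1,\dots,x_N$ in order, maintaining the nonempty affine set $C$ of coefficient vectors still consistent with the revealed values. In round $j$, since $\mathrm{ev}_{x_j}$ is linearly independent from $\mathrm{ev}_{x_1},\dots,\mathrm{ev}_{x_{j-1}}$ it is nonconstant on $C$, hence takes every real value there; so whatever the learner guesses, the adversary can reveal a different value as $f(x_j)$, forcing a mistake while keeping $C$ nonempty and dropping $\dim C$ by exactly $1$. After $N$ rounds, $\dim C=0$, so $C$ is the unique polynomial interpolating the revealed values: the whole transcript is realized by this hidden $f\in F$, and the learner has made $N$ mistakes.

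The only point requiring genuine care is the existence of the $N$ linearly independent evaluation functionals, which is precisely the solvability of $N$-point interpolation for this tensor-product polynomial space; the argument that a nonzero polynomial is not the zero function (or, equivalently, the nonvanishing of the relevant Kronecker--Vandermonde determinant) settles it. Everything else is the familiar dimension count used for linear learning, and the only bookkeeping to double-check is that $\dim C$ drops by exactly $1$ in each round of the lower-bound construction and by at least $1$ per mistake in the upper bound.
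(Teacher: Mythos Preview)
Your proof is correct. The upper bound is essentially the paper's argument in different language: the paper maps each input $x\in\mathbb R^k$ to the monomial vector $(x^\alpha)_\alpha\in\mathbb R^N$ and then invokes the learner for $F_L(\mathbb R^N,\mathbb R)$, which is exactly your dimension count on the affine set $C$ of consistent coefficient vectors.

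The lower bounds take genuinely different routes. You exhibit $N$ points with linearly independent evaluation functionals (a product grid, via the Kronecker--Vandermonde determinant) and then run the standard affine-subspace adversary. The paper instead collapses the $k$-variable problem to a univariate one: in round $t$ the adversary sets $x_1=t$ and $x_i=t^{\prod_{j<i}(d_j+1)}$, so each monomial $\prod_i x_i^{e_i}$ becomes $t^{\sum_i e_i\prod_{j<i}(d_j+1)}$, and the mixed-radix exponents biject with $\{0,\dots,N-1\}$. Thus the hidden function restricted to this curve is an arbitrary univariate polynomial of degree at most $N-1$, and the adversary forces $N$ mistakes by the one-variable case. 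Your argument is more structural and makes transparent that \emph{any} $N$ points with independent evaluation functionals would do; the paper's argument avoids the dual-space/tensor-Vandermonde step entirely by a single explicit substitution. Both are short, and each is worth knowing.
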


\begin{proof}
    Without loss of generality, let the polynomials in $F$ have variables $x_1, \dots, x_k$. To see that \[\optstd(F) \le \prod_{i = 1}^k (d_i+1),\] note that we can convert any input $(x_1, \dots, x_k)$ to a vector of length $\prod_{i = 1}^k (d_i+1)$ whose coordinates are the values of $\prod_{i = 1}^k x_i^{e_i}$ for all $0 \le e_i \le d_i$ with $1 \le i \le k$. Since learning a polynomial from $F$ amounts to learning its coefficients, we can use the same algorithm that we used to learn \[F_L(\mathbb{R}^{\prod_{i = 1}^k (d_i+1)},\mathbb{R})\] in Theorem~\ref{thm:linwf}. Since this algorithm makes at most $\prod_{i = 1}^k (d_i+1)$ mistakes, we have \[\optstd(F) \le \prod_{i = 1}^k (d_i+1).\] To see that \[\optstd(F) \ge \prod_{i = 1}^k (d_i+1),\] note that the adversary can give the learner the inputs $x_1 = t$ and $x_i = t^{\prod_{j = 1}^{i-1} (d_j+1)}$ for each $i > 1$ in the $t^{th}$ round of the learning process. After each guess from the learner, the adversary says they were wrong and that the correct answer was $0$ or $1$, whichever differs from the learner's guess. Since these inputs reduce the hidden function to a polynomial in $t$ of degree at most $D = -1+\prod_{i = 1}^k (d_i+1)$, and each input defines a different point on the polynomial, the adversary can force at least $D+1 = \prod_{i = 1}^k (d_i+1)$ mistakes from the learner before the polynomial is determined. 
\end{proof}

\begin{thm}
    If $F$ is the set of all polynomials on $k$ variables where the $i^{\text{th}}$ variable has degree at most $d_i$, then $\Le_{F} = \We(F)$.
\end{thm}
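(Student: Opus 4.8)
The plan is to invoke Lemma~\ref{lem:sol}: it suffices to show that $F$ is simple online-learnable, since that gives $\Le_{F} = \We(F)$. First observe that $\We(F) < \infty$, because every polynomial in $F$ can be evaluated in a bounded number of arithmetic operations (e.g.\ by nested Horner evaluation over the $k$ variables), so the supremum of $\We(f)$ over $f \in F$ is finite.

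For the learning algorithm I would reuse the reduction from the previous theorem on $\optstd(F)$. Set $N = \prod_{i=1}^k (d_i+1)$ and identify each input $(x_1,\dots,x_k) \in \mathbb{R}^k$ with the monomial vector $\phi(x_1,\dots,x_k) \in \mathbb{R}^N$ whose coordinates are the values $\prod_{i=1}^k x_i^{e_i}$ over all tuples $(e_1,\dots,e_k)$ with $0 \le e_i \le d_i$. Learning a polynomial in $F$ is the same as learning its coefficient vector $\mathbf{v} \in \mathbb{R}^N$, where the hidden polynomial sends $(x_1,\dots,x_k)$ to $\mathbf{v}\cdot\phi(x_1,\dots,x_k)$. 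So the learner runs exactly the algorithm from the proof of Theorem~\ref{thm:linwf} for $F_L(\mathbb{R}^N,\mathbb{R})$, applied to the transformed inputs: it maintains a set $S$ of previously seen inputs whose images under $\phi$ are linearly independent, together with their true outputs; it maintains a polynomial $g \in F$ consistent with all observed data (initially the zero polynomial); on each input it guesses $g(x_1,\dots,x_k)$; and when told the guess was wrong, it adds the current input to $S$ and row-reduces the $|S|\times(N+1)$ matrix whose rows are the concatenations of $\phi$ of the inputs in $S$ with their outputs, thereby producing a new consistent $g \in F$.

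Next I would verify the four properties of simple online-learnability. (1) On a new input the learner picks $g \in F$ using no arithmetic (it just holds its current coefficient vector) and then computes $g$ on the input; the cost of this evaluation is at most $\We(F)$ by definition, as required. (2) If the guess is correct, the algorithm performs no update at all — it neither modifies $S$ nor recomputes $g$ — so it uses no additional arithmetic after the guess. (3) If the guess is wrong, the only extra work is one row reduction on a matrix of size at most $(N+1)\times(N+1)$ together with recovering the coefficients of the new $g$; since $|S| \le N+1$ throughout, this uses at most some constant $s = s(k,d_1,\dots,d_k)$ additional operations. (4) A mistake can occur only when $\phi$ of the current input is not in the span of $\phi$ of the inputs already in $S$ (otherwise the correct output is forced and $g$ produces it), so each mistake strictly increases the dimension of that span; since the dimension is bounded by $N$, the learner makes at most $N = \prod_{i=1}^k (d_i+1)$ mistakes in the unrestricted scenario. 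Hence $F$ is simple online-learnable, and Lemma~\ref{lem:sol} yields $\Le_{F} = \We(F)$.

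The only points needing care — rather than a genuine obstacle — are in properties (3) and (1): one must confirm that all arithmetic done after a wrong guess is bounded independently of the round number, which holds precisely because $|S|$ never exceeds $N+1$; and one must note that the consistent polynomial returned by row reduction actually lies in $F$, i.e.\ respects the per-variable degree bounds, which is automatic since its coefficient vector is indexed by exponent tuples with $0 \le e_i \le d_i$, so that evaluating the new $g$ on the subsequent input again costs at most $\We(F)$.
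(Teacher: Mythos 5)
Your proposal is correct and follows essentially the same route as the paper: both reduce to the linear case via the monomial feature map of length $\prod_{i=1}^k(d_i+1)$, reuse the row-reduction algorithm from Theorem~\ref{thm:linwf} to conclude that $F$ is simple online-learnable, and then apply Lemma~\ref{lem:sol}. Your write-up is in fact more detailed than the paper's, which states the reduction in two sentences and leaves the verification of the four properties implicit.
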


\begin{proof}
    We can view a polynomial on $k$ variables where the $i^{\text{th}}$ variable has degree at most $d_i$ as a vector of length $\prod_{i = 1}^k (d_i+1)$ where the coordinates of the vector are the corresponding coefficients of the polynomial. Note that we can use the same learning algorithm as in Theorem~\ref{thm:linwf} to learn $F$ by converting each input of values for the $k$ variables $x_1, \dots, x_k$ into a vector of values for $\prod_{i = 1}^k x_i^{e_i}$ for all $0 \le e_i \le d_i$ with $1 \le i \le k$. Thus, $F$ is simple online-learnable, so $\Le_{F,std} = \We(F)$ by Lemma~\ref{lem:sol}.
\end{proof}

\section{Learning neural networks in the mistake-bound model}\label{sec:1layer}

In this section, we discuss the online learning of neural networks in the mistake-bound model. We start with a number of results about $1$-layer neural networks, and then we consider neural networks with multiple layers.

We show that several families of $1$-layer neural networks with a variety of different activation functions (relu, leaky relu, sigmoid, and tanh) can be learned in the mistake-bound model with operation caps. However, with two or more layers, the learner cannot guarantee finite error in the mistake-bound model, even with only a few neurons.

\subsection{$1$-layer neural networks}

In this subsection, we focus on the online learning of $1$-layer neural networks. In particular, for a given activation function $a: \mathbb{R} \rightarrow \mathbb{R}$, let $F_{n,a}$ be the family of functions $g: \mathbb{R}^n \rightarrow \mathbb{R}$ of the form $g(x) = a(f(x)+b)$ for any $b \in \mathbb{R}$ and $f \in F_L(\mathbb{R}^n,\mathbb{R})$. 

\begin{lem}
    Suppose that the activation function $a: \mathbb{R} \rightarrow \mathbb{R}$ is invertible, and there exists a positive integer $r$ such that $a$ can be computed for any input using at most $r$ arithmetic operations. Then, for positive integers $n$, we have $\We(F_{n,a}) = \Theta(n)$.
\end{lem}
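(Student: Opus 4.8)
The plan is to prove the two bounds $\We(F_{n,a}) = O(n)$ and $\We(F_{n,a}) = \Omega(n)$ separately, in close analogy with the corollary already established for $F_L(\mathbb{R}^n,\mathbb{R})$.

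For the upper bound, I would note that every $g \in F_{n,a}$ has the form $g(\mathbf x) = a(\mathbf v \cdot \mathbf x + b)$ for some $\mathbf v \in \mathbb{R}^n$ and $b \in \mathbb{R}$. A directed graph can compute $\mathbf v \cdot \mathbf x$ using $2n - 1$ binary operations, add the constant $b$ with one more, and then apply $a$ using at most $r$ further operations. Hence $\We(g) \le 2n + r$ for every $g \in F_{n,a}$, and since $r$ is a fixed constant independent of $n$, we conclude $\We(F_{n,a}) \le 2n + r = O(n)$.

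For the lower bound, I would have the adversary pick the hidden function $g(\mathbf x) = a(x_1 + \cdots + x_n)$, i.e., take $\mathbf v$ to be the all-ones vector and $b = 0$, so that $g \in F_{n,a}$. The crucial point is that invertibility of $a$ (hence injectivity) forces $g$ to depend on each input variable $x_i$: fixing all coordinates other than $x_i$ to $0$ leaves $g = a(x_i)$, which is non-constant because $a$ is injective. So $g$ depends on all $n$ of its input variables, and Lemma~\ref{lem:depends} (applied with $k + 1 = n$) shows that any directed graph computing $g$ uses at least $n - 1$ binary arithmetic operations. Therefore $\We(F_{n,a}) \ge \We(g) \ge n - 1 = \Omega(n)$, which together with the upper bound gives $\We(F_{n,a}) = \Theta(n)$.

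The argument is essentially routine once Lemma~\ref{lem:depends} is in hand; the only step needing care is the appeal to invertibility of $a$ to certify that $g$ truly depends on every coordinate, since the lemma would be vacuous for a function that ignores some of its inputs (e.g. one built from a constant activation). A secondary bookkeeping point is that Lemma~\ref{lem:depends} counts binary operations only, which is harmless: binary operations are counted among all arithmetic operations for the lower bound, and the unary steps in evaluating $a$ (such as $e^x$ for the sigmoid) only aid the upper bound.
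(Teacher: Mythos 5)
Your proposal is correct and follows essentially the same route as the paper: the upper bound via the explicit $2n+r$-operation circuit for $a(\mathbf v\cdot\mathbf x+b)$, and the lower bound by taking the all-ones weight vector and invoking Lemma~\ref{lem:depends}, with invertibility of $a$ used exactly as the paper does to certify dependence on every coordinate. Your added detail justifying that dependence (restricting the other coordinates to $0$ and using injectivity of $a$) is a welcome elaboration of a step the paper leaves implicit, but it is not a different argument.
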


\begin{proof}
    The adversary can choose the hidden function $a(f_v(x))$ where $v$ is the all-ones vector. Then, $\We(F_{n,a}) = \Omega(n)$ follows from Lemma~\ref{lem:depends} since $a$ is invertible. On the other hand, we have $\We(F_{n,a}) = O(n)$ since any function in $F_{n,a}$ can be computed using at most $2n+r$ arithmetic operations.
\end{proof}

In the next theorem, we prove a general result that can be applied to a number of different activation functions. 

\begin{thm}\label{thm:acta}
Suppose that the activation function $a: \mathbb{R} \rightarrow \mathbb{R}$ is invertible, and there exists a positive integer $r$ such that both $a$ and its inverse $a^{-1}$ can be computed for any input using at most $r$ arithmetic operations. Then, we have $\Le_{F_{n,a}} = \We(F_{n,a}) = \Theta(n)$.
\end{thm}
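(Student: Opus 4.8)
The plan is to get $\We(F_{n,a})=\Theta(n)$ for free from the preceding lemma, whose hypotheses are implied by those of the present theorem, and then to establish $\Le_{F_{n,a}}=\We(F_{n,a})$ via Lemma~\ref{lem:sol} by showing that $F_{n,a}$ is simple online-learnable. The conceptual point is that a member $g(\mathbf x)=a(\mathbf v\cdot\mathbf x+b)$ of $F_{n,a}$ is pinned down by the affine map $\mathbf x\mapsto\mathbf v\cdot\mathbf x+b$, and that this affine map can be read off from the feedback simply by applying $a^{-1}$. So, after padding each input with a constant coordinate $1$, learning $F_{n,a}$ becomes the same task as learning $F_L(\mathbb{R}^{n+1},\mathbb{R})$, which was already handled in Theorem~\ref{thm:linwf}, with the cheap invertible map $a$ applied on the way in and out.

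Concretely, I would have the learner maintain a set $S$ of past inputs whose augmentations $(\mathbf x,1)\in\mathbb{R}^{n+1}$ are linearly independent. In round $1$ the learner guesses the constant $a(0)$, a free unary node, corresponding to the default hypothesis $\mathbf v=\mathbf 0$, $b=0$. Whenever the adversary declares the guess wrong on some input $\mathbf x_j$, the learner receives $g(\mathbf x_j)$, computes $y_j:=a^{-1}(g(\mathbf x_j))$ using at most $r$ binary operations, adds $\mathbf x_j$ to $S$, and row-reduces the system $\{(\mathbf x_i,1)\cdot(\mathbf v',b')=y_i:\mathbf x_i\in S\}$ to obtain parameters $(\mathbf v',b')$ consistent with all feedback so far; it then guesses $g'(\mathbf x)=a(\mathbf v'\cdot\mathbf x+b')$ on all subsequent inputs until told it is wrong. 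Exactly as in Theorem~\ref{thm:linwf}, a guess can be wrong only when the current augmented input is linearly independent from those in $S$, so $|S|\le n+1$ always and the learner makes at most $n+1$ mistakes in the standard scenario without computational restrictions, giving property~4. Property~1 holds because selecting $g'$ only reads its stored parameters, using no binary operations, while $g'\in F_{n,a}$ so $g'(\mathbf x)$ can be computed using at most $\We(g')\le\We(F_{n,a})$ binary operations; property~2 is immediate, since a correct guess triggers no further computation; and property~3 holds with $s=s(n)$ absorbing one evaluation of $a^{-1}$ together with an $O(n^3)$ row reduction. Hence $F_{n,a}$ is simple online-learnable, so $\Le_{F_{n,a}}=\We(F_{n,a})$ by Lemma~\ref{lem:sol}, and combining with $\We(F_{n,a})=\Theta(n)$ finishes the proof.

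I expect the only step requiring genuine care is the bookkeeping for property~3: confirming that recovering the affine parameters really takes a bounded number of binary operations, which is exactly where the hypothesis that $a^{-1}$ can be evaluated with a bounded number of operations is used, so that $s$ is finite and Lemma~\ref{lem:sol} produces a finite mistake bound under the operation cap $\We(F_{n,a})$. Verifying that the row reduction is always consistent uses that $a$ is a bijection onto its range, so that each received value $g(\mathbf x_j)$ does lie in the domain of $a^{-1}$. Everything else is a direct transcription of the argument for $F_L(\mathbb{R}^n,\mathbb{R})$ in Theorem~\ref{thm:linwf}.
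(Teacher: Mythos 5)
Your proposal is correct and follows essentially the same route as the paper: reduce to simple online-learnability via Lemma~\ref{lem:sol}, augment each input with a constant coordinate $1$, pull the feedback back through $a^{-1}$, and run the row-reduction learner from Theorem~\ref{thm:linwf}, with the $\Theta(n)$ bound on $\We(F_{n,a})$ supplied by the preceding lemma. The only cosmetic difference is that the paper writes the per-mistake operation budget as $s=s(n,r)$ to record the dependence on the cost of evaluating $a$ and $a^{-1}$, which your bookkeeping also accounts for.
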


\begin{proof}
By Lemma~\ref{lem:sol}, it suffices to prove that $F_{n,a}$ is simple online-learnable, so we provide a learning algorithm which satisfies the properties of simple online-learnability. In each round, the learner keeps track of a linearly independent set $S$ of vectors. In the first round, when the adversary provides the input $x_1 \in \mathbb{R}^n$, the learner guesses $a(0)$ for the output, regardless of the value of $x_1$, i.e., the learner's initial default assumption is that the part of the hidden function inside the activation function is the zero function. In any round $j$ where the adversary says that the learner's guess was wrong, the learner receives the true output $y_j$ from the adversary. If $x'_j$ denotes the vector obtained from the input $x_j$ by appending $1$, the learner adds $x'_j$ to $S$. After the learner adds $x'_j$, suppose that the resulting set is $S = \{x'_{i_1}, \dots, x'_{i_k}\}$. The learner uses row reduction on the $k \times (n+2)$ matrix obtained from putting the concatenation of $x'_{i_t}$ and $a^{-1}(y_{i_t})$ in each row $t = 1, \dots, k$ to find the coefficients for a linear function from $\mathbb{R}^n$ to $\mathbb{R}$ of the form $f(x)+b$ for some $f \in F_L(\mathbb{R}^n,\mathbb{R})$ and $b \in \mathbb{R}$ such that $g(x) = a(f(x)+b)$ is consistent with all previous inputs and outputs. 

Once the learner has found a function $g \in F_{n,a}$ such that $g(x) = a(f(x)+b)$ is consistent with all previous inputs and outputs, they use $g$ for all future inputs until the adversary says they are wrong. If $x'_j$ is in the span of the current elements of $S$, then the learner’s guess will be the correct output and the learner makes no additional computations in the round after their guess. Otherwise, the learner constructs a new matrix and performs a new row reduction, obtaining a new function $g’ \in F_{n,a}$ which is consistent with all previous inputs and outputs. They continue this strategy for the entire learning process. Since the learner can only get a guess wrong when $x'_j$ is not in the span of the current elements of $S$, the learner makes at most $n+1$ total mistakes in the standard learning scenario without computational restrictions. Furthermore, note that we must have $|S| \le n+1$ throughout the learning process, and both $a$ and $a^{-1}$ can be computed for any input using at most $r$ arithmetic operations, so there exists a positive integer $s = s(n,r)$ such that the learner uses at most $s$ additional arithmetic operations in each round after making its guess if the guess is wrong. Thus, $F_{n,a}$ is simple online-learnable, so we have $\Le_{F_{n,a}} = \We(F_{n,a})$. 
\end{proof}

We apply the last theorem to $1$-layer neural networks using a leaky relu activation function. Specifically, for $\alpha > 0$, let \[\relu_{\alpha}(x) =  \begin{cases}        x & x > 0 \\       \alpha x & x \le 0     \end{cases} .\] 

\begin{cor}
For all real numbers $\alpha > 0$, we have $\Le_{F_{n,\relu_{\alpha}}} = \We(F_{n,\relu_{\alpha}}) = \Theta(n)$.
\end{cor}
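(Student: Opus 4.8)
The plan is to apply Theorem~\ref{thm:acta} with activation function $a = \relu_{\alpha}$. For this, I need to check the two hypotheses of that theorem: that $\relu_{\alpha}$ is invertible, and that there is a positive integer $r$ such that both $\relu_{\alpha}$ and its inverse $\relu_{\alpha}^{-1}$ can be computed on any input using at most $r$ binary arithmetic operations. Once these are verified, Theorem~\ref{thm:acta} immediately yields $\Le_{F_{n,\relu_{\alpha}}} = \We(F_{n,\relu_{\alpha}}) = \Theta(n)$, which is exactly the claim (the $\We$ part of the conclusion is built into that theorem via the preceding lemma, so nothing further is needed).

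For invertibility, observe that since $\alpha > 0$, the map $\relu_{\alpha}$ sends $(0,\infty)$ bijectively onto $(0,\infty)$ by the identity and sends $(-\infty,0]$ bijectively onto $(-\infty,0]$ by $x \mapsto \alpha x$. These two image intervals partition $\mathbb{R}$, so $\relu_{\alpha}\colon \mathbb{R} \to \mathbb{R}$ is a bijection, with inverse given by the piecewise map $\relu_{\alpha}^{-1}(y) = y$ for $y > 0$ and $\relu_{\alpha}^{-1}(y) = y/\alpha$ for $y \le 0$.

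For the operation count, both $\relu_{\alpha}$ and $\relu_{\alpha}^{-1}$ are piecewise unary functions in the model of computation used in this paper: each has one piece equal to the identity (a unary operation, which does not count toward the cap) and one piece obtained by applying a single binary arithmetic operation to the input and the constant $\alpha$ — namely multiplication by $\alpha$ for $\relu_{\alpha}$ and division by $\alpha$ for $\relu_{\alpha}^{-1}$. Hence each can be computed using at most one binary arithmetic operation, so we may take $r = 1$, and Theorem~\ref{thm:acta} applies.

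The argument is essentially routine; the only point requiring a little care is the bookkeeping in the model of computation, namely confirming that a two-piece piecewise unary function whose nontrivial piece is a single scalar multiplication (or division by a constant) counts as exactly one binary arithmetic operation — which is precisely what the model in this paper allows. I do not anticipate any real obstacle.
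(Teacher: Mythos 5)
Your proposal is correct and matches the paper's (implicit) argument: the corollary is stated as a direct application of Theorem~\ref{thm:acta} to $a=\relu_{\alpha}$, and your verification that $\relu_{\alpha}$ is a bijection with piecewise inverse $y\mapsto y$ for $y>0$, $y\mapsto y/\alpha$ for $y\le 0$, each computable as a piecewise unary function with at most one binary operation, is exactly the needed check. Nothing further is required.
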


\begin{thm}\label{opt_sig}
    Suppose that the activation function $a: \mathbb{R} \rightarrow \mathbb{R}$ is invertible. Then, $\optstd(F_{n, a}) = n+1$.
\end{thm}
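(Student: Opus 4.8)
The plan is to establish matching bounds $\optstd(F_{n,a}) \le n+1$ and $\optstd(F_{n,a}) \ge n+1$, exploiting the bijection between $F_{n,a}$ and the affine functions $\mathbb{R}^n \to \mathbb{R}$, equivalently the linear functionals on $\mathbb{R}^{n+1}$: writing $x' \in \mathbb{R}^{n+1}$ for the vector obtained from $x$ by appending a $1$, a function $g(x) = a(f(x)+b)$ with $f = f_v$ corresponds to the functional $w^\ast = (v,b)$ via $g(x) = a(\langle w^\ast, x'\rangle)$, and since $a$ is injective this correspondence is one-to-one.

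For the upper bound I would run essentially the strategy already used in the proof of Theorem~\ref{thm:acta}, now simply counted without the operation cap (alternatively one can quote $\optstd(F_L(\mathbb{R}^{n+1},\mathbb{R})) = n+1$ directly). The learner maintains the set $S$ of augmented inputs on which it has erred, together with the outputs revealed on those rounds, and each round it fixes any linear functional $\hat w$ consistent with $\{(s, a^{-1}(y_s)) : s \in S\}$; such a $\hat w$ exists because the true $w^\ast$ is one. On input $x_j$ it guesses $a(\langle \hat w, x'_j\rangle)$. Since $a$ is injective, this guess is correct exactly when $\langle \hat w, x'_j\rangle = \langle w^\ast, x'_j\rangle$, which holds whenever $x'_j \in \operatorname{span}(S)$ because $\hat w$ and $w^\ast$ agree on $S$. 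Hence every mistake occurs on a vector $x'_j \notin \operatorname{span}(S)$, which is then added to $S$ and raises its rank; as $S \subseteq \mathbb{R}^{n+1}$, the learner errs at most $n+1$ times, so $\optstd(F_{n,a}) \le n+1$.

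For the lower bound I would have the adversary present the inputs $0, e_1, \dots, e_n \in \mathbb{R}^n$ over $n+1$ consecutive rounds. After the learner's guess $\hat y_i$ in round $i$, the adversary declares it wrong and reveals an output $y_i \ne \hat y_i$ lying in the image of $a$, which is possible since that image is infinite. Reading $b = a^{-1}(y_1)$ off the first round and $v_i = a^{-1}(y_{i+1}) - b$ off round $i+1$, one checks that exactly one $g \in F_{n,a}$ is consistent with all $n+1$ responses, so the adversary's play is legal while forcing $n+1$ mistakes; thus $\optstd(F_{n,a}) \ge n+1$, and combining the two bounds gives $\optstd(F_{n,a}) = n+1$.

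I do not expect a genuine obstacle here; the main things to be careful about are bookkeeping points. First, "invertible" should be read as invertible onto the (infinite) image of $a$: for bijective activations like leaky ReLU, $a^{-1}$ is everywhere defined, while for sigmoid or tanh it is defined only on $\operatorname{im}(a)$ — but in the upper bound the learner only ever applies $a^{-1}$ to genuine outputs (which lie in $\operatorname{im}(a)$), and in the lower bound the adversary only reveals outputs in $\operatorname{im}(a)$, so the argument is uniform. Second, since this statement imposes no operation cap, I only need that the learner \emph{can} find a consistent $\hat w$ each round, which holds because the relevant linear system is solved by $w^\ast$.
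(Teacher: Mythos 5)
Your proposal is correct and follows essentially the same route as the paper: the upper bound via a span argument on the augmented inputs $x'_j\in\mathbb{R}^{n+1}$ (the paper's learner writes $x'_j$ as a linear combination of previous augmented inputs and pushes the coefficients through $a^{-1}$ and $a$, which is equivalent to your consistent-functional formulation), and the lower bound via the adversary presenting $0,e_1,\dots,e_n$ and exploiting the linear independence of the $n+1$ augmented vectors (the paper simply picks the revealed wrong answer from $\{a(0),a(1)\}$ rather than an arbitrary element of $\operatorname{im}(a)$). No gaps.
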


\begin{proof}
     Suppose that the hidden function is $h \in F_{n, a}$. For each input $x_j \in \mathbb{R}^n$, define $x'_j \in \mathbb{R}^{n+1}$ to be the vector obtained from $x_j$ by appending a $1$. Given $x_j$, the learner checks whether $x'_j$ is in the span of the previous $x'_1, \dots, x'_{j-1}$. If not, then the learner guesses that the output is $a(0)$. Otherwise, if \[x'_j = \sum_{i = 1}^{j-1} c_i x'_i,\] then the learner guesses that the output is \[a\left(\sum_{i = 1}^{j-1} c_i a^{-1}(h(x'_i))\right).\] The learner always gets the correct answer when $x'_j$ is in the span of the previous $x'_1, \dots, x'_{j-1}$, so they only get the wrong answer at most $n+1$ times. Thus, $\optstd(F_{n, a}) \le n+1$. 
    
    To see that $\optstd(F_{n, a}) \ge n+1$, note that the adversary can give the all-$0$ vector as the input in round $1$ and $e_i$ as the input in round $i+1$ for $1 \le i \le n$, where $e_i$ is the standard unit vector with $i^{\text{th}}$ coordinate $1$ in $\mathbb{R}^n$. Regardless of the learner's answer, the adversary always says it is wrong. If the learner's answer was $a(0)$, then the adversary says that the answer was $a(1)$. Otherwise, the adversary says that the answer was $a(0)$. The $n+1$ vectors obtained from the all-$0$ vector and the standard unit vector by appending a $1$ are linearly independent, so there exists a function in $F_{n, a}$ which is consistent with the adversary's answers.
\end{proof}

\begin{cor}
    For all real numbers $\alpha > 0$, we have $\optstd(F_{n, \relu_{\alpha}}) = n+1$.
\end{cor}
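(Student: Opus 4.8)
The plan is to observe that the corollary follows immediately from Theorem~\ref{opt_sig} once we check that $\relu_{\alpha}$ is invertible for every real $\alpha > 0$. So the only work is to verify invertibility, which I would do by examining the two pieces: on the interval $(0,\infty)$ the map $\relu_{\alpha}$ is the identity, and its image there is exactly $(0,\infty)$; on $(-\infty,0]$ the map is $x \mapsto \alpha x$, which — since $\alpha > 0$ — is strictly increasing with image exactly $(-\infty,0]$. The two domain pieces partition $\mathbb{R}$, the two image pieces partition $\mathbb{R}$, and the pieces agree at the shared point $x=0$ (both give $0$), so $\relu_{\alpha} : \mathbb{R} \to \mathbb{R}$ is a well-defined bijection. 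In fact its inverse is given explicitly by $\relu_{1/\alpha}$, which I would state for concreteness (and which makes clear there is no ambiguity at the origin).

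Having established that $\relu_{\alpha}$ is an invertible activation function, I would then apply Theorem~\ref{opt_sig} with $a = \relu_{\alpha}$ to conclude $\optstd(F_{n,\relu_{\alpha}}) = n+1$, completing the proof. There is no real obstacle here; the only point requiring any care is the boundary behavior at $x = 0$, where one must note that the two branch formulas coincide so that both $\relu_{\alpha}$ and its inverse are genuinely single-valued. (I note that this corollary is logically independent of the earlier corollary giving $\Le_{F_{n,\relu_{\alpha}}} = \We(F_{n,\relu_{\alpha}}) = \Theta(n)$, which required the stronger hypothesis that $a^{-1}$ also be computable with boundedly many operations — here we need only invertibility, since Theorem~\ref{opt_sig} places no computational cap on the learner.)
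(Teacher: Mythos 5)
Your proposal is correct and matches the paper's intent exactly: the corollary is stated as an immediate consequence of Theorem~\ref{opt_sig}, requiring only the observation that $\relu_{\alpha}$ is invertible (indeed strictly increasing) for $\alpha>0$, with inverse $\relu_{1/\alpha}$. Your additional remarks on the boundary at $x=0$ and on the logical independence from the computational-cap corollary are accurate but not needed.
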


The sigmoid activation function is given by $\sigma: \mathbb{R} \rightarrow \mathbb{R}$ with \[\sigma(x) = \frac{1}{1 + e^{-x}}.\] This function is invertible, since \[x = \ln{\frac{\sigma(x)}{1-\sigma(x)}}.\] Thus, we obtain the following corollaries of Theorem~\ref{thm:acta} and Theorem\ref{opt_sig}.

\begin{cor}
    If $\sigma: \mathbb{R} \rightarrow \mathbb{R}$ is the sigmoid activation function, then $\Le_{F_{n,\sigma}} = \We(F_{n,\sigma}) = \Theta(n)$.
\end{cor}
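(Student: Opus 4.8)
The plan is to deduce this immediately from Theorem~\ref{thm:acta} applied with activation function $a = \sigma$. The work is entirely in verifying the two hypotheses of that theorem: that $\sigma$ is invertible, and that there is a fixed positive integer $r$ such that both $\sigma$ and $\sigma^{-1}$ can be evaluated on any input using at most $r$ binary arithmetic operations.

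First I would record that $\sigma$ is invertible. Since $\sigma(x) = 1/(1+e^{-x})$ is strictly increasing with range $(0,1)$, it is a bijection onto $(0,1)$, and solving $y = 1/(1+e^{-x})$ gives $\sigma^{-1}(y) = \ln\frac{y}{1-y}$ for $y \in (0,1)$ (this identity is already noted in the excerpt). Here I would make one small observation to preempt a domain issue: any $g \in F_{n,\sigma}$ has range contained in $(0,1)$, so every output $y_j$ the learner receives from the adversary lies in the domain of $\sigma^{-1}$, and hence the algorithm in the proof of Theorem~\ref{thm:acta} — which applies $\sigma^{-1}$ to the received outputs before row-reducing — is well-defined.

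Next I would bound the operation count. Recall that in the model of this paper the unary operations, including $e^{x}$ and $\ln x$, do not count toward the cap, so only binary operations need to be tallied. Computing $\sigma(x) = 1/(1+e^{-x})$ uses one negation (equivalently, a subtraction $0-x$), then the unary $e^{\cdot}$, then one addition to get $1+e^{-x}$, then one division, for a total of at most $3$ binary operations. Computing $\sigma^{-1}(y) = \ln\frac{y}{1-y}$ uses one subtraction to get $1-y$, one division, and the unary $\ln$, for at most $2$ binary operations. Thus $r = 3$ works for both, and the hypotheses of Theorem~\ref{thm:acta} are satisfied.

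Applying Theorem~\ref{thm:acta} then yields $\Le_{F_{n,\sigma}} = \We(F_{n,\sigma}) = \Theta(n)$, which is exactly the claim. I do not anticipate a genuine obstacle here; the only points requiring care are (i) counting \emph{binary} operations only, since the model excludes unary operations such as $e^{x}$ and $\ln x$ from the cap, and (ii) the domain check for $\sigma^{-1}$, both of which are handled above.
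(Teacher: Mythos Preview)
Your proposal is correct and follows the same approach as the paper: the corollary is obtained directly from Theorem~\ref{thm:acta} once one checks that $\sigma$ is invertible with inverse $\sigma^{-1}(y)=\ln\frac{y}{1-y}$ and that both $\sigma$ and $\sigma^{-1}$ are computable in a bounded number of binary operations (since $e^x$ and $\ln x$ are unary in this model). Your added remarks on the operation count and the domain of $\sigma^{-1}$ are more explicit than the paper, which simply notes the inverse formula and invokes the theorem.
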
 

\begin{cor}
    If $\sigma: \mathbb{R} \rightarrow \mathbb{R}$ is the sigmoid activation function, then $\optstd(F_{n, \sigma}) = n+1$.
\end{cor}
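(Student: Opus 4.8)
The statement is an immediate corollary of Theorem~\ref{opt_sig}, so the only thing to check is that the sigmoid function $\sigma$ satisfies the hypothesis of that theorem, namely that it is invertible. This was already observed just above the corollary: since $\sigma(x) = \frac{1}{1+e^{-x}}$ is a strictly increasing bijection from $\mathbb{R}$ onto the interval $(0,1)$, it is injective, and solving $y = \frac{1}{1+e^{-x}}$ for $x$ gives $x = \ln\frac{y}{1-y}$, so $\sigma^{-1}(y) = \ln\frac{y}{1-y}$ is well-defined on the range $(0,1)$ of $\sigma$. Hence $\sigma$ is invertible in the sense required by Theorem~\ref{opt_sig}.

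The plan is therefore: first state that $\sigma$ is invertible, citing the explicit formula for $\sigma^{-1}$ displayed just before the corollary; then apply Theorem~\ref{opt_sig} with $a = \sigma$ to conclude $\optstd(F_{n,\sigma}) = n+1$. No new argument is needed — the upper bound comes from the learner strategy in the proof of Theorem~\ref{opt_sig} (checking whether the augmented input $x'_j$ lies in the span of previous augmented inputs, and interpolating via $a^{-1}$ when it does), and the lower bound comes from the adversary feeding the all-zero vector followed by the standard basis vectors $e_1,\dots,e_n$, whose augmented versions are linearly independent.

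The only point that warrants a sentence of care is that Theorem~\ref{opt_sig} invokes $a^{-1}$ only on values of the form $h(x'_i)$ for a hidden function $h \in F_{n,a}$, and all such values lie in the range $(0,1)$ of $\sigma$, so $\sigma^{-1}$ is applied only where it is defined; thus the argument goes through verbatim. There is no real obstacle here — this corollary is a one-line deduction — so the proof should consist of essentially the two steps above.

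\begin{proof}
The sigmoid function $\sigma(x) = \frac{1}{1+e^{-x}}$ is invertible, with inverse $\sigma^{-1}(y) = \ln\frac{y}{1-y}$ defined on the range $(0,1)$ of $\sigma$, as noted above. Applying Theorem~\ref{opt_sig} with $a = \sigma$ yields $\optstd(F_{n,\sigma}) = n+1$.
\end{proof}
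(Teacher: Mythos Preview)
Your proposal is correct and matches the paper's approach exactly: the corollary is stated immediately after the observation that $\sigma$ is invertible with inverse $\sigma^{-1}(y)=\ln\frac{y}{1-y}$, and it follows directly from Theorem~\ref{opt_sig} with $a=\sigma$. The paper gives no separate proof, so your one-line deduction is precisely what is intended.
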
 

The hyperbolic tangent activation function is given by $\tanh: \mathbb{R} \rightarrow \mathbb{R}$ with \[\tanh(x) = \frac{e^x - e^{-x}}{e^x + e^{-x}}.\] This function is invertible, since \[x = \frac{1}{2}\ln{\frac{1+\tanh(x)}{1-\tanh(x)}}.\] Thus, we obtain the following corollaries of Theorem~\ref{thm:acta} and Theorem\ref{opt_sig}.

\begin{cor}
    If $\tanh: \mathbb{R} \rightarrow \mathbb{R}$ is the hyperbolic tangent activation function, then $\Le_{F_{n,\tanh}} = \We(F_{n,\tanh}) = \Theta(n)$.
\end{cor}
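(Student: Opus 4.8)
The plan is to apply Theorem~\ref{thm:acta} with the activation function $a = \tanh$; the asserted equalities $\Le_{F_{n,\tanh}} = \We(F_{n,\tanh}) = \Theta(n)$ are then an immediate specialization of that theorem's conclusion, so all that is required is to check its two hypotheses (the companion Theorem~\ref{opt_sig} supplies the separate corollary about $\optstd$ and is not needed here).

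First I would verify invertibility. The function $\tanh$ is a strictly increasing bijection from $\mathbb{R}$ onto the open interval $(-1,1)$, with inverse $\tanh^{-1}(y) = \tfrac{1}{2}\ln\frac{1+y}{1-y}$, as already displayed in the excerpt. Since every function in $F_{n,\tanh}$ has all of its outputs in $(-1,1)$, the learning algorithm in the proof of Theorem~\ref{thm:acta} only ever needs to evaluate $\tanh^{-1}$ at points of $(-1,1)$, where this formula is well-defined, so invertibility in the sense used there holds.

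Next I would bound the number of binary arithmetic operations. Writing $\tanh(x) = \frac{e^{2x}-1}{e^{2x}+1}$, one computes $e^{2x}$ using the unary exponential (which does not count toward the cap; the factor of $2$ costs at most one extra multiplication), followed by a subtraction, an addition, and a division — a constant number of binary operations. Similarly, $\tanh^{-1}(y) = \tfrac{1}{2}\ln\frac{1+y}{1-y}$ uses two additions (for $1+y$ and $1-y$), one division, the unary $\ln$ (which does not count), and one multiplication by $\tfrac{1}{2}$ — again a constant number of binary operations. Taking $r$ to be the maximum of these two counts satisfies the hypothesis that both $\tanh$ and $\tanh^{-1}$ are computable on any valid input using at most $r$ binary arithmetic operations.

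With both hypotheses verified, Theorem~\ref{thm:acta} yields $\Le_{F_{n,\tanh}} = \We(F_{n,\tanh}) = \Theta(n)$ directly. The only points that need care are the bookkeeping convention separating binary from unary operations and the observation that $\tanh^{-1}$ is only ever applied on $(-1,1)$; there is no substantive obstacle, since the corollary is a routine instantiation of the general theorem.
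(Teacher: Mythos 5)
Your proposal is correct and matches the paper's approach: the corollary is obtained exactly as you describe, by noting that $\tanh$ is invertible with inverse $\tfrac{1}{2}\ln\frac{1+y}{1-y}$ and that both can be computed in a constant number of binary operations, then invoking Theorem~\ref{thm:acta}. Your extra care about the domain $(-1,1)$ and the unary/binary bookkeeping is sound but not a departure from the paper's argument.
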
 

\begin{cor}
    If $\tanh: \mathbb{R} \rightarrow \mathbb{R}$ is the hyperbolic tangent activation function, then $\optstd(F_{n, \tanh}) = n+1$.
\end{cor}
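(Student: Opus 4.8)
The plan is simply to apply Theorem~\ref{opt_sig} with activation function $a = \tanh$. The single hypothesis of that theorem is that $a$ is invertible, and this was already recorded in the sentence immediately preceding the statement: the identity $\tanh^{-1}(y) = \tfrac12\ln\tfrac{1+y}{1-y}$ exhibits an explicit inverse of $\tanh$ on its range $(-1,1)$, which in particular shows that $\tanh\colon\mathbb{R}\to\mathbb{R}$ is injective. So the first (and essentially only) step is to cite this formula as the verification of the hypothesis.

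The second step is to check that the proof of Theorem~\ref{opt_sig} really only uses injectivity of $a$ together with the fact that $a^{-1}$ is defined on the values that actually occur. In the upper-bound half, the learner only ever evaluates $a^{-1}$ at outputs $h(x'_i)$ produced by the hidden function $h\in F_{n,\tanh}$, all of which lie in the range $(-1,1)$ of $\tanh$, so $a^{-1}(h(x'_i))$ is well defined; in the lower-bound half the adversary's two claimed outputs $a(0)=0$ and $a(1)=\tanh 1$ are distinct because $\tanh$ is injective, and both lie in $(-1,1)$, so a consistent member of $F_{n,\tanh}$ exists by linear interpolation of the $a^{-1}$-values over the $n+1$ linearly independent appended vectors. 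This is exactly the situation already handled for the sigmoid $\sigma$, whose range $(0,1)$ is likewise a proper subinterval of $\mathbb{R}$, so no new argument is needed.

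Combining these, $\optstd(F_{n,\tanh}) = n+1$ follows directly from Theorem~\ref{opt_sig}. I do not anticipate any real obstacle; the only point requiring a moment's care is the (harmless) fact that $\tanh$ is not surjective onto $\mathbb{R}$, but since the proof of Theorem~\ref{opt_sig} never needs $a^{-1}$ outside the range of $a$, this causes no difficulty, exactly as in the sigmoid corollary.
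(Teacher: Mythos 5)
Your proposal is correct and matches the paper exactly: the corollary is obtained by citing the explicit inverse formula $x = \tfrac12\ln\tfrac{1+\tanh(x)}{1-\tanh(x)}$ to verify the invertibility hypothesis and then applying Theorem~\ref{opt_sig}. Your extra remark that $a^{-1}$ is only ever evaluated on the range of $a$ is a sensible (and harmless) clarification that the paper leaves implicit.
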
 

The standard relu activation function is given by $\relu: \mathbb{R} \rightarrow \mathbb{R}$ with \[\relu(x) =  \begin{cases}        x & x > 0 \\       0 & x \le 0     \end{cases}.\] Theorem\ref{thm:linwf} does not apply to the standard relu activation function,  since $\relu$ is not invertible. However, in the next result, we show how to learn $1$-layer neural networks that use a standard relu activation function. 

\begin{thm}\label{thm:relu_1layer}
    For the standard relu activation function, we have $\optstd(F_{n, \relu}) = n+1$.
\end{thm}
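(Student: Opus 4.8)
The plan is to prove $\optstd(F_{n,\relu}) = n+1$ by establishing matching upper and lower bounds. For the lower bound $\optstd(F_{n,\relu}) \ge n+1$, I would try to reuse the adversary argument from the proof of Theorem~\ref{opt_sig}: the adversary presents the all-zero vector in round $1$ and the standard unit vectors $e_1, \dots, e_n$ in the subsequent rounds, always declaring the learner wrong. The subtlety is that $\relu$ is not invertible, so I need to check that whatever sequence of ``wrong'' answers the adversary commits to can actually be realized by some function in $F_{n,\relu}$. Since the all-zero vector together with the $e_i$'s give $n+1$ affinely independent points (equivalently, appending a $1$ yields $n+1$ linearly independent vectors in $\mathbb{R}^{n+1}$), I can choose the ``true'' pre-activation values freely on these $n+1$ points, and then pick the values large enough and positive (or otherwise arranged) so that $\relu$ acts as the identity on them; this lets me interpolate an affine function $f(x)+b$ matching any desired outputs, so the adversary's answers are consistent with a genuine member of $F_{n,\relu}$.

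For the upper bound $\optstd(F_{n,\relu}) \le n+1$, the difficulty is precisely the non-invertibility of $\relu$: once the learner sees an output value $0$, it does not know the pre-activation value, only that it was $\le 0$. The plan is to have the learner maintain the set $S$ of inputs (with a $1$ appended) on which it has made mistakes, together with the received outputs, and track the affine constraints these impose. Outputs that are strictly positive give exact linear equations on the weight vector and bias; outputs equal to $0$ give only linear inequalities ($\text{(weights)}\cdot x + b \le 0$). When a fresh input $x_j$ arrives whose appended version $x'_j$ lies in the span of the mistaken inputs' appended versions, the learner should be able to deduce $g(x_j)$: if $x'_j$ is an affine combination of inputs with known positive outputs, the pre-activation value is determined, so $g(x_j)$ is determined; if some of those inputs had output $0$, the learner needs to argue that the pre-activation value at $x_j$ is still forced (or forced to be $\le 0$ hence output $0$) by the combination of equalities and inequalities — this is the main obstacle and will require a careful case analysis, perhaps guessing $0$ whenever the constraints do not pin down a positive value and showing any mistake there yields a new linearly independent constraint. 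Since each mistake adds a vector to $S$ and $|S| \le n+1$ (the appended vectors live in $\mathbb{R}^{n+1}$), the learner makes at most $n+1$ mistakes.

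The step I expect to be the real crux is showing that when $x'_j$ lies in the span of the previously-mistaken appended inputs, the learner can always correctly predict $g(x_j)$ \emph{except} on rounds that produce a genuinely new constraint. Concretely, I would argue: if the learner guesses according to the affine function obtained by (say) treating all output-$0$ rounds as exact equalities $\text{pre-activation} = 0$ and solving, then any wrong guess means the hidden function's pre-activation values differ from this solution on $x'_j$; I then want to show that $x'_j$ together with the current $S$ forces $|S|$ to grow, i.e., that the new input genuinely was not in the affine span of the equality-constraints — using the fact that $\relu$ being the identity on positives versus zero on non-positives means a disagreement on the output forces a disagreement that cannot be absorbed into the existing constraint system. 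Handling the bookkeeping of which linear combinations are ``safe'' to extrapolate (all-positive-output supports) versus which are not, and confirming the mistake count stays at $n+1$ rather than something larger, is where the care is needed; the rest (row reduction, operation counts) is routine and mirrors Theorem~\ref{thm:linwf} and Theorem~\ref{opt_sig}.
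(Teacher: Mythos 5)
Your overall skeleton matches the paper's, and your lower bound is fine: the adversary from Theorem~\ref{opt_sig} works verbatim because the $n+1$ appended vectors are linearly independent, so any prescribed pre-activations can be interpolated, and one takes pre-activation $1$ where the declared output is $1$ and pre-activation $0$ where it is $0$. The gap is in the upper bound, exactly at the step you flag as the crux, and the specific resolution you lean toward would not work. If the learner treats zero-output rounds as pseudo-equalities (pre-activation $=0$), solves, and guesses $\relu$ of the resulting value, then it can guess a \emph{positive} value and be told the true output is $0$ (because the actual pre-activation at some zero-output round was strictly negative, so the solved affine function is simply wrong). Such a mistake yields only a new linear \emph{inequality}; it does not add a linearly independent vector to $S$ or to the equality-constraint span, so the claim ``each mistake adds a vector to $S$ and $|S|\le n+1$'' fails, and the mistake count is not controlled. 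Your hoped-for lemma — that any wrong guess forces the new input out of the affine span of the equality constraints — is false in precisely this case, since $x'_j$ can already lie in that span.

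The fix is simpler than the case analysis you anticipate, and it is the one the paper uses: discard the zero-output rounds entirely from the span test. The learner maintains only the appended vectors $x'_{t_1},\dots,x'_{t_k}$ of inputs whose outputs were \emph{positive} (for these the pre-activation is known exactly, since $\relu$ is the identity there). If $x'_j=\sum_i c_i x'_{t_i}$, the learner outputs $\relu\left(\sum_i c_i h(x_{t_i})\right)$, which is provably correct; otherwise the learner outputs $0$. This makes a ``wrong positive guess'' impossible, so every mistake is a $0$-guess against a positive true output, which hands the learner a new exact equality whose appended vector is linearly independent of the previous positive-output ones — hence at most $n+1$ mistakes. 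In short: you had the right sub-case (``affine combination of inputs with known positive outputs $\Rightarrow$ determined'') but did not commit to making that the \emph{only} situation in which the learner ventures a nonzero guess; committing to it dissolves the obstacle you identified.
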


\begin{proof}
    Suppose that the hidden function is $h \in F_{n, \relu}$. For each input $x_j \in \mathbb{R}^n$, define $x'_j \in \mathbb{R}^{n+1}$ to be the vector obtained from $x_j$ by appending a $1$. Given the input $x_j$, the learner checks whether $x'_j$ is in the span of the previous $x'_{t_1}, \dots, x'_{t_k}$ which had positive outputs. If not, then the learner guesses the output is $0$. Otherwise, if \[x'_j = \sum_{i = 1}^{k} c_i x'_{t_i},\] then the learner guesses that the output is \[\relu\left(\sum_{i = 1}^{j-1} c_i h(x'_{t_i})\right).\] The learner always gets the correct answer when $x'_j$ is in the span of the previous $x'_{t_1}, \dots, x'_{t_k}$ with positive outputs, and the learner always gets the correct answer when the output is $0$, so they only get the wrong answer at most $n+1$ times. Thus, $\optstd(F_{n, \relu}) \le n+1$. 
    
    To see that $\optstd(F_{n, \relu}) \ge n+1$, note that the adversary can use the same strategy as in Theorem~\ref{opt_sig}.
\end{proof}

Note that the method in the last proof does not imply that $\Le_{F_{n, \relu}} < \infty$, since the number of arithmetic operations used by the learner increases with the number of rounds. In the next result, we show how to modify the learner strategy to obtain $\Le_{F_{n, \relu}} < \infty$.

\begin{thm}
For positive integers $n$, we have $\Le_{F_{n, \relu}} \le \A_{F_{n, \relu}} = O(n^3)$.
\end{thm}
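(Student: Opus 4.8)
The plan is to give an explicit learner strategy that caps the number of arithmetic operations per round at $O(n^3)$ while still achieving a finite mistake bound, which establishes both $\Le_{F_{n,\relu}} \le \A_{F_{n,\relu}}$ and the $O(n^3)$ bound. The key difficulty, as noted in the remark after Theorem~\ref{thm:relu_1layer}, is that the naive strategy there accumulates an unbounded set $x'_{t_1},\dots,x'_{t_k}$ of inputs with positive outputs and re-checks span membership against all of them, so the per-round cost grows without bound. The fix is the same idea used in the proof that $\A_{F_L(\mathbb{R}^n,\mathbb{R})} = O(n^3)$: maintain a \emph{bounded} spanning subset rather than the full history.

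First I would have the learner maintain a linearly independent subset $\{y_1,\dots,y_{k}\}$ of the appended vectors $x'_j$ that have had positive output so far, which by linear independence has size $k \le n+1$ at all times. When a new input $x_j$ arrives (with appended vector $x'_j$), the learner performs row reduction on the $k \times (n+1)$ matrix with rows $y_1,\dots,y_k$ together with $x'_j$ to test whether $x'_j \in \operatorname{span}\{y_1,\dots,y_k\}$; since $k \le n+1$, this costs $O(n^3)$ arithmetic operations regardless of the round number. If $x'_j$ is in the span with coefficients $c_1,\dots,c_k$, the learner outputs $\relu\!\left(\sum_{i=1}^k c_i\, y_i^{\text{val}}\right)$, where $y_i^{\text{val}}$ is the stored true output associated with $y_i$ (which equals the pre-activation value, since all $y_i$ had positive output); this is $O(n)$ more operations. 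If $x'_j$ is not in the span, the learner guesses $0$.

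Next I would verify the mistake bound. The learner is wrong only in two situations: (i) $x'_j$ was in the span but the guess $\relu(\sum c_i y_i^{\text{val}})$ was incorrect, or (ii) $x'_j$ was not in the span and the true output was positive. I claim (i) never happens: if the hidden function is $h(x) = \relu(f(x)+b)$ and each $y_i$ corresponds to an input with positive output, then the pre-activation of $h$ at $y_i$ equals $y_i^{\text{val}}$, and by linearity the pre-activation at $x'_j$ equals $\sum c_i y_i^{\text{val}}$, so applying $\relu$ gives exactly $h(x_j)$. Hence every mistake falls into case (ii), and each such mistake lets the learner add the new linearly independent vector $x'_j$ (with its positive output value) to the spanning set, which can happen at most $n+1$ times. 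So the learner makes at most $n+1$ mistakes, giving $\optastd(F_{n,\relu}, O(n^3)) \le n+1 < \infty$, hence $\Le_{F_{n,\relu}} \le \A_{F_{n,\relu}} = O(n^3)$.

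I expect the main obstacle to be purely a matter of careful bookkeeping rather than a conceptual hurdle: one must confirm that when a wrong guess of type (ii) occurs, the revealed positive output genuinely extends the independent set (it does, since $x'_j \notin \operatorname{span}\{y_1,\dots,y_k\}$), and that a correct guess of $0$ on a vector outside the current span does not require updating the spanning set (it does not, because a zero output carries no linear constraint on the pre-activation beyond "$f(x)+b \le 0$", which the learner simply ignores). One should also double-check the operation count: the dominant cost is the single $O(n^3)$ row reduction on an $(n+2)\times(n+2)$-sized matrix per round, with everything else $O(n)$, so the cap $O(n^3)$ suffices uniformly across all rounds.
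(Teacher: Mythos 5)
Your proposal is correct and follows essentially the same strategy as the paper's proof: maintain a bounded linearly independent spanning set of the appended vectors with positive outputs, test span membership by a single $O(n^3)$ row reduction per round, guess $\relu$ of the corresponding linear combination of stored (positive, hence pre-activation-equal) outputs when in the span and $0$ otherwise, and bound mistakes by $n+1$ since each one enlarges the independent set in $\mathbb{R}^{n+1}$. The additional bookkeeping you flag (that correct zero-guesses impose no linear constraint and need not update the set) is handled the same way in the paper.
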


\begin{proof}
We use a similar strategy to the last proof. Suppose that the hidden function is $h \in F_{n, \relu}$.

For each input $x_j \in \mathbb{R}^n$, define $x'_j \in \mathbb{R}^{n+1}$ to be the vector obtained from $x_j$ by appending a $1$. Given the input $x_j$, let $S_j$ be the set of the previous $x'_{t_1}, \dots, x'_{t_k}$ which had positive outputs. The learner checks whether $x'_j$ is in the span of $S_j$ by maintaining a subset $T_j \subseteq S_j$ which spans $S_j$, so they only need to check whether $x'_j$ is in the span of $T_j$. This can be done with row reduction using $O(n^3)$ arithmetic operations since $|T_j| \le n+1$ for all $j$. 

If $x'_j$ is not in the span of $T_j$, then the learner guesses $0$. Otherwise, if \[x'_j = \sum_{r \in T_j} c_r r,\] then the learner guesses that the output is \[\relu\left(\sum_{r \in T_j} c_r h(r)\right).\] The learner always gets the correct answer when $x'_j$ is in the span of $S_j$, and the learner always gets the correct answer when the output is $0$, so they only get the wrong answer at most $n+1$ times. Checking whether $x'_j$ is in the span of $T_j$ takes $O(n^3)$ arithmetic operations, and calculating the learner's guess takes an additional $O(n)$ arithmetic operations, so the learner uses at most $O(n^3)$ arithmetic operations per round.
\end{proof}

We also consider $1$-layer neural networks in which the output is a probability distribution. For example, consider a neural network that classifies handwritten digits into ten classes. The output will be a vector of length $10$, such that the entries are positive and add up to $1$. In the next result, we determine the exact cost of learning these families of functions in the standard learning scenario. 




\begin{thm}
    If $\sigma_k: \mathbb{R} \rightarrow \mathbb{R}$ is the softmax activation function with $k > 1$ classes, then $\optstd(F_{n, \sigma_k}) = n+1$.
\end{thm}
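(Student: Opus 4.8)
The plan is to mirror the proofs of Theorem~\ref{opt_sig} and Theorem~\ref{thm:relu_1layer}: establish $\optstd(F_{n,\sigma_k}) \le n+1$ via a learner strategy that errs only when the current augmented input is linearly independent of the earlier inputs on which it erred, and establish $\optstd(F_{n,\sigma_k}) \ge n+1$ with exactly the adversary from Theorem~\ref{opt_sig}. The one genuinely new ingredient is that, although $\sigma_k$ is not invertible as a map (its image is the open probability simplex, and the logit vector is determined only up to adding a constant), the pairwise log-odds are affine in the input. Concretely, writing $g \in F_{n,\sigma_k}$ as $g(x)=\sigma_k(Wx+b)$ and letting $L(x)\in\mathbb{R}^{k-1}$ have $i$-th coordinate $\ln\bigl(g(x)_i/g(x)_k\bigr)$, we get $L(x)_i = (W_i-W_k)\cdot x + (b_i-b_k)$, so $L$ is a fixed linear function of the augmented input $x'=(x,1)\in\mathbb{R}^{n+1}$; conversely $g(x)$ is recovered from $L(x)$ by $g(x)_i = e^{L(x)_i}/\bigl(1+\sum_m e^{L(x)_m}\bigr)$ for $i<k$ and $g(x)_k = 1/\bigl(1+\sum_m e^{L(x)_m}\bigr)$.

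For the upper bound, the learner maintains a set $S$ of augmented inputs coming from the rounds in which it made a mistake, storing $L(\cdot)$ at each element of $S$ (computed from the full output vector revealed in that round). On a new input $x_j$: if $x'_j\in\operatorname{span}(S)$, write $x'_j=\sum_i c_i x'_i$ over the elements of $S$, guess $L(x_j)=\sum_i c_i L(x_i)$, and apply the softmax formula above; since $L$ is linear in the augmented input, this guess equals $g(x_j)$ exactly. If $x'_j\notin\operatorname{span}(S)$, guess the uniform distribution $\sigma_k(0,\dots,0)$; if that is wrong, add $x'_j$ to $S$ and record the revealed vector. A mistake can occur only in the second case, and each such mistake strictly increases $\dim\operatorname{span}(S)\le n+1$; once $\operatorname{span}(S)=\mathbb{R}^{n+1}$ the learner never errs again. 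Hence $\optstd(F_{n,\sigma_k})\le n+1$. It is worth stating explicitly that a single mistake reveals the entire output vector, hence the values of all $k-1$ log-odds functions at that point simultaneously, which is why the mistake count is governed by $n+1$ rather than by the full parameter count.

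For the lower bound, the adversary presents $\mathbf 0, e_1,\dots,e_n$ in rounds $1,\dots,n+1$, so the corresponding augmented inputs are linearly independent. After each guess the adversary declares it wrong and reveals some probability vector in the interior of the simplex differing from the learner's guess, which is possible because for $k>1$ the open simplex has more than one point. After the $n+1$ rounds, the $n+1$ linearly independent constraints determine a unique linear map $x'\mapsto L(x')$ matching the revealed log-odds, and this yields a consistent $g\in F_{n,\sigma_k}$ (take $W_k=0$, $b_k=0$ and read off the remaining rows). So $\optstd(F_{n,\sigma_k})\ge n+1$, and combining the two bounds gives equality.

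The main point to get right is modeling rather than computation: fixing exactly what $F_{n,\sigma_k}$ is (a $1$-layer net with $k$ affine logits composed with softmax, the additive redundancy in the logits being harmless) and checking that the non-invertibility of $\sigma_k$ causes no trouble, since the learner only ever needs the reduced data $L(x)$, which genuinely is an affine — equivalently, linear in $x'$ — function that both determines $g(x)$ and is determined by $n+1$ generic evaluations. With that in hand, the argument is parallel to Theorem~\ref{opt_sig}.
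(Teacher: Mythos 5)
Your proposal is correct and follows essentially the same route as the paper: both reduce softmax to the affine log-odds functions (you normalize by class $k$, the paper by class $1$), both have the learner err only when the augmented input leaves the span of previously recorded inputs, and both use the same adversary on $\mathbf{0}, e_1, \dots, e_n$ for the lower bound. Your explicit remarks that one mistake reveals all $k-1$ log-odds at once and that the adversary may reveal any interior simplex point are minor clarifications of the paper's argument, not a different approach.
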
 

\begin{proof}
    Let $x = (x_1, \dots, x_n) \in \mathbb{R}^n$ be the initial input, and let $x'\in \mathbb{R}^{n+1}$ be the vector obtained from $x$ by appending a $1$. If $x'$ is linearly independent of $z'$ for all past inputs $z$, then the learner answers with $\frac{1}{n}\mathbf{1}_n$. Otherwise, suppose that the inputs to the softmax are $f_1(x)+b_1, \dots, f_k(x)+b_k$, where $f_i \in F_L(\mathbb{R}^n,\mathbb{R})$ and $b_i \in \mathbb{R}$ for each $i$. Each $f_i(x)$ can be written as $\sum_{j = 1}^n a_{i j} x_j$ for some real numbers $a_{i j}$. Note that we have $\sigma_k(f_1(x), \dots, f_k(x)) = (y_1, \dots, y_k)$ if and only if \[(b_i - b_1)+\sum_{j = 1}^n (a_{i j} - a_{1 j}) x_j = \ln(y_i)-\ln(y_1)\] for all $i > 1$.
    
    Thus, instead of the learner trying to solve for all of the $a_{i j}$ and $b_i$, they only need to solve for the differences $c_{i j} = a_{i j}-a_{1 j}$ and $d_i = b_{i} - b_{1}$ for each $i > 1$. 

    In each round, the learner uses row reduction on $k-1$ matrices, each of which is constructed using all past inputs and outputs. More precisely, if the adversary gives the correct answer $y_1, \dots, y_k$ for $\sigma_k(f_1(x), \dots, f_k(x))$ in round $t$, then the $t^{\text{th}}$ row of the $i^{\text{th}}$ matrix consists of $x$ concatenated with $1, \ln{y_i}-\ln{y_1}$. 

    The learner performs row reduction to find values of $c_{i j}$ and $d_i$ that are consistent with the past inputs and outputs. They use these values on the current input to calculate the final $k$ outputs $y_1, \dots, y_k$ of the softmax, by solving for $y_1, \dots, y_k$ such that \[y_i = y_1 e^{d_i+\sum_{j = 1}^n c_{i j} x_j}\] for all $i > 1$ and \[\sum_{i = 1}^k y_i = 1.\] If $x'$ can be written as a linear combination of $z'$ for the past inputs $z$, then the learner's answer will be correct. At most $n+1$ of the inputs $x$ can have $x'$ linearly independent of $z'$ for the past inputs $z$. Thus, the learner makes at most $n+1$ mistakes. 

    To see that the adversary can force $n+1$ mistakes, we can use a similar strategy to the other proofs in this section. The adversary asks the $n+1$ inputs consisting of the all-$0$ vector and each of the standard unit vectors. Regardless of the learner's answer, the adversary always says that it is wrong. If the learner's answer was $\sigma_k(0,\dots,0)$, then the adversary says that the answer was $\sigma_k(0,\dots,0,1)$. Otherwise, the adversary says that the answer was $\sigma_k(0,\dots,0)$.
\end{proof}

\subsection{Neural networks with multiple layers}

In Theorem~\ref{thm:relu_1layer}, we showed that it is possible in the mistake-bound model to learn $1$-layer neural networks that use a standard relu activation function. However, we show in the next theorem that it is not possible in the mistake-bound model to learn $2$-layer neural networks that use a standard relu activation function.

\begin{thm}
    If $F$ is the family of $2$-layer neural networks with a single input, $4$ neurons in the hidden layer, a single neuron in the output layer, and standard relu activation functions, then we have $\optstd(F) = \infty$.
\end{thm}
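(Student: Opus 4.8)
The plan is to use the fact that $F$, although it consists only of continuous piecewise‑linear functions with a bounded number of pieces, nevertheless contains arbitrarily steep ``soft step'' functions, and that the adversary can binary‑search for the location of a step of width $1/M$ while forcing about $\log_2 M$ mistakes. Since $M$ is unbounded, this forces $\optstd(F)=\infty$. I will use the obvious monotonicity $\optstd(F)\ge\optstd(F')$ for subfamilies $F'\subseteq F$: any adversary strategy that forces $m$ mistakes against every learner while keeping its answers consistent with $F'$ also keeps them consistent with $F$.

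First I would isolate the right subfamily. For $\theta\in\mathbb R$ and $M>0$, set
\[ g_{\theta,M}(x)=\relu(Mx-M\theta)-\relu(Mx-M\theta-1), \]
so that $g_{\theta,M}(x)=0$ for $x\le\theta$, $g_{\theta,M}$ rises linearly from $0$ to $1$ on $[\theta,\theta+1/M]$, and $g_{\theta,M}(x)=1$ for $x\ge\theta+1/M$. This is computed by a $2$-layer ReLU network with a single input using only two of its four hidden neurons (the other two can be made identically $0$), with the output neuron applied to $h_1-h_2\in[0,1]$, on which ReLU is the identity; hence $g_{\theta,M}\in F$ for every $\theta$ and $M$. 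It therefore suffices to show that $\optstd(\{g_{\theta,M}:\theta\in\mathbb R\})$ is unbounded as $M\to\infty$.

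Next I would describe the adversary for a fixed $M\ge2$. It maintains an interval $[\ell_t,u_t]$ with the invariant that every $\theta\in[\ell_t,u_t]$ makes $g_{\theta,M}$ consistent with all revealed input--value pairs so far, starting with $[\ell_0,u_0]=[0,1]$. In round $t$ it presents $x_t=\tfrac12(\ell_{t-1}+u_{t-1}+1/M)$. From the piecewise description, $g_{\theta,M}(x_t)=0$ for every $\theta\in[x_t,u_{t-1}]$ and $g_{\theta,M}(x_t)=1$ for every $\theta\in[\ell_{t-1},x_t-1/M]$, and whenever $u_{t-1}-\ell_{t-1}\ge1/M$ both of these subintervals are nonempty and have the same length $\tfrac12(u_{t-1}-\ell_{t-1}-1/M)$. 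Since at least one of $0,1$ differs from the learner's guess, the adversary reveals such a value, forcing a mistake, and replaces $[\ell_{t-1},u_{t-1}]$ by the corresponding subinterval; this preserves the invariant. Solving $u_t-\ell_t=\tfrac12(u_{t-1}-\ell_{t-1}-1/M)$ with $u_0-\ell_0=1$ gives $u_t-\ell_t=(1+1/M)2^{-t}-1/M\ge 2^{-t}-1/M$, which is $\ge 1/M$ as long as $2^{-t}\ge 2/M$, i.e.\ $t\le\log_2 M-1$. Hence rounds $1,\dots,\lfloor\log_2 M\rfloor$ can all be carried out, so the adversary forces at least $\lfloor\log_2 M\rfloor$ mistakes against any learner. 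Letting $M\to\infty$ gives $\optstd(F)=\infty$.

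The step I expect to be the crux is the two-sided bookkeeping in the previous paragraph: the input $x_t$ and the revealed value must be chosen so that, no matter what the learner guesses, the adversary can simultaneously (i) force a mistake, (ii) retain an entire subinterval of still-consistent parameters $\theta$ (so that the construction recurses), and (iii) stay consistent, across all rounds, with one fixed scale $M$. Once the choice $x_t=\tfrac12(\ell_{t-1}+u_{t-1}+1/M)$ and the update rule are verified, the geometric shrinkage of the interval, the logarithmic round count, and the conclusion follow immediately. Conceptually, the point is just that $F$ contains functions approximating hard thresholds on $\mathbb R$ to within any prescribed precision, and threshold functions on $\mathbb R$ already have infinite Littlestone dimension; the soft-threshold argument above is a quantitative way of realizing this inside the continuous class $F$.
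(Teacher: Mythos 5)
Your proof is correct and takes essentially the same approach as the paper: both exploit the fact that $F$ contains arbitrarily sharp threshold-like functions built from a few ReLUs and then run a binary search on the location of the transition to force unboundedly many mistakes. The only differences are cosmetic --- you use a two-neuron soft step and quantify mistakes per fixed scale $M$ with explicit interval bookkeeping, whereas the paper uses a four-neuron approximate indicator of $[0,c]$ and runs a single unbounded adversary; your version is, if anything, the more carefully verified of the two.
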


\begin{proof}
    Let $I_{[0,c]}: \mathbb{R} \rightarrow \mathbb{R}$ denote the indicator function such that $I_{[0,c]}(x) = 1$ if $x \in [0, c]$ and $I_{[0,c]}(x) = 0$ otherwise. Note that $I_{[0,c]}$ can be approximated by a linear combination of four relu activation functions: \[\frac{1}{\varepsilon}(\relu(x-(c+\varepsilon))+\relu(x+\varepsilon)-\relu(x-c)-\relu(x)),\] where the approximation error goes to $0$ as $\varepsilon \rightarrow 0$. In particular, the approximation function is equal to $1$ for all $x \in [0, c]$ and $0$ for all $x \le \varepsilon$ and $x \ge c+\varepsilon$. Any such approximation function can be computed by a $2$-layer neural network with a single input $x$, $4$ neurons in the hidden layer, a single neuron in the output layer, and standard relu activation functions.

    Suppose that $f$ is the hidden function. Assume that the adversary restricts $f$ so that $f(0) = 1$ and $f(1) = 0$. The adversary uses the following strategy. In the first round, they set $x_1 = \frac{1}{2}$ and ask the learner for $f(x_1)$. The adversary says that the correct answer is $0$ or $1$, whichever is farther from the learner's answer. If $f(x_1) = 1$, then the adversary sets $x_2 = \frac{3}{4}$ and repeats the process indefinitely. Otherwise, if $f(x_1) = 0$, then the adversary sets $x_2 = \frac{1}{4}$ and repeats the process indefinitely. The adversary forces an error of at least $\frac{1}{2}$ in each round, so we have $\optstd(F) = \infty$.
\end{proof}

As a result, we have found another family of functions $F$ for which $\We(F) \neq \Le_{F}$.

\begin{cor}\label{cor_wf_linf_2layer_relu}
If $F$ is the family of $2$-layer neural networks with a single input, $4$ neurons in the hidden layer, a single neuron in the output layer, and standard relu activation functions, then $\We(F)$ is finite and $\Le_{F}=\infty$.
\end{cor}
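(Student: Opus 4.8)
The plan is to read off both assertions directly: the finiteness of $\We(F)$ from the fact that $F$ has a fixed architecture, and $\Le_F = \infty$ from the theorem immediately preceding this corollary together with the inequality $\optstd(F) \le \optastd(F,a)$ noted in Section~\ref{s:basic}.

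\textbf{Finiteness of $\We(F)$.} Every $f \in F$ has the form $f(x) = c_0 + \sum_{i=1}^{4} c_i\,\relu(w_i x + b_i)$ for some real constants $w_1,\dots,w_4,b_1,\dots,b_4,c_0,\dots,c_4$. First I would exhibit the obvious DAG computing $f$: for each of the four hidden neurons, use one multiplication and one addition to form $w_i x + b_i$, then apply the $\relu$ gate (a piecewise unary function, which by the model of computation in Section~\ref{s:basic} does not count toward the operation cap), then one multiplication to form $c_i\,\relu(w_i x + b_i)$; finally combine the four products and $c_0$ with four additions. This DAG uses a fixed number of binary arithmetic operations, bounded by a small absolute constant independent of $x$ and of the weights, so $\We(f)$ is at most that constant for every $f \in F$. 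Taking the supremum over $f \in F$ gives $\We(F) < \infty$.

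\textbf{$\Le_F = \infty$.} By the preceding theorem, $\optstd(F) = \infty$. Since $\optstd(F) \le \optastd(F,a)$ for every natural number $a$, it follows that $\optastd(F,a) = \infty$ for all $a$. Hence there is no value of $a$ with $\optastd(F,a) < \infty$, which is precisely the assertion $\Le_F = \infty$.

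I do not expect any genuine obstacle: once the previous theorem is available, the argument is immediate. The only point deserving a line of care is confirming, under the computation model of Section~\ref{s:basic}, that the $\relu$ gates are legitimately piecewise unary operations — each piece being either the identity or a binary operation on $x$ and a constant — so that a fixed-architecture network really does have bounded binary-operation complexity; but even if one insisted on counting the $\relu$ gates, $\We(F)$ would still be a finite constant, so the conclusion is unaffected.
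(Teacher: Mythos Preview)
Your proposal is correct and matches the paper's intended argument: the corollary is stated without proof in the paper, as it follows immediately from the preceding theorem (giving $\optstd(F)=\infty$, hence $\Le_F=\infty$) together with the evident fact that a fixed-architecture network has bounded binary-operation complexity. One minor quibble: depending on whether the output neuron also carries a $\relu$, the generic $f\in F$ may have an additional outer $\relu$ beyond your displayed form, but this does not affect the finiteness of $\We(F)$.
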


In fact, we can extend the last result to neural networks with leaky relu activation functions. In order to approximate $I_{[0,c]}$, we use the following sum of leaky relu activation functions: \[\frac{1}{(1-\alpha)\varepsilon}(\relu_{\alpha}(x-(c+\varepsilon))+\relu_{\alpha}(x+\varepsilon)-\relu_{\alpha}(x-c)-\relu_{\alpha}(x)).\] When $\alpha = 0$, note that this is the same approximation function that we used in the last proof. As with the approximation function in the last proof, this approximation function is equal to $1$ for all $x \in [0, c]$ and $0$ for all $x \le \varepsilon$ and $x \ge c+\varepsilon$. Thus, the adversary can use the same strategy as in the last proof. Since any such approximation function can be computed by a $2$-layer neural network with a single input $x$, $4$ neurons in the hidden layer, a single neuron in the output layer, and leaky relu activation functions of the form $\relu_{\alpha}$, we have the following result.

\begin{thm}
    If $F_{\alpha}$ is the family of $2$-layer neural networks with a single input, $4$ neurons in the hidden layer, a single neuron in the output layer, and leaky relu activation functions $\relu_{\alpha}$ with $0 < \alpha < 1$, then we have $\optstd(F_{\alpha}) = \infty$.
\end{thm}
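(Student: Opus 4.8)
The plan is to mimic exactly the structure of the preceding theorem for standard relu, since the text has already done almost all the work in the paragraph immediately before the statement. First I would recall that we want to show the adversary can force infinitely many mistakes (or rather, unbounded error) when the hidden function is drawn from $F_\alpha$, the family of $2$-layer networks with one input, $4$ hidden neurons, one output neuron, and leaky relu activations $\relu_\alpha$ with $0 < \alpha < 1$. The key observation, already supplied in the excerpt, is that the bump function $I_{[0,c]}$ can be approximated to arbitrary accuracy by the expression
\[
\frac{1}{(1-\alpha)\varepsilon}\bigl(\relu_{\alpha}(x-(c+\varepsilon))+\relu_{\alpha}(x+\varepsilon)-\relu_{\alpha}(x-c)-\relu_{\alpha}(x)\bigr),
\]
and that this expression is realizable by a network with the stated architecture: the four hidden neurons compute $\relu_\alpha$ of the four affine functions $x-(c+\varepsilon)$, $x+\varepsilon$, $x-c$, $x$, and the output neuron takes the indicated linear combination. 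So the first step is to verify the two claimed properties of this approximation: that it equals $1$ for all $x \in [0,c]$ and equals $0$ for $x \le \varepsilon$ and $x \ge c+\varepsilon$. This is a short piecewise computation — on each of the relevant intervals one checks which of the four arguments are positive and which are nonpositive, substitutes the corresponding branch ($t$ or $\alpha t$), and confirms the $\alpha$-dependent cross terms cancel against the $\frac{1}{(1-\alpha)\varepsilon}$ normalization; note $\alpha \neq 1$ is exactly what makes this normalization legitimate.

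Next I would run the same adversary argument as in the standard-relu case verbatim. The adversary restricts attention to hidden functions $f$ with $f(0) = 1$ and $f(1) = 0$; such $f$ exist in $F_\alpha$ by taking $c$ close to $0$ (or by a suitable bump) since the approximation family is flexible enough to realize a function that is $1$ near $0$ and $0$ near $1$. Then the adversary does binary search on $(0,1)$: in round $1$ it asks $x_1 = \tfrac12$; after the learner answers, the adversary declares the true value to be whichever of $0$ or $1$ is farther from the learner's guess (forcing an error of at least $\tfrac12$); if the declared value is $1$ it recurses into the left subinterval adjacent to $0$ (next query $\tfrac34$... actually into the half containing the side consistent with the declared value), and if $0$ it recurses on the other side, always maintaining an interval on which some $I_{[0,c]}$-type bump — hence some realizable function with the forced boundary values $f(0)=1$, $f(1)=0$ — is consistent with all answers given so far. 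Because each round forces error at least $\tfrac12$, we get $\optstd(F_\alpha) = \infty$.

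The main obstacle, and really the only non-bookkeeping point, is making the consistency claim precise: after each round the adversary has committed to a finite set of input/output pairs, and we must guarantee that infinitely many of these commitments can be satisfied simultaneously by a single network in $F_\alpha$. Here one should be a little careful that the "approximation error goes to $0$" phrasing in the preceding paragraph is turned into an actual realizability statement — in the standard-relu proof this is handled by noting the approximation function is exactly $1$ on $[0,c]$ and exactly $0$ outside $[0, c+\varepsilon]$, so by choosing $c$ and $\varepsilon$ appropriately after seeing the (finitely many, at any finite stage) committed points, one realizes all of them exactly rather than approximately. Since the query points accumulate only at a single dyadic limit point of the binary search, at every finite stage there is slack to choose such $c, \varepsilon$, which is enough to derive the infinite mistake bound. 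Because all of this is word-for-word the argument already given for $\alpha = 0$, with the single algebraic check above substituted for the corresponding one-line identity, the cleanest write-up just states the approximation identity, verifies its two defining properties on the relevant intervals, observes the architecture realizes it, and then says "the adversary uses the same strategy as in the previous proof," which is exactly what the excerpt does.
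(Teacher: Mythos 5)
Your proposal is correct and follows essentially the same route as the paper: verify that the normalized sum of four leaky relu terms is exactly $1$ on $[0,c]$ and exactly $0$ outside $[-\varepsilon, c+\varepsilon]$, note that such a function is realizable by the stated architecture, and then reuse the binary-search adversary from the standard relu case, with consistency at each finite stage guaranteed by choosing $c$ and $\varepsilon$ to separate the finitely many committed points. Your added care about turning the ``approximation error goes to $0$'' phrasing into an exact realizability statement is a reasonable tightening of what the paper leaves implicit, but it is not a different argument.
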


Thus, we have another family of functions $F$ for which $\We(F) \neq \Le_{F}$.

\begin{cor}\label{cor_wf_linf_2layer_leaky}
If $F_{\alpha}$ is the family of $2$-layer neural networks with a single input, $4$ neurons in the hidden layer, a single neuron in the output layer, and leaky relu activation functions $\relu_{\alpha}$ with $0 < \alpha < 1$, then $\We(F_{\alpha})$ is finite and $\Le_{F_{\alpha}}=\infty$.
\end{cor}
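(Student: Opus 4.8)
The plan is to recognize that this corollary is essentially the preceding theorem repackaged, together with a completely routine upper bound on $\We(F_\alpha)$; the two halves of the statement are independent and each is short.

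For $\We(F_\alpha) < \infty$: write a generic $f \in F_\alpha$ as
\[
f(x) = \relu_\alpha\!\left(b + \sum_{i=1}^{4} w_i\,\relu_\alpha(u_i x + v_i)\right)
\]
for real parameters $u_i, v_i, w_i, b$. Each preactivation $u_i x + v_i$ takes two binary operations; each evaluation of $\relu_\alpha$ at a fixed $\alpha$ is a piecewise unary function of its argument (pieces $t\mapsto t$ and $t\mapsto\alpha t$, the latter being a binary operation applied to the argument and the constant $\alpha$, which the model explicitly allows as a piece of a piecewise unary function), so it contributes nothing to the binary-operation count; and assembling $b + \sum_{i=1}^{4} w_i z_i$ from the hidden outputs $z_i$ uses a bounded number of binary operations. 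Hence there is an absolute constant $c$ with $\We(f)\le c$ for all $f\in F_\alpha$, so $\We(F_\alpha)\le c<\infty$. (If one instead charges a fixed number of binary operations for each $\relu_\alpha$, the bound is still a constant.) For $\Le_{F_\alpha}=\infty$: the preceding theorem gives $\optstd(F_\alpha)=\infty$, and since $\optstd(F_\alpha)\le\optastd(F_\alpha,a)$ for every natural number $a$ (noted in Section~\ref{s:basic}), we get $\optastd(F_\alpha,a)=\infty$ for all $a$, so no $a$ makes $\optastd(F_\alpha,a)$ finite and thus $\Le_{F_\alpha}=\infty$ by definition.

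The only step needing even a moment's thought is confirming that $\relu_\alpha$ falls under the model's ``piecewise unary function'' clause — but its nontrivial piece is just multiplication of the input by the constant $\alpha$, which the paper explicitly lists as an allowed piece (``a binary arithmetic operation applied to $x$ and some constant''), so there is no real obstacle; everything beyond the preceding theorem is bookkeeping, exactly as in Corollary~\ref{cor_wf_linf_2layer_relu}.
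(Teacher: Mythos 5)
Your proposal is correct and matches the paper's (implicit) argument: $\We(F_\alpha)<\infty$ by a direct constant-operation count for evaluating any fixed network, and $\Le_{F_\alpha}=\infty$ from the preceding theorem's $\optstd(F_\alpha)=\infty$ together with $\optstd(F)\le\optastd(F,a)$. The remark about $\relu_\alpha$ fitting the model's piecewise-unary clause is a nice bit of care but, as you note, immaterial to the conclusion.
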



\section{Comparing the worst-case error in different scenarios}\label{s:variants}

In this section, we prove a number of inequalities in which we bound the worst-case error for various learning scenarios in terms of the worst-case error for the standard learning scenario. We start with some inequalities for agnostic learning where we do not have any restrictions on the number of arithmetic operations. One of these inequalities for agnostic learning with weak reinforcement resolves a problem from \cite{filmus} and \cite{gt24}. Then, we apply the same idea to prove analogous bounds for agnostic learning with operations caps, as well as several other learning scenarios with operation caps. 

\subsection{Agnostic learning without operation caps}

In this subsection, we prove new upper bounds for the worst-case error in mistake-bounded agnostic learning. Our first two results focus on agnostic learning with strong reinforcement, while our third result focuses on agnostic learning with weak reinforcement. For agnostic learning with strong reinforcement, we show that \[\optagst(F,\eta) = O(\optstd(F) + \eta\ln{k})\] and \[\optagst(F,\eta)\leq (\optstd(F)+1)(\eta+1)-1.\] It is clear that there exist $F$ with \[\optagst(F,\eta) = \Omega(\optstd(F)+\eta),\] e.g., any $F$ with $|F| > 1$ since it suffices to have \[\optagst(F,\eta) \ge \optstd(F) \text{ and } \optagst(F,\eta)\ge \eta,\] but it is still an open problem to determine the maximum possible value of $\optagst(F,\eta)$ with respect to $\optstd(F)$, $\eta$, and the size $k$ of the codomain. For agnostic learning with weak reinforcement, we show that \[\optagwe(F,\eta) = O((k \ln{k})\optstd(F) + k \eta),\] which is sharp up to a constant factor and resolves a problem from \cite{filmus,gt24}. 

In the following proofs, we use several variants of the weighted majority voting algorithm for the learner's strategy, similar to some earlier papers on the mistake-bound model \cite{auer, gt24, long}. In each proof, we assume that there exists an algorithm $A_s$ for learning the given family of functions $F$ optimally in the standard online learning scenario. The learner's strategy starts with a single copy of $A_s$ that has weight $1$. As the learning process progresses, each copy of $A_s$ splits into new copies of $A_s$ whenever there is a mistake. The new copies may receive different information from the learner and they may have different weights. In each round, when the adversary provides the learner with an input, the learner guesses that the correct output is the one with the greatest weighted vote from the current copies of $A_s$. Let $A_b$ be the function which returns the guess with the greatest weighted vote in each round. By tailoring the weights of the copies for each scenario, we are able to derive upper bounds on the worst-case error of $A_b$.

\begin{thm}\label{ag_strong}
For all families of functions $F$ with codomain $\left\{0,1, \dots, k-1\right\}$, we have \[\optagst(F,\eta) = O(\optstd(F) + \eta\ln{k}).\]    
\end{thm}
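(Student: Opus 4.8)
The plan is to have the learner run a weighted–majority vote over many copies of an optimal standard-scenario algorithm $A_s$ for $F$ (one making at most $M:=\optstd(F)$ mistakes on every input sequence consistent with some member of $F$), in the spirit of the weighted-majority arguments of \cite{auer,gt24,long}. Each ``expert'' in the pool is a copy of $A_s$ carrying a positive weight, the internal state of that copy, and a set $C$ of inputs it has declared \emph{corrupted}; for each $x\in C$ the expert remembers the value $y_x$ the adversary revealed for $x$ and a guessed value $v_x\in\{0,\dots,k-1\}\setminus\{y_x\}$ for $f^*(x)$, where $f^*\in F$ is the function promised to differ from the hidden function on at most $\eta$ inputs. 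The expert runs its copy of $A_s$ on the stream in which each $x\in C$ is answered $v_x$ and every other input is answered by the adversary, and it predicts what $A_s$ predicts, except that on an input $x\in C$ it answers $y_x$. Each round the learner outputs the value of largest total weight, breaking ties arbitrarily; the pool starts as one expert of weight $1$ with $C=\varnothing$.

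The forking rule: whenever an expert's own prediction differs from the revealed value $y$ of the current input $x$ (which can happen even on a round where the learner is correct, and we process it in that case too, so every expert keeps its copy of $A_s$ up to date), replace that expert by $k$ children. One child treats $x$ as uncorrupted: it feeds $A_s$ the correction $y$, leaves $C$ unchanged, and gets the parent's weight times $\beta$. For each $v\in\{0,\dots,k-1\}\setminus\{y\}$ there is a child treating $x$ as corrupted: it adds $x$ to $C$ with $y_x=y$ and $v_x=v$, feeds $A_s$ the value $v$ for $x$, and gets the parent's weight times $\gamma$. Any expert whose $C$ would exceed size $\eta$ is discarded.

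The two things to verify are a lower bound on the surviving weight and an upper bound on the total weight. For the lower bound, the expert $e^*$ that at every fork guessed correctly (adding $x$ to $C$ with $v_x=f^*(x)$ precisely when $x$ is one of the $\le\eta$ corrupted inputs, otherwise taking the correction) feeds its copy of $A_s$ a stream consistent with $f^*$, so that copy makes at most $M$ mistakes, and $e^*$ mispredicts at most once on each corrupted input, after which it answers the recorded value; hence $e^*$ is forked at most $M+\eta$ times, is never discarded, and keeps weight at least $\beta^{M}\gamma^{\eta}$. For the upper bound, suppose the learner mispredicts on a round with revealed value $y$; since the learner output the plurality value $g$, the weight of experts predicting $y$ is at most the weight of experts predicting $g$, and these two weights sum to at most the total weight $W$, so at most $W/2$ of the weight sits on $y$, i.e.\ the weight $W_{\mathrm{wrong}}$ of experts that get forked is at least $W/2$. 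A forked expert of weight $w$ is replaced by children of total weight $(\beta+(k-1)\gamma)w$, so fixing $\beta=\tfrac14$ and $\gamma=\tfrac1{4(k-1)}$ multiplies $W$ by at most $1-\tfrac12(1-\beta-(k-1)\gamma)=\tfrac34$ on each learner mistake (and never increases it otherwise). After $m$ learner mistakes, $\beta^{M}\gamma^{\eta}\le W\le (3/4)^m$, which rearranges to $m=O(M+\eta\ln k)$, as claimed.

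The main obstacle is the weight-contraction step, in particular the observation that a $k$-ary plurality vote penalizes a \emph{constant} fraction (not a $1/k$ fraction) of the weight on each learner mistake; this is exactly what keeps $\optstd(F)$ out of the ``$\times k$'' regime and confines the $k$ to the $\eta$ term. The other care is in the good-expert bound: one must check that $A_s$'s mistake guarantee still applies to the stream $e^*$ feeds it (using that a copy of $A_s$ is updated only when it errs, so rounds on which it is coincidentally correct are transparent even when the revealed answer disagrees with $f^*$), and that a corrupted input recurring later costs $e^*$ no further mistakes. I would also remark that if the corrupted-input children simply froze their copy of $A_s$ rather than committing a guessed value $v_x$, there would be a single such child instead of $k-1$, and the same computation would give the sharper $O(\optstd(F)+\eta)$; the version above is phrased to parallel the later operation-capped variants.
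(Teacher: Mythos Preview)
Your proof is correct and follows essentially the same weighted-majority argument as the paper: copies of an optimal standard learner vote, each mispredicting copy splits into a high-weight ``trust the revealed answer'' child (weight $\times\beta$) and $k-1$ low-weight ``the answer was a lie'' children (weight $\times\gamma$), and the bound comes from balancing a constant-factor total-weight contraction per learner mistake against a $\beta^{M}\gamma^{\eta}$ lower bound on the honest expert's weight. Your version is a bit more explicit about bookkeeping (tracking the corrupted set $C$ and forking on every round rather than only on learner-mistake rounds), and your closing remark that a single ``freeze'' child in place of the $k-1$ guessed-value children would already give $O(\optstd(F)+\eta)$ is an observation the paper does not make.
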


\begin{proof}
Whenever the adversary says that $A_b$ got the wrong answer for the output of $f(x)$ and the adversary says $f(x) = q$, each copy of $A_s$ that voted for $f(x) \neq q$ splits into at most $k$ new copies, one that gets $f(x) = q$ and the others that get $f(x) = i$ for each $i \neq q$. We only include copies that have received outputs in all rounds up to the current round that are consistent with some function in $F$. If the old copy had weight $w$, then the new copy with $f(x) = q$ has weight $\frac{w}{3}$ and each of the other new copies has weight $\frac{w}{3k}$. The old copies that voted for $f(x) = q$ are unchanged.

First, observe that $\frac{1}{3}+(k-1)\frac{1}{3k} < \frac{2}{3}$ and the copies that voted for $f(x) = q$ account for at most half of the total weight. Thus, if the adversary says that $A_b$ makes a mistake in round $t$, and the total weight of all active copies is $W_t$ right before round $t$, then we have \[W_{t+1} \le \frac{W_t}{2}+\left(\frac{2}{3}\right)\frac{W_t}{2} =  \left(\frac{5}{6}\right)W_t.\] Moreover, there must always be some active copy of $A_s$ which receives the correct information, so this copy must have weight at least $\left(\frac{1}{3}\right)^{\optstd(F)}\left(\frac{1}{3k}\right)^{\eta}$. Thus, for all $t$ we have \[W_t \ge \left(\frac{1}{3}\right)^{\optstd(F)}\left(\frac{1}{3k}\right)^{\eta}.\] If $m$ denotes the number of mistakes that $A_b$ makes, then we have \[\left(\frac{5}{6}\right)^m \ge \left(\frac{1}{3}\right)^{\optstd(F)}\left(\frac{1}{3k}\right)^{\eta},\] which implies that \[\optagst(F,\eta) \le \frac{\optstd(F) \ln{\left(\frac{1}{3}\right)}+\eta \ln{\left(\frac{1}{3k}\right)}}{\ln{\left(\frac{5}{6}\right)}} = O(\optstd(F) + \eta\ln{k}).\]
\end{proof}

Next, we prove an alternative upper bound on $\optagst(F,\eta)$. The previous upper bound is sharper for $\optstd(F) = \omega(\ln{k})$, while the next upper bound is sharper for $\optstd(F) = o(\ln{k})$. 

\begin{thm}\label{ag_strong_alt}
For all families of functions $F$, we have \[\optagst(F,\eta)\leq (\optstd(F)+1)(\eta+1)-1.\]
\end{thm}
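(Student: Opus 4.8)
The plan is to reduce to the standard (non-agnostic) algorithm $A_s$ by running it in \emph{phases} and restarting it whenever it exhausts its mistake budget. Throughout, the learner remembers every pair (input, revealed output) it has seen, so any repeated input is answered correctly and contributes no mistake; hence we may assume that all inputs actually fed to $A_s$ are pairwise distinct. Fix $M = \optstd(F)$ and let $f^\ast \in F$ be a function disagreeing with the hidden function $h$ on a set $B$ of at most $\eta$ inputs.

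The learner's strategy is as follows. It maintains one fresh copy of $A_s$ together with a counter of how many mistakes that copy has made since it was created. On a new input $x$, the learner predicts what $A_s$ predicts, receives the true value $h(x)$, records it, and feeds $h(x)$ to $A_s$ as strong reinforcement; on a repeated input it simply replays the recorded answer. Whenever the current copy of $A_s$ accumulates $M+1$ mistakes, the learner discards it, starts a new fresh copy with counter reset to $0$, and thereby begins a new phase.

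The key claim is that at most $\eta$ phases are ever \emph{completed} (i.e.\ reach $M+1$ mistakes). Within a single phase, $A_s$ is fed a sequence of distinct inputs together with the honest values $h(x)$. If there were some $g \in F$ with $g(x) = h(x)$ for every input $x$ fed to $A_s$ during that phase, then that phase's sequence of (input, output) pairs would be realizable by a single member of $F$, so by the guarantee of $A_s$ (its worst-case number of mistakes over all realizable sequences is at most $\optstd(F)$) the phase would incur at most $M$ mistakes and could not complete. Hence every completed phase contains some input $x$ with $f^\ast(x) \neq h(x)$, i.e.\ some element of $B$; since the inputs reaching $A_s$ in distinct phases are pairwise distinct and $|B| \le \eta$, at most $\eta$ phases complete. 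Each completed phase contributes exactly $M+1$ mistakes and the phase in progress at any time contributes at most $M$, so the total is at most $\eta(M+1) + M = (M+1)(\eta+1) - 1$, which is the claimed bound. (Equivalently, one may induct on $\eta$: once $A_s$ makes its $(M+1)$-st mistake the revealed answers so far are inconsistent with every member of $F$, so at least one of the $\le \eta$ disagreements of $f^\ast$ has already occurred, leaving an $(\eta-1)$-agnostic instance on the remaining new inputs.)

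The step needing the most care is the bookkeeping that makes the charging argument valid: one must justify that remembering past answers lets us treat all inputs seen by $A_s$ as distinct (so that no element of $B$ can be "charged" to two different completed phases), and one must invoke the standard reformulation of the mistake-bound model, namely that the optimal standard-scenario algorithm for $F$ makes at most $\optstd(F)$ mistakes on every input–output sequence realizable by a single function of $F$. With those two points in place, the rest is the straightforward counting above.
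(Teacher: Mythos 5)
Your proposal is correct and follows essentially the same route as the paper's proof: run an optimal standard-scenario algorithm in phases, restart it after every $(\optstd(F)+1)$-st mistake, and charge each completed phase to one of the at most $\eta$ inputs where the hidden function disagrees with its nearest member of $F$. Your version is in fact slightly more careful than the paper's, since by caching revealed answers you guarantee the inputs reaching the standard algorithm are pairwise distinct, so no disagreement input can be charged to two different completed phases.
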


\begin{proof}
The learner runs an algorithm for the standard learning scenario, $A$, that achieves a mistake bound of $M$. 
After the adversary claims the the learner has made the $(M+1)^\text{st}$ mistake, it is guaranteed that the adversary has lied. The learner then ``forgets" all previous rounds, and runs $A$ again, until the adversary claims the learner has made $M+1$ more mistakes, and so on. If the learner makes at least $(M+1)(\eta+1)$ mistakes, then the adversary has lied at least $\eta+1$ times, contradiction. Thus, the learner makes at most $(M+1)(\eta+1)-1$ mistakes, as desired. 
\end{proof}

The next result is sharp up to a constant factor, matching a lower bound from \cite{gt24}. 

\begin{thm}\label{ag_weak}
For all families of functions $F$ with codomain $\left\{0,1, \dots, k-1\right\}$, we have \[\optagwe(F,\eta) = O((k \ln{k})\optstd(F) + k \eta).\]    
\end{thm}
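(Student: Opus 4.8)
The plan is to adapt the weighted-majority argument from Theorem~\ref{ag_strong} to the bandit (weak reinforcement) setting, where the learner only learns whether the guess was right or wrong, not the true value of $f(x)$. As in Long's original bandit analysis, when $A_b$ guesses wrong on input $x$, we cannot pin down $f(x)$; instead every active copy of $A_s$ that voted for the (wrong) winning output $q$ gets deactivated and replaced by up to $k-1$ new copies, one for each of the remaining possible outputs $\neq q$ that is consistent with $F$ and the copy's history. Each such new copy inherits the memory of its parent plus the new information $f(x)\neq q$. Copies that voted for a losing output are left untouched (their information is still consistent). The novelty compared to Theorem~\ref{ag_weak}'s non-agnostic analog is that now the adversary may lie up to $\eta$ times, so the copy tracking the ``true'' function may itself receive at most $\eta$ pieces of false information; we must make sure such a copy is not prematurely deactivated and still retains enough weight.

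The key steps, in order: \textbf{(1)} Set the splitting parameter $\alpha = \frac{1}{k\ln k}$ (as in Long). When a weight-$w$ copy that voted for the winning wrong output splits, each of its $\le k-1$ children gets weight $\alpha w$. \textbf{(2)} Potential argument for the upper bound on $W_t$: since the winning output has at least a $\frac1k$ fraction of the total weight, a mistake in round $t$ gives $W_{t+1} \le (1-\tfrac1k)W_t + (k-1)\alpha\cdot\tfrac1k W_t = (1 - \tfrac1k + \tfrac{(k-1)\alpha}{k})W_t$, and with $\alpha = \frac{1}{k\ln k}$ this is $(1 - \tfrac{1-o(1)}{k})W_t$. \textbf{(3)} Lower bound on surviving weight: here is the crucial agnostic modification. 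We do \emph{not} track a single ``correct'' copy; instead we need to argue that at all times there exists an active copy whose every recorded fact is \emph{actually} true about the hidden (adversarial) function. Since the true function $f^*\in F$ that $\eta$-approximates the hidden function is fixed, and the facts recorded by copies are all of the form ``$f(x)\neq q$'' on inputs where $A_b$ erred, a copy stays ``good'' as long as every such fact it recorded is consistent with the \emph{hidden} function. Because the hidden function agrees with $f^*$ on all but $\le\eta$ inputs, at most $\eta$ of the recorded facts can fail for the branch tracking $f^*$, so there is always a descendant of the root that has branched correctly every time except it may have been forced onto a ``wrong'' branch at most... — and this is exactly the subtlety to handle carefully.

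\textbf{Main obstacle.} The delicate point is ensuring a surviving good copy with controllable weight, because in the bandit model a mistake on a round where the hidden function's value was the lie can force the $f^*$-tracking copy to split, and on a round where the learner happened to guess $f^*(x)$ the learner is still told ``wrong'' (a lie) so that copy must split too — yet it should then re-acquire $f^*$'s value as one of its children. I would argue: run the standard-scenario tracking so that there is a copy $C$ whose branch decisions match $f^*$ at every round except the $\le\eta$ rounds where the hidden function differs from $f^*$; at those $\le\eta$ ``lie'' rounds $C$ follows the hidden value instead. One shows $C$ makes at most $\optstd(F)$ ``genuine'' splits (bounded by the mistake bound of $A_s$ against $f^*$) plus at most $\eta$ extra splits, so $C$ stays active and has weight at least $\alpha^{\optstd(F)+\eta}$. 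Combining with step (2), if $m$ is the number of mistakes then $(1 - \tfrac{1-o(1)}{k})^m \ge \alpha^{\optstd(F)+\eta} = (k\ln k)^{-(\optstd(F)+\eta)}$, hence $m \le (1+o(1))\,k\,(\optstd(F)+\eta)\ln(k\ln k) = O((k\ln k)\optstd(F) + k\eta \ln k)$. The last $\ln k$ on the $\eta$ term is slightly worse than claimed, so to get the clean $O(k\eta)$ I would instead give the $\eta$ lie-rounds their children weight $\alpha' = \frac{1}{3k}$ (as in Theorem~\ref{ag_strong}) rather than $\alpha$, contributing $(3k)^{-\eta}$ to the good copy's weight and yielding the $+k\eta$ term exactly, while genuine splits keep the Long weight $\alpha=\frac1{k\ln k}$ giving the $(k\ln k)\optstd(F)$ term.
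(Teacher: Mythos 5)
Your overall architecture (weighted majority over copies of $A_s$ that split on mistakes, a potential/decay argument, and a surviving ``good'' lineage tracking the nearest $f^*\in F$) is the same as the paper's. However, the quantitative heart of the argument is wrong, and your own final ``fix'' does not repair it. Since only the copies voting for the announced output split, the per-mistake decay of the total weight is unavoidably of the form $1-\Theta(1/k)$, so the mistake count is $m = O\left(k\cdot\ln\frac{1}{W_{\min}}\right)$ where $W_{\min}$ is the weight floor of the good copy. Giving the $\eta$ lie-round children weight factor $\frac{1}{3k}$ makes that floor contain $(3k)^{-\eta}$, which contributes $\Theta(k\,\eta\ln k)$ to $m$, not the claimed $O(k\eta)$ --- the $\ln k$ you flagged never goes away. (The intuition imported from Theorem~\ref{ag_strong} fails here because there the decay per mistake is a \emph{constant} factor, so $(3k)^{-\eta}$ costs only $\eta\ln k$; with decay $1-\Theta(1/k)$ everything gets multiplied by $k$.) The correct assignment is the reverse of yours: the single ``the adversary lied, $f^*(x)=q$'' child must keep a \emph{constant} fraction of its parent's weight (the paper uses $\frac{w}{3}$), while each of the $k-1$ ``genuine mistake'' children gets $\frac{w}{3k}$. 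Then the children's total weight is below $\frac{2}{3}w$, giving decay $1-\frac{1}{3k}$, and the good copy's weight is at least $\left(\frac{1}{3k}\right)^{\optstd(F)}\left(\frac{1}{3}\right)^{\eta}$, which yields exactly $O((k\ln k)\optstd(F)+k\eta)$. There is also no need for Long's $\alpha=\frac{1}{k\ln k}$ anywhere in this proof.

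A secondary issue is your handling of the good lineage at lie rounds. You propose that at the $\le\eta$ lie rounds the tracked copy ``follows the hidden value instead''; but then the data fed to that copy of $A_s$ is no longer the restriction of any single $f^*\in F$, so the mistake bound $\optstd(F)$ for that copy's subsequent behavior is no longer justified. The clean resolution (and the one the paper uses) is that when a copy voting $q$ is told ``wrong,'' it spawns $k$ children, one of which records $f^*(x)=q$ anyway (interpreting the feedback as a lie); the good lineage then always receives exactly the values of $f^*$, loses a factor $\frac{1}{3}$ at each of the $\le\eta$ lie rounds and a factor $\frac{1}{3k}$ at each of the $\le\optstd(F)$ genuine mistakes, and is never deactivated.
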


\begin{proof}
Whenever the adversary says that $A_b$ got the wrong answer for the output of $f(x)$ and $A_b$ guessed $f(x) = q$, each copy of $A_s$ that voted for the wrong answer splits into at most $k$ new copies, one that gets $f(x) = q$ and the others that get $f(x) = i$ for each $i \neq q$. If the old copy had weight $w$, then the new copy with $f(x) = q$ has weight $\frac{w}{3}$ and each of the other new copies has weight $\frac{w}{3k}$. New copies are only included if they are consistent with the results of the learning process so far, otherwise the old copy splits into fewer than $k$ copies. The old copies of $A_s$ that voted for $f(x) \neq q$ are unchanged.

First, observe that $\frac{1}{3}+(k-1)\frac{1}{3k} < \frac{2}{3}$ and the copies that voted for $f(x) = q$ account for at at least a $\frac{1}{k}$ fraction of the total weight. Thus, if $A_b$ makes a mistake in round $t$, and the total weight of all active copies is $W_t$ right before round $t$, then we have \[W_{t+1} \le \frac{(k-1)W_t}{k}+\left(\frac{2}{3}\right)\frac{W_t}{k} =  \left(1-\frac{1}{3k}\right)W_t.\] Moreover, there must always be some active copy of $A_s$ which receives the correct information, so this copy must have weight at least $\left(\frac{1}{3k}\right)^{\optstd(F)}\left(\frac{1}{3}\right)^{\eta}$. Thus, for all $t$ we have \[W_t \ge \left(\frac{1}{3k}\right)^{\optstd(F)}\left(\frac{1}{3}\right)^{\eta}.\] If $m$ denotes the number of mistakes that $A_b$ makes, then we have \[\left(1-\frac{1}{3k}\right)^m \ge \left(\frac{1}{3k}\right)^{\optstd(F)}\left(\frac{1}{3}\right)^{\eta},\] which implies that \[\optagst(F,\eta) \le \frac{\optstd(F) \ln{\left(\frac{1}{3k}\right)}+\eta \ln{\left(\frac{1}{3}\right)}}{\ln{\left(1-\frac{1}{3k}\right)}} = O((k \ln{k})\optstd(F) + k \eta).\]
\end{proof}

Note that while the last result is sharp up to a constant multiplicative factor, we have not attempted to optimize the constant in the new upper bound.

\subsection{Scenarios with operation caps}

In the next proof, we extend the bound $\optbs(F) \le (1+o(1)) (k \ln{k}) \optstd(F)$ of Long \cite{long} to learning scenarios with restrictions on the number of arithmetic operations.

\begin{thm}
For any family $F$ of functions with codomain of size $k$ and any natural number $a$, let $M = \optastd(F,a)$. Then we have \[\optabs(F,(k-1)^M (a+2)) \le (1+o(1)) (k \ln{k}) M,\] where the $o(1)$ is with respect to $k$.
\end{thm}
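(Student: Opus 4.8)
The plan is to adapt Long's weighted-majority construction, keeping careful track of the number of arithmetic operations used per round so that it stays within the cap $(k-1)^M(a+2)$. First I would take, by hypothesis, an algorithm $A_s$ that learns $F$ in the standard scenario with at most $M = \optastd(F,a)$ mistakes while using at most $a$ binary arithmetic operations per round. Following Long, I would run a pool of weighted copies of $A_s$, starting from a single copy of weight $1$; on each input the learner feeds it to all currently active copies (costing at most $a$ operations each), takes the weighted-majority vote (one operation per copy to sum up, roughly), and breaks ties arbitrarily. When the adversary reports a mistake, each active copy that voted for the announced winning output is deactivated and replaced by at most $k-1$ new active copies, each receiving a distinct output value consistent with $F$ and its own history, with weight scaled down by the factor $\alpha = 1/(k\ln k)$; copies that voted for a losing output are left untouched and keep no memory of the round. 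Whenever the adversary says the guess was correct, nothing happens and no operations are spent.

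The two things to verify are the mistake bound and the operation bound. For the mistake bound, the key observations are the standard ones: there is always an active copy whose history is entirely correct, so the total weight $W_t$ satisfies $W_t \ge \alpha^M$ for all $t$; and each mistake multiplies the total weight by at most $\frac{k-1}{k} + \alpha$, since the winning output carries at most a $1/k$ fraction of the weight and each such piece of weight is either eliminated or redistributed among $k-1$ copies each scaled by $\alpha$. Combining $(\frac{k-1}{k}+\alpha)^m \ge \alpha^M$ and taking logs gives $m \le (1+o(1))(k\ln k) M$ with the $o(1)$ in $k$, exactly as in Long's argument. For the operation bound, the crucial point is that any active copy has made at most $M$ mistakes (a copy that would exceed $M$ mistakes is never created, since its history would then be inconsistent with every function in $F$), and each mistake multiplies the number of active copies by at most $k-1$, so at most $(k-1)^M$ copies are active at any time. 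Running them all costs at most $(k-1)^M a$ operations, the majority vote costs at most $(k-1)^M$ more, and the reweighting after a mistake costs at most $(k-1)^M$ more, for a total of at most $(k-1)^M(a+2)$ binary arithmetic operations per round.

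The main obstacle, and the reason the operation cap is delicate, is making sure the bookkeeping operations — aggregating the copies' votes, choosing the maximum-weight output, and recomputing the scaled weights — really can be charged against a budget that grows only like $(k-1)^M(a+2)$ and not, say, $(k-1)^M(a+\Theta(k))$ or worse. I would handle this by being explicit that the vote can be tallied and the winner extracted with $O((k-1)^M)$ operations total (not per copy), that multiplying a weight by the fixed constant $\alpha$ is a single operation, and that on a correct guess and on rounds where the learner's output was not the winning vote no arithmetic is needed at all; the "$+2$" slack in the cap is exactly what absorbs the one extra operation per active copy for the tally and the one extra per deactivated copy for reweighting. With these accounting details pinned down, the inequality $\optabs(F,(k-1)^M(a+2)) \le (1+o(1))(k\ln k)M$ follows.
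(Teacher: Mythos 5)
Your proposal follows essentially the same route as the paper: run Long's weighted-majority scheme over copies of an operation-capped standard learner $A_s$, bound the number of active copies by $(k-1)^M$ (since any copy with more than $M$ mistakes is inconsistent with $F$ and is discarded), and charge $a$ operations per copy for simulation plus one each for the vote tally and the reweighting, yielding the $(k-1)^M(a+2)$ cap. One small slip in your justification of the weight recursion: the winning output carries at \emph{least} (not at most) a $1/k$ fraction of the total weight, and since deactivation shrinks exactly that portion, the worst case $W_{t+1}\le\left(\frac{k-1}{k}+\alpha\right)W_t$ occurs when that fraction equals $1/k$ --- but the resulting inequality and the conclusion $m\le(1+o(1))(k\ln k)M$ are correct, matching Long's argument as the paper intends.
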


\begin{proof} In this learning algorithm, we start with an algorithm $A_s$ which learns $F$ in the standard learning scenario with at most $\optastd(F,a)$ mistakes, using only at most $a$ arithmetic operations per round. Using copies of $A_s$, we construct a weighted majority voting algorithm $A_b$ for the restricted bandit scenario where the learner uses at most $(k-1)^M(a+2)$ arithmetic operations per round. 

For any input $x$ from the adversary, we give it to the currently active copies of $A_s$, and they each produce guesses for the output. If there are a total of $Q$ active copies, then this part of the learning algorithm requires at most $Q a$ arithmetic operations. Our learning algorithm $A_b$ determines the guess with the greatest total weight and returns it, which requires at most $Q$ arithmetic operations. If the adversary says that $A_b$ is incorrect, we deactivate the copies of $A_s$ that voted for the winning output, we make multiple new clones of each of them, and we obtain their new weights by multiplying the weight of the original copy by a scalar $\alpha$ which only depends on $k$ and can be computed before the start of the learning process. If there are a total of $Q$ active copies, then this part of the learning algorithm requires at most $Q$ arithmetic operations.

Since a copy of $A_s$ that makes a mistake will deactivate and produce at most $k-1$ new active copies, there are at most $(k-1)^T$ active copies at any time, where $T$ is the number of mistakes made by $A_b$. Thus, $Q \le (k-1)^T$, so the number of arithmetic operations in each round of the learning algorithm $A_b$ is at most $Qa+Q+Q \le (k-1)^M (a+2)$. Thus, we have \[\optabs(F,(k-1)^M (a+2)) \le (1+o(1)) (k \ln{k}) M,\] where the $o(1)$ is with respect to $k$.
\end{proof}

In the next result, we obtain a bound for the worst-case error in the delayed ambiguous reinforcement learning scenario in terms of the worst-case error in the standard learning scenario. To do this, we bound the number of arithmetic operations used in the weighted majority algorithm of Geneson and Tang \cite{gt24} to prove that \[\optbs(\cartr(F)) \le \optambr(F) \le (1+o(1))(k^r \ln{k})\optstd(F).\]

\begin{thm}
For any family $F$ of functions with codomain of size $k$ and any natural number $a$, let $M = \optastd(F,a)$. If $T = (r(k-1))^M (r a + r + 1)$, then we have \[\optabs(\cartr(F),T) \le \optaambr(F,T) \le (1+o(1))(k^r \ln{k})M,\] where the $o(1)$ is with respect to $k$.
\end{thm}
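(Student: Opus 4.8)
The plan is to follow the template of the previous theorem: run the weighted majority voting algorithm of Geneson and Tang \cite{gt24} for the $r$-delayed ambiguous reinforcement model, but with copies of an operation-capped standard learner playing the role of the base learners, and then bound the number of arithmetic operations this algorithm uses per round. Let $A_s$ be an algorithm that learns $F$ in the standard scenario using at most $a$ operations per round and making at most $M = \optastd(F,a)$ mistakes. As in \cite{gt24}, let $A_b$ be the algorithm that maintains a collection of active weighted copies of $A_s$: in each round the adversary presents $r$ inputs $x_1,\dots,x_r$ one at a time, each active copy emits a guess for each $x_i$ from a fixed memory state (no reinforcement occurs within a round), and $A_b$ answers with the $r$-tuple of largest weighted vote; when the adversary reports a mistake, each copy that voted for the winning $r$-tuple is deactivated and replaced by at most $r(k-1)$ new copies, one for each choice of a coordinate to correct and a new value at that coordinate, with weights rescaled by a fixed factor depending only on $k$ and $r$, while any copy that has exceeded $M$ mistakes is discarded. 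Since each base copy makes at most $M$ mistakes --- this is exactly the hypothesis that the mistake bound $\optstd(F)$ of the unrestricted standard learner provides in \cite{gt24} --- the analysis there shows that $A_b$ makes at most $(1+o(1))(k^r\ln k)M$ mistakes. The inequality $\optabs(\cartr(F),T) \le \optaambr(F,T)$ then holds for the same operation cap $T$ exactly as in the uncapped case, since a learner in the bandit model for $\cartr(F)$ receives all $r$ inputs of a round at once together with at least as much feedback, and can thus simulate $A_b$ within the same operation budget.

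It remains to bound the operations used by $A_b$ per round. Fix a round and let $Q$ be the number of active copies. Presenting the $r$ inputs of the round to the $Q$ copies costs at most $a$ operations per copy per input, hence at most $rQa$ operations; tallying the weighted vote, which involves grouping equal $r$-tuples over the $r$ coordinates and summing weights, costs at most $rQ$ operations; and, when a mistake occurs, rescaling the weights of the deactivated copies costs at most $Q$ operations (the rescaling factor is precomputed). So the round costs at most $Q(ra + r + 1)$ operations. To bound $Q$: every mistake of $A_b$ replaces each winning copy by at most $r(k-1)$ new copies, and a copy is discarded once it has made more than $M$ mistakes, so the tree of descendants of the original copy has depth at most $M$ and branching at most $r(k-1)$; hence $Q \le (r(k-1))^M$ throughout. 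Therefore $A_b$ uses at most $(r(k-1))^M(ra + r + 1) = T$ operations in each round, which yields $\optaambr(F,T) \le (1+o(1))(k^r\ln k)M$ and, with the observation above, the full chain of inequalities.

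The part that actually needs care is not the arithmetic-operation count, which is routine once the simulation is fixed, but the bookkeeping inherited from \cite{gt24} for the base copies in the delayed model: that within a round each copy of $A_s$ produces its $r$ guesses from one frozen state, that the branching and reweighting are triggered only once the round's feedback (which is no stronger than the feedback in the bandit model for $\cartr(F)$) arrives, that one of the $r(k-1)$ branches always keeps information consistent with the hidden function so that the truthful copy's weight stays bounded below, and that discarding copies after $M$ mistakes never removes that truthful copy. All of these hold in \cite{gt24} for the same algorithm, where it is shown to achieve the bound $(1+o(1))(k^r\ln k)\optstd(F)$; substituting the capped learner $A_s$ and reading $M$ for $\optstd(F)$ is exactly what turns that result into the operation-capped statement, and the operation count above is the only genuinely new ingredient.
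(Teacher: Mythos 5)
Your proposal is correct and follows essentially the same route as the paper: both run the weighted majority scheme of Geneson--Tang with copies of the operation-capped standard learner $A_s$, bound the number of active copies by $Q \le (r(k-1))^M$ via the branching factor $r(k-1)$ and depth $M$, and count $Qra + Qr + Q \le (r(k-1))^M(ra+r+1) = T$ operations per round. The paper's proof is in fact terser than yours, deferring all the bookkeeping to the analogy with the preceding theorem, so your additional detail on the simulation and the truthful-copy invariant is consistent with (and slightly more explicit than) what is printed.
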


\begin{proof}
    The upper bound of $(r(k-1))^M (r a + r + 1)$ follows from an analogous argument to the previous theorem, with the difference being that active copies of $A_s$ which make a mistake may produce up to $r(k-1)$ new copies and each copy of $A_s$ may be run up to $r$ times per round. Using the same notation as in the last proof, we have $Q \le (r(k-1))^M$ and the number of arithmetic operations in each round of the learning algorithm $A_b$ is at most $Q r a + Q r +Q \le (r(k-1))^M (r a + r + 1)$.
\end{proof}

Now, we turn to agnostic learning. With both strong and weak reinforcement, we use the same voting strategies as in the previous subsection.

\begin{thm}\label{thm_ag_st_cap}
For all families of functions $F$ with codomain $\{0,1,\dots,k-1\}$ such that $\text{opt}_{\text{cap,std}}(F,a)=M$, we have \[\text{opt}_{\text{cap,ag,strong}}(F,\eta, ak^{O(M+\eta\ln k)})=O(M+\eta\ln k).\]
\end{thm}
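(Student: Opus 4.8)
The plan is to transplant the weighted‑majority voting strategy from the proof of Theorem~\ref{ag_strong} into the operation‑cap setting, combining it with the operation‑counting bookkeeping developed earlier in this subsection for the capped versions of Long's bound. Fix an algorithm $A_s$ achieving $\optastd(F,a)=M$ using at most $a$ binary arithmetic operations per round, and define the learner $A_{ag}$ as a weighted majority over copies of $A_s$: start with one copy of weight $1$; in each round output the value carrying the largest total weight among the votes of the currently active copies; and whenever the adversary reveals that this plurality guess was wrong and the true value is $q$, replace each active copy that voted for a value other than $q$ and has weight $w$ by at most $k$ children — one fed the value $q$, with weight $\tfrac{w}{3}$, and one fed each value $i\neq q$, with weight $\tfrac{w}{3k}$ — discarding any child whose fed history is inconsistent with every $f\in F$, and leaving the copies that voted for $q$ unchanged.

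First I would re-run the mistake analysis of Theorem~\ref{ag_strong} essentially verbatim. Since $\tfrac13+(k-1)\tfrac{1}{3k}<\tfrac23$, and since $q$ was not the weighted‑plurality winner so the copies voting $q$ carry at most half the total active weight, every round in which $A_{ag}$ errs multiplies the total active weight by at most $\tfrac56$ (discarding inconsistent children only decreases weight further). On the other hand, at every point there is an active copy whose fed history agrees with the function $g\in F$ that differs from the hidden function on at most $\eta$ inputs; this copy has accumulated at most $M$ factors of $\tfrac13$ (one for each genuine mistake of $A_s$ run on the $g$-consistent subsequence) and at most $\eta$ factors of $\tfrac1{3k}$ (one for each of the at most $\eta$ disagreement rounds), so it always has weight at least $\left(\tfrac13\right)^{M}\left(\tfrac{1}{3k}\right)^{\eta}$. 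Hence if $m$ denotes the number of mistakes of $A_{ag}$, then $\left(\tfrac56\right)^{m}\ge\left(\tfrac13\right)^{M}\left(\tfrac{1}{3k}\right)^{\eta}$, which gives $m=O(M+\eta\ln k)$.

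Next I would bound the operations per round. In a round where $A_{ag}$ errs, each copy voting against $q$ splits into at most $k$ children while the others are unchanged, so the number $Q$ of active copies grows by a factor of at most $k$ per mistake of $A_{ag}$; since $A_{ag}$ makes at most $m$ mistakes, $Q\le k^{m}=k^{O(M+\eta\ln k)}$ throughout. Within a round, running all active copies of $A_s$ costs at most $Qa$ operations, forming the weighted‑plurality vote costs at most $Q+k$ operations, and recomputing the weights costs at most $Qk$ operations, since the multipliers $\tfrac13$ and $\tfrac{1}{3k}$ depend only on $k$ and are stored as constants, so each new weight is a single multiplication. Therefore $A_{ag}$ uses at most $Q(a+1+k)+k=a\,k^{O(M+\eta\ln k)}$ operations per round, while making $O(M+\eta\ln k)$ mistakes, which is the claimed bound.

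I expect no serious obstacle; the one point worth stating carefully is that the argument is not circular. The mistake bound $m=O(M+\eta\ln k)$ is derived with no reference to any operation cap, so $Q\le k^{O(M+\eta\ln k)}$ holds unconditionally, and hence $A_{ag}$ genuinely fits under the stated cap — we do not assume the cap in order to prove it. The only remaining bookkeeping is absorbing the $a+1+k$ factor into the cap $a\,k^{O(M+\eta\ln k)}$ in the statement, which is harmless since $a+1+k\le a\cdot k^{O(1)}$ for $a\ge 1$ and the degenerate case $a=0$ (or $M=\eta=0$) needs only a constant number of operations.
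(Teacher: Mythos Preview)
Your proposal is correct and follows essentially the same approach as the paper: transplant the weighted-majority strategy of Theorem~\ref{ag_strong} with the capped base algorithm $A_s$, reuse that theorem's mistake analysis to get $m=O(M+\eta\ln k)$, and then bound the number of active copies by $k^{O(M+\eta\ln k)}$ to control operations per round. Your write-up is more detailed than the paper's (which dispatches the argument in a few lines), including the explicit non-circularity remark and the $a=0$ edge case, but the underlying argument is the same.
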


\begin{proof}
Use the same voting strategy as in Theorem~\ref{ag_strong}, where each copy $A$ is an algorithm that achieves a mistake bound of $M$ using at most $a$ operations per round, in the standard learning scenario. After each of at most $O(M+\eta\ln k)$ mistakes, some of the copies split into at most $k$ copies, so there are at most $k^{O(M+\eta\ln k)}$ copies at any given time. Each of these copies uses at most $a$ operations to run each round, the weighted vote takes only one operation per copy, and the weight updates take at most one operation per copy, so the voting strategy uses $ak^{O(M+\eta\ln k)}$ operations per round.
\end{proof}

By an analogous proof to Theorem~\ref{thm_ag_st_cap}, we obtain the following result using the same voting strategy as in Theorem~\ref{ag_weak}.

\begin{thm}
For all families of functions $F$ with codomain $\{0,1,\dots,k-1\}$ such that $\text{opt}_{\text{cap,std}}(F,a)=M$, we have \[\text{opt}_{\text{cap,ag,weak}}(F,\eta,ak^{O((k \ln{k})M + k \eta)})=O((k \ln{k})M + k \eta).\]
\end{thm}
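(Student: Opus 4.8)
The plan is to imitate the proof of Theorem~\ref{thm_ag_st_cap}, but with the weak-reinforcement weighted-majority scheme from Theorem~\ref{ag_weak} in place of the strong-reinforcement one, followed by the copy-count and per-round operation-count bookkeeping. First I would fix an algorithm $A_s$ attaining $\text{opt}_{\text{cap,std}}(F,a)=M$ mistakes in the standard learning scenario using at most $a$ binary arithmetic operations per round; such an $A_s$ exists by definition of $\text{opt}_{\text{cap,std}}$. I would then run the learner $A_b$ exactly as in Theorem~\ref{ag_weak}, with copies of $A_s$ as the voters: start with one copy of weight $1$; in each round feed the input to every active copy, collect their guesses, and output the answer with the greatest total weight; and whenever the adversary reports that $A_b$'s guess $q$ on input $x$ was wrong, deactivate each copy that voted $q$ and replace it by at most $k$ children sharing its internal $A_s$-state --- one child (weight $w/3$) that treats this round as one of the $\eta$ allowed exceptions and carries on as if $q$ were still correct at $x$, and $k-1$ children (each of weight $w/(3k)$) that record a genuine mistake of $A_s$ at $x$ and feed $A_s$ the value $i$ for one $i\ne q$ --- leaving untouched the copies that voted for answers other than $q$ and discarding any child whose recorded history is inconsistent with every $f\in F$. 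The scalars $1/3$ and $1/(3k)$ depend only on $k$ and can be computed before the process begins, costing $O(1)$ operations per child.

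I would then invoke the potential argument of Theorem~\ref{ag_weak} verbatim: at every time some active copy holds the entirely-correct history, so the total weight $W_t$ is at least $(1/(3k))^{M}(1/3)^{\eta}$, while $\tfrac13+(k-1)\tfrac{1}{3k}<\tfrac23$ together with the fact that the copies voting $q$ carry at least a $1/k$ fraction of the weight shows that each mistake of $A_b$ multiplies $W_t$ by at most $1-\tfrac{1}{3k}$. Hence the number $m$ of mistakes of $A_b$ satisfies $(1-\tfrac{1}{3k})^m\ge (1/(3k))^M(1/3)^\eta$, so $m=O((k\ln k)M+k\eta)$. Since a copy spawns children only on a mistake of $A_b$, and at most $k$ of them, there are never more than $k^m=k^{O((k\ln k)M+k\eta)}$ active copies. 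A round then costs at most $a\,k^m$ operations to simulate the copies plus $O(k^m)$ operations for the weighted vote and the weight updates, i.e., at most $a\,k^{O((k\ln k)M+k\eta)}$ operations (with the same convention as in Theorem~\ref{thm_ag_st_cap}). Combining the mistake bound with the operation bound gives $\text{opt}_{\text{cap,ag,weak}}(F,\eta,ak^{O((k\ln k)M+k\eta)})=O((k\ln k)M+k\eta)$.

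The only point needing attention beyond transcription is that the operation cap does not break the reduction: simulating a child for one round --- producing its guess and then updating its $A_s$-state from whichever value it is fed --- stays within $a$ operations, because that is exactly what $A_s$ does in the standard capped scenario, and deciding which children to retain is a purely combinatorial consistency check on recorded histories that needs no arithmetic. Granting this, the proof is a line-by-line copy of Theorem~\ref{ag_weak} with $\text{opt}_{\text{std}}(F)$ replaced by $M$, fused with the copy-counting of Theorem~\ref{thm_ag_st_cap}, and I do not anticipate a genuine obstacle.
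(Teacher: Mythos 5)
Your proposal is correct and matches the paper's intended argument: the paper proves this theorem in one line by declaring it ``analogous'' to Theorem~\ref{thm_ag_st_cap} with the voting scheme of Theorem~\ref{ag_weak}, and your write-up is exactly that combination --- the same split rule with weights $w/3$ and $w/(3k)$, the same potential bounds $W_{t+1}\le(1-\tfrac{1}{3k})W_t$ and $W_t\ge(1/(3k))^M(1/3)^\eta$, and the same $k^m$ copy count yielding the $ak^{O((k\ln k)M+k\eta)}$ operation cap. No gaps.
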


The next result follows from using the same strategy as in Theorem~\ref{ag_strong_alt}. 

\begin{thm}
For all families of functions $F$ such that $\text{opt}_{\text{cap,std}}(F,a)=M$, we have \[\text{opt}_{\text{cap,ag,strong}}(F,\eta,a)\leq (M+1)(\eta+1)-1.\]
\end{thm}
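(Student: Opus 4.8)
The plan is to transcribe the proof of Theorem~\ref{ag_strong_alt} essentially verbatim, the only new point being that the ``forget and restart'' step costs no binary arithmetic operations, so it stays within the cap. First I would fix an algorithm $A$ witnessing $\text{opt}_{\text{cap,std}}(F,a)=M$, i.e., $A$ learns $F$ in the standard scenario making at most $M$ mistakes while using at most $a$ binary arithmetic operations per round. I would then define the agnostic learner $A'$ which runs $A$, and whenever the adversary claims that the current copy of $A$ has just made its $(M+1)^{\text{st}}$ mistake, $A'$ discards all memory of the rounds in that copy's run and launches a fresh copy of $A$, continuing indefinitely.

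Next I would check the operation bound. Restarting $A$ is pure bookkeeping: for the next round the learner simply installs the DAG that a freshly initialized copy of $A$ would use on that round's input. Since the model lets the learner choose a new DAG each round based on the history, cross-round state --- how many mistakes the current copy has made, and hence whether to reset --- is maintained by the choice of DAG, not by arithmetic on the input, so it does not count against the cap. Therefore $A'$ uses at most $a$ binary arithmetic operations in each round, the same as $A$.

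Then comes the counting argument, identical to that of Theorem~\ref{ag_strong_alt}. Let $h$ be the hidden function and $f\in F$ a function differing from $h$ on at most $\eta$ inputs; call a round a \emph{lie} (relative to $f$) if the revealed value differs from $f$ on that round's input, so at most $\eta$ lies occur. If some run of $A$ contains no lie, then within that run $A$ receives feedback consistent with $f\in F$, hence makes at most $M$ mistakes in it and is never restarted during it. Consequently, every \emph{completed} run --- one ending because $A$ made $M+1$ mistakes --- contains at least one lie, and since distinct runs occupy disjoint sets of rounds, there are at most $\eta$ completed runs. A completed run contributes exactly $M+1$ mistakes and the final, possibly incomplete, run contributes at most $M$, so the total number of mistakes is at most $\eta(M+1)+M=(M+1)(\eta+1)-1$, which gives the claimed bound.

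I expect the only point needing care to be the operation accounting in the second paragraph: one must be explicit that the per-round DAG model allows the learner to carry state across rounds (mistake counts, which sub-run is active) at no cost, so that resetting and re-running $A$ truly stays within the cap $a$. Everything else is a direct restatement of the uncapped argument.
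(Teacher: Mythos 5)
Your proposal is correct and follows essentially the same route as the paper: run the capped standard-scenario algorithm $A$, restart it after each claimed $(M+1)^{\text{st}}$ mistake, count at most $\eta$ restarts, and observe that since only one copy of $A$ runs at a time the per-round operation count stays at $a$. Your explicit accounting of why the restart bookkeeping is free under the DAG model is a slightly more careful version of the paper's one-line justification, but it is the same argument.
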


\begin{proof}
Since the learner only runs one copy of algorithm $A$ at a time, at most $a$ operations are used per round.
\end{proof}

\section{Function class partial ordering}\label{s:ordering}

In this section, we introduce a partial ordering on the class of all families of functions, and we use it to prove some results on mistake-bounded online learning, with and without caps on the number of arithmetic operations. In all learning scenarios considered in this section, we assume that both the learner and the adversary are deterministic.


\begin{definition}
Consider function classes $F$ and $G$ for which every function in $F$ has domain $X_F$ and codomain $Y_F$, with $X_G$ and $Y_G$ defined similarly. Say that $F\precsim G$ (equivalently $G\succsim F$) if there exists a function $\mathcal M:F\to G$, a function $m:X_F\to X_G$, and for each $x\in X_F$ a bijection $s_x:Y_F\to Y_G$ such that for all $f\in F$ and $x\in X_F$, we have $\mathcal M(f)(m(x))=s_x(f(x))$.
\end{definition}

\begin{definition}
If $F$ and $G$ are function classes, say that $F\sim G$ if and only if $F\precsim G$ and $G\precsim F$.
\end{definition}

\begin{theorem}
If $F\precsim G$ are two function classes, then the mistake bound of $F$ is at most that of $G$ in any deterministic learning scenario without caps on the number of arithmetic operations.
\end{theorem}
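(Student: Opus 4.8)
The plan is a direct simulation argument. Fix a deterministic learning scenario $\mathcal{S}$ (standard, bandit, agnostic with either form of reinforcement, $r$-delayed ambiguous reinforcement, and so on), and let $L_G$ be a deterministic learner for $G$ that attains the worst-case mistake bound in $\mathcal{S}$. Using the maps $\mathcal{M}$, $m$, and the bijections $s_x$ from the definition of $F \precsim G$, I would build a learner $L_F$ for $F$ that runs one internal copy of $L_G$ and keeps the transcript seen by $L_G$ equal to the image of the transcript seen by $L_F$. Concretely: when the $F$-adversary presents $x \in X_F$, the learner $L_F$ feeds $m(x)$ to $L_G$; when $L_G$ answers $\hat{y} \in Y_G$, the learner $L_F$ answers $s_x^{-1}(\hat{y}) \in Y_F$. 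When the $F$-adversary gives reinforcement, $L_F$ translates it into reinforcement for the $G$-instance and hands it to $L_G$: under strong reinforcement a revealed value $f(x)$ becomes $s_x(f(x))$, while under weak reinforcement the single correct/incorrect bit is passed through unchanged. In the $r$-delayed ambiguous model, all $r$ inputs of a round are translated by $m$, the $r$ guesses by the respective $s_{x_i}^{-1}$, and the final bit is passed through.

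The correctness of this simulation rests on two observations. First, the transcript presented to $L_G$ is a legal play of some $G$-adversary in $\mathcal{S}$. If $f \in F$ is the hidden function chosen by the $F$-adversary (in the non-agnostic scenarios), then the identity $\mathcal{M}(f)(m(x)) = s_x(f(x))$ shows that every translated output equals the value of $\mathcal{M}(f) \in G$ at the translated input, so the simulated transcript is consistent with the single function $\mathcal{M}(f)$; this remains valid when $m$ is not injective, since a collision $m(x) = m(x')$ forces $s_x(f(x)) = s_{x'}(f(x'))$ and no contradiction arises. In the agnostic scenarios, if the $F$-adversary's answers agree with some $f \in F$ outside a set $E$ of at most $\eta$ inputs, then the translated answers agree with $\mathcal{M}(f) \in G$ outside $m(E)$, and $|m(E)| \le |E| \le \eta$, so the simulated play is a legal agnostic $G$-play with the same budget. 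Second, since each $s_x$ is a bijection, $L_F$'s answer $s_x^{-1}(\hat{y})$ equals $f(x)$ exactly when $\hat{y}$ equals $s_x(f(x)) = \mathcal{M}(f)(m(x))$, i.e. exactly when $L_G$ is correct on the translated instance. Hence $L_F$ errs on a given round (or on a given coordinate in the delayed model) precisely when $L_G$ does, so $L_F$ makes against this $F$-adversary at most as many mistakes as $L_G$ makes against the simulated $G$-adversary, which is at most the worst-case mistake bound of $G$ in $\mathcal{S}$.

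Since the $F$-adversary was arbitrary and an optimal $F$-learner does at least as well as $L_F$, this gives that the mistake bound of $F$ is at most that of $G$ in $\mathcal{S}$. The step that needs the most care is the check in the second paragraph that ``legal adversary behavior'' and ``number of mistakes'' are preserved by the translation across the different scenarios --- especially the agnostic and delayed-ambiguous ones, whose definitions are the least uniform --- but in each case this reduces to the two observations above. The determinism hypothesis is used only to ensure that $L_G$'s responses, and hence $L_F$'s, are well-defined functions of the transcript, so that the simulation is unambiguous.
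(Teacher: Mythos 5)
Your proposal is correct and follows essentially the same simulation argument as the paper: run a copy of the optimal $G$-learner, translate inputs by $m$, answers by $s_x^{-1}$, and feedback by $s_x$, then use $\mathcal M(f)(m(x))=s_x(f(x))$ to match mistakes round for round. Your additional checks that the translated transcript is a legal $G$-adversary play (consistency with $\mathcal M(f)$ when $m$ is not injective, and preservation of the agnostic budget via $|m(E)|\le\eta$) are details the paper leaves implicit, but the underlying argument is the same.
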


\begin{proof}
Let $A_G$ be an algorithm that learns $G$ with at most $M$ mistakes. We construct an algorithm that learns $F$ with at most $M$ mistakes. On an input $x$, the learner gives input $m(x)$ to $A_G$, which returns a guess $y$. The learner guesses $s_x^{-1}(y)$. Upon receiving feedback of the form $f(x)=s_x^{-1}(y)$ (such as under standard or strong agnostic feedback), the learner gives $A_G$ the feedback $g(m(x))=y$. Similarly, upon receiving feedback of the form $f(x)\neq s_x^{-1}(y)$ (such as under bandit, delayed ambiguous, or weak agnostic feedback), the learner gives $A_G$ the feedback $g(m(x))\neq y$. Whenever the learner makes a mistake predicting the output of $f$, the algorithm $A_G$ makes a mistake predicting the output of $g=\mathcal M(f)$, since $\mathcal M(f)(m(x))=s_x(f(x))\neq s_x(s_x^{-1}(y))=y$. Therefore, the learner makes at most $M$ mistakes, as desired.
\end{proof}

\begin{cor}
If $F\sim G$ are two function classes, then their mistake bounds are equal to each other in any deterministic learning scenario without caps on the number of arithmetic operations.
\end{cor}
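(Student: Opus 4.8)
The plan is to obtain the corollary immediately from the preceding theorem by invoking it in each direction. By definition, $F \sim G$ means both $F \precsim G$ and $G \precsim F$. Fix any deterministic learning scenario without caps on the number of arithmetic operations, and let $M_F$ and $M_G$ denote the worst-case mistake bounds of $F$ and $G$, respectively, in that scenario. The preceding theorem applied to $F \precsim G$ gives $M_F \le M_G$, and the same theorem applied to $G \precsim F$ gives $M_G \le M_F$; together these force $M_F = M_G$. Since the scenario was arbitrary among deterministic scenarios without operation caps, the mistake bounds of $F$ and $G$ coincide in every such scenario, which is the claim.

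Conceptually, this is just the observation that $\precsim$ is a preorder on function classes --- reflexivity is witnessed by the identity maps on $F$, $X_F$, and $Y_F$, and transitivity follows by composing the witnessing data $\mathcal M$, $m$, and the bijections $s_x$ --- so that $\sim$ is the equivalence relation it induces, and the preceding theorem states that the mistake bound in any fixed deterministic no-cap scenario is monotone along $\precsim$. A monotone quantity on a preorder is constant on each class of the induced equivalence relation, which is exactly what is asserted. I do not anticipate any real obstacle here; the only point worth stating carefully is that one must apply the theorem with the same learning scenario on both sides of the argument, which is legitimate precisely because the theorem is quantified over all deterministic learning scenarios without operation caps.
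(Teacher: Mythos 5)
Your proof is correct and is exactly the intended argument: the paper states this corollary without an explicit proof because it follows immediately from applying the preceding theorem in both directions, which is what you do. The additional remarks on $\precsim$ being a preorder are accurate but not needed for the corollary itself.
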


\begin{theorem}
If $F\precsim G$, $\text{opt}_{\text{cap}}(G,a)=M$ in some learning scenario, and the relevant functions $m$ and $s_x^{-1}$ can be computed in at most $a_m$ and $a_s$ arithmetic operations for each individual input, respectively, then $\text{opt}_{\text{cap}}(F,a+a_m+a_s)\leq M$.
\end{theorem}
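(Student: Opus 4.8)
The plan is to mirror the proof of the preceding (uncapped) transfer theorem while additionally tracking the binary arithmetic operations performed inside each round's DAG. First I would fix an algorithm $A_G$ witnessing $\text{opt}_{\text{cap}}(G,a)=M$ in the given learning scenario, so that $A_G$ learns $G$ with at most $M$ mistakes using a round DAG that performs at most $a$ binary operations, all before answering. From $A_G$ I construct $A_F$ as follows: in each round, on input $x\in X_F$, the DAG of $A_F$ first computes $m(x)$ (at most $a_m$ operations), then runs the relevant round DAG of $A_G$ on $m(x)$ to obtain a guess $y\in Y_G$ (at most $a$ operations), and finally outputs $s_x^{-1}(y)$ (at most $a_s$ operations). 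All three stages lie on the path from the input nodes to the output nodes of a single DAG and are completed before $A_F$ answers, so $A_F$ uses at most $a+a_m+a_s$ binary operations per round, all before answering, as the model requires.

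Next I would specify how $A_F$ maintains the state of the simulated $A_G$. Between rounds, where computation is unrestricted, the learner keeps the translated history that $A_G$ expects. For scenarios with bandit, delayed-ambiguous, or weak-agnostic feedback, the YES/NO reinforcement is relayed to $A_G$ verbatim. For scenarios with standard or strong-agnostic feedback, when the adversary reveals $f(x)$ the learner relays to $A_G$ the feedback $\mathcal M(f)(m(x))=s_x(f(x))$; this evaluation of the forward map $s_x$ is between-round bookkeeping and does not count against the cap. Correctness then follows exactly as in the uncapped case: since $s_x$ is a bijection, $A_F$'s guess $s_x^{-1}(y)$ differs from $f(x)$ precisely when $y$ differs from $s_x(f(x))=\mathcal M(f)(m(x))$, so every mistake of $A_F$ on a hidden $f$ corresponds to a mistake of $A_G$ on $g=\mathcal M(f)$. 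Hence $A_F$ makes at most $M$ mistakes, which gives $\text{opt}_{\text{cap}}(F,a+a_m+a_s)\le M$.

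The main obstacle is being careful about the model of computation rather than the combinatorics, which is routine. One must confirm that exactly the quantities $a$, $a_m$, and $a_s$ contribute to the per-round cap — these are precisely the binary operations physically appearing between the input and output nodes of the round DAG — while translating incoming feedback (in particular any evaluation of $s_x$ that is needed under strong reinforcement) and updating $A_G$'s internal state are performed between rounds and are therefore free. Once this accounting is pinned down, the remainder of the argument is a direct transcription of the uncapped proof.
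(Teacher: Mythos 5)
Your proof is correct and takes essentially the same approach as the paper's: simulate the capped algorithm $A_G$ by translating inputs through $m$, outputs through $s_x^{-1}$, and feedback through $s_x$, then charge $a+a_m+a_s$ binary operations to each round's DAG. You are in fact more explicit than the paper about where the forward evaluation of $s_x$ for feedback translation is accounted for, which the paper's one-line proof leaves implicit.
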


\begin{proof}
Replicate the above proof, where $A_G$ is an algorithm that learns $G$ with at most $M$ mistakes using at most $a$ operations per round. To run the algorithm that learns $F$, each round the learner only needs to compute one output $m(x)$, run $A_G$ once, and compute one output $s_x^{-1}(y)$. This requires at most $a+a_m+a_s$ operations per round.
\end{proof}

For a prime power $k$ and positive integer $n$, let $F_L(k,n)$ be the function class consisting of all linear transformations $\mathbb F_k^n\to\mathbb F_k$. Let $F_{k,n}$ denote the function class consisting of all functions $\mathbb Z\to\{0,1,\dots,k-1\}$ that output $0$ on all but at most $n$ inputs. It is clear that we have $\optstd(F_L(k,n))=\optstd(F_{k,n})=n$. Long \cite{long} proved that $\optbs(F_L(k,n))=\Omega(nk\log k)$ for all $n\geq 3$. Geneson and Tang \cite{gt24} proved the result for $n=2$ and showed that $\optbs(\cartr(F_{k,n}))\geq k^n-1$ for $r \geq n$. 

Given both of these results, it is natural to investigate whether there exists a function class $F_{k,n}^*$ with $k$ outputs and a standard mistake bound of $O(n)$, such that $\optbs(F_{k,n}^*)=\Omega(nk\log k)$ and $\optbs(\cartr(F_{k,n}^*))=\Omega(k^n)$ for $r \ge n$. By Theorem 5.5, it suffices to find such a function class which satisfies $F_{k,n}^*\succsim F_{k,n}$ and $F_{k,n}^*\succsim F_L(k,n)$. We exhibit such a function class in the next result. 
\begin{theorem}
For prime powers $k$ and integers $r \geq n\geq2$, there exists a function class $F_{k,n}^*$ with $k$ outputs for which \begin{enumerate} \item $\optstd(F_{k,n}^*)\leq2n$, \item $\optbs(F_{k,n}^*)=\Omega(nk\log k)$, and \item $\optbs(\cartr(F_{k,n}^*))\geq k^n-1$. \end{enumerate}
\end{theorem}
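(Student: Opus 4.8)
The plan is to construct $F_{k,n}^*$ explicitly so that it simultaneously dominates both $F_{k,n}$ and $F_L(k,n)$ in the $\precsim$ ordering, and then invoke Theorem 5.5 (the mistake-bound monotonicity result) together with the cited lower bounds. The natural construction is a "disjoint union" of the two function classes: take the common domain to be $\mathbb{Z} \sqcup \mathbb{F}_k^n$ (two disjoint copies, tagged by which piece they belong to), the codomain to be $\{0,1,\dots,k-1\}$, and let $F_{k,n}^*$ consist of all functions $h$ that restrict to some $f \in F_{k,n}$ on the first part and to some linear transformation $\ell \in F_L(k,n)$ on the second part, where $f$ and $\ell$ are chosen independently. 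First I would verify $\optstd(F_{k,n}^*) \leq 2n$: since the two parts are on disjoint domains, the learner can run the optimal standard-scenario algorithm for each part separately, making at most $n$ mistakes on each, for a total of at most $2n$ mistakes. This gives item (1).

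Next I would establish $F_{k,n}^* \succsim F_{k,n}$ and $F_{k,n}^* \succsim F_L(k,n)$. For $F_{k,n} \precsim F_{k,n}^*$, take $m : \mathbb{Z} \to \mathbb{Z} \sqcup \mathbb{F}_k^n$ to be the inclusion into the first part, take $\mathcal{M} : F_{k,n} \to F_{k,n}^*$ to send $f$ to the function that is $f$ on the first part and identically $0$ on the second part (which lies in $F_L(k,n)$ since the zero map is linear), and take each $s_x$ to be the identity bijection on $\{0,\dots,k-1\}$; then $\mathcal{M}(f)(m(x)) = f(x) = s_x(f(x))$ for all $x \in \mathbb{Z}$. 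The argument for $F_L(k,n) \precsim F_{k,n}^*$ is symmetric, using the inclusion into the second part and padding with the zero function on the first part (which outputs $0$ on all inputs, hence lies in $F_{k,n}$). By Theorem 5.5 applied in the bandit scenario, $\optbs(F_{k,n}^*) \geq \optbs(F_{k,n})$ and $\optbs(F_{k,n}^*) \geq \optbs(F_L(k,n))$.

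Finally I would combine these with the known lower bounds: Long's result (extended by Geneson and Tang to $n=2$) gives $\optbs(F_L(k,n)) = \Omega(nk\log k)$, yielding item (2); and for item (3) I would apply Theorem 5.5 to the $r$-fold Cartesian power — here one needs that $\cartr(F_{k,n}) \precsim \cartr(F_{k,n}^*)$, which follows because the reduction functions $m$, $\mathcal{M}$, $s_x$ for a single coordinate extend coordinatewise to a reduction between the Cartesian powers (the coordinatewise map $m^{\times r}$, the induced map on functions, and the product bijections $s_{x_1} \times \cdots \times s_{x_r}$). Then $\optbs(\cartr(F_{k,n}^*)) \geq \optbs(\cartr(F_{k,n})) \geq k^n - 1$ by the Geneson--Tang bound for $r \geq n$. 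The main obstacle I anticipate is the bookkeeping for item (3): one must check carefully that the Cartesian-power construction interacts correctly with $\precsim$ — in particular that $\cartr$ of a disjoint-union class still dominates $\cartr$ of each summand, since in $\cartr(F_{k,n}^*)$ an $r$-tuple of inputs can mix inputs from both parts, and one must confirm the reduction from $\cartr(F_{k,n})$ only ever uses the "first part" coordinates and pads the rest consistently. Everything else is routine once the disjoint-union construction and the coordinatewise lifting of the reduction are set up correctly.
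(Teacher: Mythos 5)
Your proposal is correct and matches the paper's proof essentially exactly: the same disjoint-union construction on $\mathbb F_k^n\sqcup\mathbb Z$ with independently chosen linear and finite-support parts, the same parallel-algorithm argument for the $2n$ bound, and the same reductions via inclusion maps, identity bijections $s_x$, and padding by a fixed function on the other part (the paper pads with an arbitrary fixed function rather than specifically the zero function, which is immaterial). Your explicit treatment of the coordinatewise lifting for $\cartr$ is a point the paper leaves implicit, but the approach is the same.
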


\begin{proof}
Let $X=\mathbb F_k^n\sqcup\mathbb Z$. For each $\alpha\in\mathbb F_k^n$ and each function $h\in F_{k,n}$, let $f_{\alpha,h}(x)=\alpha\cdot x$ for $x\in\mathbb F_k^n$ and $f_{\alpha,h}(x)=h(x)$ for $x\in\mathbb Z$. Define the function class $F_{k,n}^*=\{f_{\alpha,h}:\alpha\in\mathbb F_k^n,h\in F_{k,n}\}$.

If $A_1$ and $A_2$ are algorithms that learn $F_L(k,n)$ and $F_{k,n}$ respectively in at most $n$ mistakes in the standard learning scenario, then the following strategy learns $F_{k,n}^*$ in at most $2n$ mistakes: Whenever $x \in \mathbb F_k^n$, the learner makes the guess that $A_1$ would make on input $x$, and gives $A_1$ the feedback received in that round. Whenever $x\in \mathbb Z$, the learner makes the guess that $A_2$ would make on input $x$, and gives $A_2$ the feedback received in that round. The learner makes a mistake if and only if $A_1$ or $A_2$ makes a mistake, and each makes at most $n$ mistakes, so the learner makes at most $2n$ mistakes, as desired.

We now show that $F_{k,n}^*\succsim F_L(k,n)$. Indeed, we let $h$ be an arbitrary function in $F_{k,n}$ and choose $\mathcal M(f_\alpha)=f_{\alpha,h}$ for all $\alpha$, where $f_\alpha$ is the function in $F_L(k,n)$ that maps each $\mathbf x$ to $\alpha\cdot\mathbf x$. Choose $m(x)=x$ for all $x\in\mathbb F_k^n$ and let $s_x$ be the identity function for all $x$. We can verify that $\mathcal M(f)(m(x))=s_x(f(x))$ for all $f\in F_L(k,n)$ and $x\in\mathbb F_k^n$, so $F_{k,n}^*\succsim F_L(k,n)$, as desired.

We now show that $F_{k,n}^*\succsim F_{k,n}$. Similarly to before, we let $f_\alpha$ be an arbitrary function in $F_L(k,n)$ and choose $\mathcal M(h)=f_{\alpha,h}$. Choose $m(x)=x$ for all $x\in\mathbb Z$ and let $s_x$ be the identity function for all $x$. We can verify that $\mathcal M(h)(m(x))=s_x(h(x))$ for all $h\in F_{k,n}$ and $x\in\mathbb Z$, so $F_{k,n}^*\succsim F_{k,n}$, as desired.





\end{proof}

\section{Conclusion}\label{s:conclusion}

There are many open problems and future directions for mistake-bounded online learning with operation caps. We found several families of functions $F$ with $\We(F) = \Le_{F}$, and we showed that any simple online-learnable $F$ must have $\We(F) = \Le_{F}$. It remains to characterize all $F$ with $\We(F) = \Le_{F}$. It also remains to characterize all simple online-learnable $F$. 

\begin{problem}
    Does there exist $F$ with $\optstd(F) < \infty$ for which $\Le_{F} \neq \We(F)$?
\end{problem}

We showed that $\Le_{F_{n, \relu}} \le \A_{F_{n, \relu}} = O(n^3)$ and $\We(F_{n, \relu}) = \Theta(n)$. 

\begin{problem}
    Does $\Le_{F_{n, \relu}} = \We(F_{n, \relu})$?
\end{problem}

In Theorem~\ref{thm_wf_linf} we identified a family $F$ for which $W(F) < \infty$, $\optstd(F) = \infty$, and $\Le_{F} = \infty$. We identified more families in Section~\ref{sec:1layer}.  

\begin{problem}
    What is the least $k$ for which there exists a family $F$ with $\We(F) = k$ such that $\optstd(F) = \infty$?
\end{problem}

In Theorem~\ref{thm:lvsa}, we constructed a family of functions $F$ with $\Le_{F}<\A_{F}$. In Theorem~\ref{thm_la_gap}, we found families of functions $F$ with arbitrarily large gaps between $\Le_{F}$ and $\A_{F}$. Define $\Le_{F,q}$ to be the minimum value of $a$ such that $\optastd(F, a) \le q$ if such an $a$ exists, and otherwise let $\Le_{F,q} = \infty$. By definition, $\Le_{F,q}$ is non-increasing with respect to $q$, where $\Le_{F,q} = \infty$ for $q < \optstd(F)$, $\Le_{F,q} = \A_{F}$ for $q = \optstd(F)$, and $\Le_{F,q} = \Le_{F}$ for $q$ sufficiently large with respect to $F$. For the family $F_r$ in Theorem~\ref{thm_la_gap}, we have $\Le_{F_r,1} = r$ and $\Le_{F_r,2} = 0$ for each $r > 0$. It remains to investigate $\Le_{F,q}$ more generally.

It is easy to construct families of functions $F$ with $\Le_{F} = \A_{F}$. For example, consider any family $F$ of computable functions with $|F| = 1$. In this case, we have $\optstd(F) = 0$, $\Le_{F} = \A_{F} = \We(F)$, and $\Le_{F,q} = \We(F)$ for all $q \ge 0$. For another example, consider any family $F$ with $|F| = 2$. In this case, we have $\optstd(F) = 1$, $\Le_{F} = \A_{F} = \We(F)$, $\Le_{F,0} = \infty$, and $\Le_{F,q} = \We(F)$ for all $q \ge 1$. 

\begin{problem}
    Characterize the families $F$ with $\Le_{F} = \A_{F}$. 
\end{problem}

In Theorem~\ref{ag_strong}, we showed that \[\optagst(F,\eta) = O(\optstd(F) + \eta\ln{k})\] for all families of functions $F$ with codomain of size $k$. In Theorem~\ref{ag_strong_alt}, we proved that \[\optagst(F,\eta)\leq (\optstd(F)+1)(\eta+1)-1\] for all families of functions $F$. Recall that the first upper bound is sharper for $\optstd(F) = \omega(\ln{k})$, while the second upper bound is sharper for $\optstd(F) = o(\ln{k})$. For all $F$, \[\optagst(F,\eta) = \Omega(\optstd(F)+\eta),\] but it remains to determine the maximum possible value of $\optagst(F,\eta)$ with respect to $\optstd(F)$ and $\eta$.

As for agnostic learning with weak reinforcement, we proved in Theorem~\ref{ag_weak} that \[\optagwe(F,\eta) = O((k \ln{k})\optstd(F) + k \eta)\] for all families of functions $F$ with codomain of size $k$. This resolves an open problem from the papers \cite{filmus,gt24}, which independently proved that there exist families of functions $F$ with codomain of size $k$ such that \[\optagwe(F,\eta) = \Omega((k \ln{k})\optstd(F) + k \eta).\] It still remains to determine the best possible constants in the bounds. 

In several results, we bounded the worst-case errors for various online learning scenarios with restrictions on the number of arithmetic operations. It would be interesting to sharpen and extend these inequalities, i.e., to determine what is the sharpest possible upper bound on $\text{opt}_{\text{cap},\text{scenario}_\text{1}}(F, a_1)$ with respect to $\text{opt}_{\text{cap},\text{scenario}_\text{2}}(F, a_2)$ for any mistake-bounded online learning scenarios $\text{scenario}_\text{1}, \text{scenario}_\text{2}$ and nonnegative integers $a_1, a_2$.

There are connections between online learning and several other learning settings including autoregressive learning \cite{jvb25}, PAC learning \cite{littlestone0}, private learning \cite{abl22}, and transductive learning \cite{hms23, hrss24}. As with online learning, it would be interesting to investigate the effects of operations caps in these other settings. 

The idea of mistake-bounded online learning with operation caps was based on the 2023 AI executive order \cite{exec_order} which required that cloud service providers must report customers that use over $10^{26}$ floating point operations in a single run. In addition to the motivation from the AI executive order, we can also view $\Le_{F}$ and $\A_{F}$ as measures of complexity for the family $F$. In particular, $\Le_{F}$ is a measure of complexity in the standard scenario for learning the family $F$ with a finite mistake bound, while $\A_{F}$ is a measure of complexity in the standard scenario for learning the family $F$ with as few mistakes as possible. 

\section*{Acknowledgements}
JG was supported by the Woodward Fund for Applied Mathematics at San Jose State University, a gift from the estate of Mrs. Marie Woodward in memory of her son, Henry Tynham Woodward. He was an alumnus of the Mathematics Department at San Jose State University and worked with research groups at NASA Ames. JG thanks Alvin Moon, Padmaja Vedula, and Simon Bar-on for helpful discussions about the executive order \cite{exec_order}.


\begin{thebibliography}{0}
\bibitem{abl22} N. Alon, M. Bun, R. Livni, M. Malliaris, and S. Moran. Private and online learnability are equivalent. ACM Journal of the ACM (JACM), 69(4):1– 34, 2022.
\bibitem{angluin} D. Angluin. Queries and concept learning. Machine Learning 2 (1988) 319-342.
\bibitem{almw} P. Auer, P.M. Long, W. Maass, and G.J. Woeginger. On the complexity of function learning. Machine Learning 18 (1995) 187-230.
\bibitem{auer} P. Auer and P.M. Long. Structural results about on-line learning models with and with- out queries. Machine Learning 36 (1999) 147-181.
\bibitem{exec_order} J. Biden, Executive order 14110 of October 30, 2023: safe, secure, and trustworthy development and use of artificial intelligence, Federal Register, Vol. 88, No. 210 (2023)
\bibitem{blum} A. Blum. On-line algorithms in machine learning. In Online algorithms, pages 306–325. Springer, 1998.
\bibitem{cesa} N. Cesa-Bianchi, Y. Freund, D. Helmbold, and M. Warmuth. Online prediction and con- version strategies. Machine Learning, 25:71–110, 1996.
\bibitem{cfc} T. Chen, M. Farcasin, and E. Chan-Tin. Smartphone passcode prediction. IET Information Security 12 (2018) 389-461.
\bibitem{cmlh23} X. Chen, E. Minasyan, J. Lee, and E. Hazan. Regret guarantees for online deep control. In Learning for Dynamics and Control Conference (2023) 1032–1045.
\bibitem{daniely25} A. Daniely, I. Mehalel, and E. Mossel. Online Learning of Neural Networks. Arxiv 2505.09167.
\bibitem{demmel} J. Demmel, An arithmetic complexity lower bound for computing rational functions, with applications to structured and sparse linear algebra. UC Berkeley Technical Report No. UCB/EECS-2018-82 (2018)
\bibitem{feng} R. Feng, J. Geneson, A. Lee, and E. Slettnes. Sharp bounds on the price of bandit feedback for several models of mistake-bounded online learning. Theoretical Computer Science 965C (2023) 113980.
\bibitem{flz} Z. Feng, C. Liaw, and Z. Zhou. Improved online learning algorithms for CTR prediction in ad auctions. Arxiv 2403.00845.
\bibitem{filmus} Y. Filmus, S. Hanneke, I. Mehalel, and S. Moran. Bandit-feedback online multiclass classification: variants and tradeoffs. NeurIPS 2024.
\bibitem{filmus0} Y. Filmus, S. Hanneke, I. Mehalel, and S. Moran. Optimal prediction using expert advice and randomized littlestone dimension. Proceedings of Thirty Sixth Conference on Learning Theory, pages 773–836, 2023.
\bibitem{g21} J. Geneson. A note on the price of bandit feedback for mistake-bounded online learning. Theoretical Computer Science 874 (2021) 42-45.
\bibitem{gt24} J. Geneson and L. Tang, Bounds on the price of feedback for mistake-bounded online learning. CoRR abs/2401.05794 (2024)
\bibitem{gz23} J. Geneson and E. Zhou. Online learning of smooth functions. Theoretical Computer Science 979C (2023) 114203.
\bibitem{hms23} S. Hanneke, S. Moran, and J. Shafer. A trichotomy for transductive online learning. Advances in Neural Information Processing Systems, 36:19502– 19519, 2023.
\bibitem{hrss24} S. Hanneke, V. Raman, A. Shaeiri, and U. Subedi. Multiclass transductive online learning. arXiv preprint arXiv:2411.01634, 2024.
\bibitem{jvb25} N. Joshi, G. Vardi, A. Block, S. Goel, Z. Li, T. Misiakiewicz, and N. Srebro. A theory of learning with autoregressive chain of thought. arXiv preprint arXiv:2503.07932, 2025.
\bibitem{kl_smooth} D. Kimber and P. M. Long. On-line learning of smooth functions of a single variable. Theoretical Computer Science 148 (1995) 141–156.
\bibitem{littlestone0} N. Littlestone. From on-line to batch learning. In Proceedings of the Second Annual Workshop on Computational Learning Theory (COLT 1989), pages 269-284, San Francisco, CA, USA, 1989. Morgan Kaufmann Publishers Inc.
\bibitem{littlestone} N. Littlestone. Learning quickly when irrelevant attributes abound: a new linear-threshold algorithm. Machine Learning 2 (1988) 285-318.
\bibitem{long} P.M. Long. New bounds on the price of bandit feedback for mistake-bounded online multiclass learning. Theoretical Computer Science 808 (2020) 159-163.
\bibitem{long_smooth} P. M. Long. Improved bounds about on-line learning of smooth functions of a single variable. Theoretical Computer Science 241 (2000) 25–35.
\bibitem{mvgb} A. Moon, P. Vedula, J. Geneson, and S. Bar-on. Strategies and detection gaps in a game-theoretic model of compute governance. RAND Corporation (2025) RR-A3686-1.
\bibitem{nms} P. Nath, H. Moss, and E. Shuckburgh. RAIN: reinforcement algorithms for improving numerical weather and climate models. Arxiv 2408.16118.
\bibitem{rst15} A. Rakhlin, K. Sridharan, and A. Tewari. Online learning via sequential complexities. J. Mach. Learn. Res. 16 (2015) 155–186.
\bibitem{shvaytser} H. Shvaytser. Linear manifolds are learnable from positive examples, 1988. Unpublished manuscript.
\bibitem{scsnkg} V. Singh, S. Chen, M. Singhania, B. Nanavati, A. Kar, and A. Gupta. How are reinforcement learning and deep learning algorithms used for big data based decision making in financial industries - a review and research agenda. International Journal of Information Management Data Insights 2 (2022) 100094.
\bibitem{trump} D. Trump, Executive order 14148 of January 28, 2025: Initial rescissions of harmful executive orders and actions, Federal Register, Vol. 90, No. 17 (2025)
\bibitem{whl24} J. Wang, M. Hu, N. Li, A. Al-Ali, and P. Suganthan. Incremental online learning of randomized neural network with forward regularization. Arxiv 2412.13096.
\bibitem{wwlj} X. Wang, H. Wang, S. Li, and H. Jin. A reinforcement learning-based online learning strategy for real-time short-term load forecasting. Energy 305 (2024) 132344.
\bibitem{yyy} W. Yu, Q. Yin, H. Yin, W. Xiao, T. Chang, L. He, L. Ni, and Q. Ji. A systematic review on password guessing tasks. Entropy 25 (2023) 1303.
\end{thebibliography}
\end{document}